\theoremstyle{plain}
\newtheorem{theorem}{Theorem}[section]
\newtheorem{lemma}[theorem]{Lemma}
\theoremstyle{definition}
\theoremstyle{remark}
\newcommand{\red}[1]{\textcolor{red}{\textbf{#1}}}
\newcommand{\blue}[1]{\textcolor{blue}{\underline{#1}}}
\newcommand{\bb}{\hspace{-1mm} $\bullet$}
\icmltitlerunning{GNNRank: Learning Global Rankings from Pairwise Comparisons via Directed Graph Neural Networks}
\begin{document}

\twocolumn[
\icmltitle{GNNRank: Learning Global Rankings from Pairwise Comparisons via \\ Directed Graph Neural Networks}




\icmlsetsymbol{internAWS}{*}
\begin{icmlauthorlist}
\icmlauthor{Yixuan He}{UnivOx,internAWS}
\icmlauthor{Quan Gan}{Amazon}
\icmlauthor{David Wipf}{Amazon}
\icmlauthor{Gesine Reinert}{UnivOx,turing}
\icmlauthor{Junchi Yan}{SJTU,AILab}
\icmlauthor{Mihai Cucuringu}{UnivOx,turing}
\end{icmlauthorlist}

\icmlaffiliation{UnivOx}{Department of Statistics, University of Oxford, Oxford, United Kingdom}
\icmlaffiliation{turing}{The Alan Turing Institute, London, United Kingdom}
\icmlaffiliation{Amazon}{Amazon Web Services AI Shanghai Lablet, Shanghai, China}
\icmlaffiliation{SJTU}{Department of Computer Science and Engineering and MoE Key Lab of Artificial Intelligence, Shanghai Jiao Tong University, Shanghai, China}
\icmlaffiliation{AILab}{Shanghai AI Laboratory, Shanghai, China}

\icmlcorrespondingauthor{Mihai Cucuringu}{mihai.cucuringu@stats.ox.ac.uk}

\icmlkeywords{Machine Learning, ICML}

\vskip 0.3in
]



\printAffiliationsAndNotice{\textsuperscript{*} Work was partially done during an internship at Amazon and a visit to Shanghai Jiao Tong University.}
\begin{abstract}
Recovering global rankings from pairwise comparisons has wide applications from time synchronization to sports team ranking. Pairwise comparisons corresponding to matches in a competition can be construed as edges in a directed graph (digraph), whose nodes represent e.g. competitors with an unknown rank. In this paper, we introduce neural networks into the ranking recovery problem by proposing the so-called GNNRank, a trainable GNN-based framework with digraph embedding. Morever, 
new objectives are devised to encode ranking upsets/violations. 
The framework involves a ranking score estimation approach, and adds an inductive bias by unfolding the Fiedler vector computation of the graph constructed from a learnable similarity matrix. Experimental results on extensive data sets show that our methods attain competitive and often superior performance against baselines, as well as showing promising transfer ability. Codes and preprocessed data are at: \url{https://github.com/SherylHYX/GNNRank}.
\end{abstract}

\section{Introduction}
Recovering global rankings from pairwise comparisons reflecting relative latent strengths or scores is a fundamental problem in information retrieval \cite{schutze2008introduction,liu2011learning} and beyond. 
When analyzing large-scale data sets, one often seeks various forms of rankings (i.e.~orderings) of the data for the purpose of  identifying the most important entries, efficient computation of search \& sort operations, or for extracting the main features. There is a swarm of applications employing ranking techniques ranging from 
Amazon's Mechanical Turk system for crowdsourcing
\cite{raykar2011ranking}
to the movie recommendation system provided by Netflix  \cite{cremonesi2010performance}, and 
modeling outcomes of football matches \cite{SoccerChallenge__Machine_Learning_2019}.

A very rich literature on ranking traces back to
\cite{KendallSmith1940}, who studied
recovering the ranking of a set of players from pairwise comparisons reflecting a total ordering. The last decades have seen a flurry of methods for ranking from pairwise comparisons, mostly based on spectral methods leveraging the eigenvectors of suitably defined matrix operators built directly from the data, which will be detailed in the related work. In particular, SerialRank~\cite{fogel2014SerialRank} shows promising ability in ranking for a set of totally ordered items without ties, by introducing a specific inductive bias from the so-called \emph{seriation} problem \cite{atkins1998spectral}: a Fiedler vector of a certain similarity matrix could recover true rankings given enough pairwise comparisons in the noiseless setting. The \emph{Fiedler value} of a symmetric and nonnegative 
matrix (or a graph) is defined to be the second smallest eigenvalue of the combinatorial Laplacian of the matrix
\cite{fiedler1973algebraic}, and its corresponding eigenvector is called a \emph{Fiedler vector}. 
The Fiedler vector is able to encode useful information of a graph, such as bi-partitions of a graph \cite{fiedler1973algebraic} and the seriation problem \cite{atkins1998spectral}, and can be employed for partitioning hypergraphs \cite{chen2017fiedler}.

Moreover, there has been promising progress on  combinatorial optimization 
in machine learning~\cite{WANG2021100010, maurya2021graph},
especially graph neural networks (GNNs)~\cite{Zhou, Wu},
due to their potential in data exploration.
Compared with their great success in many combinatorial tasks, the capability of GNNs 
in ranking tasks is 
not well 
developed. The few existing works are restricted 
to specific settings e.g. top-$n$ personalized recommendations~\cite{WANG2021100010}, and approximating centrality measures \cite{maurya2021graph}. Another technical gap is the inability to learn a model by directly optimizing the ranking objective, which we aim to fill in our work.

We propose GNNRank, an end-to-end ranking framework compatible with existing GNN models e.g.~\cite{he2021digrac, tong2020digraph} that is able to learn directed graph (digraph) node embeddings. We devise differentiable objectives to encode ranking upsets/violations. Following the standard protocol in \cite{d2021ranking}, an upset of an edge is a ranking violation; the inferred relative ranking is in the opposite direction to what the original measurement indicates. GNNRank consists of a family of ranking score estimation techniques, and adds an inductive bias by unfolding the Fiedler vector computation of the graph 
constructed from a \textit{learnable} similarity matrix.

Our main contributions are as follows:
\begin{itemize}
    \item To the best of our knowledge, this is \emph{the first neural network framework} (specifically GNN) to recover global rankings from pairwise comparisons whereby a direct optimization of
the ranking objectives is enabled, without supervision. Our method
differs from the learning-free methods including SerialRank because we design two novel differentiable losses tailored to neural networks, while the metric $\mathcal{L}_{upset, naive}$ used in previous works is piecewise constant and only used for post-evaluation instead of training the algorithms. Along the way, we adapt the widely used $\mathcal{L}_{upset, naive}$ metric to a more fined-grained evaluation method. 
Thus the motivation is also different from existing works. 
\item For jointly solving global ranking and GNN training, we design and introduce an inductive bias as achieved by the proximal gradient steps to our neural network, as unfolded from the Fiedler vector calculation, as an effective way of encoding latent orderings. The technique of constructing the $Q$ matrix and the transformation of the optimization problem for differentiable computation of Fiedler eigenvectors as part of a deep learning model is nontrivial and, to the best of our knowledge, novel.
\item Our methods empirically attain competitive, and often superior,  accuracy compared to state-of-the-art methods, 
and its cost-effectiveness is especially pronounced when the trained GNN model is transferred to new datasets for ranking recovery. Compared to some 
existing methods e.g. Minimum Violation Rank (MVR) \cite{MVR_1986} with expensive optimizations, our methods have 
better asymptotic time complexity. 
\item From a computational perspective, when the data may have a temporal dimension or when there are 
similarities across different data sets, one can apply an already trained model 
to new data sets that are similar to the one the model has been trained on. 
\item We provide theoretical convergence guarantees for our method, with a technically novel proof, which in our view is a considerable advantage compared to other, ad-hoc, neural solvers.
\end{itemize}

\section{Related Work} \label{sec:relatedWork}
\textbf{Ranking Methods}
One of the most popular models in the ranking literature is the Bradley-Terry-Luce (BTL) model \cite{bradley1952rank}, \cite{luce1959individual}. 
In its basic version, the probability that player $i$ beats player $j$ is given by $P_{ij} = \frac{w_i}{w_i + w_j}$, where the parameter vector $w \in \mathbb{R}_{+}^n$
is estimated from the 
data, and $w_i$ is a proxy for the strength of player $i$.  \cite{firth2005bradley} 
facilitates the specification and fitting of Bradley-Terry logit models to pairwise comparisons.  

Employing the stationary distribution of a suitably defined Markov chain for the ranking task traces its roots in early work on the topic of \textit{network centrality}. Such measures have been designed to quantify the extent to which nodes of the graph (or other network structures) are most important \cite{booknewman}; see for example
\cite{Pageetal98}, \cite{bonacich1987power}, and \cite{negahban2017rank}.

David's Score \cite{david1987ranking} computes rankings from proportions of wins and losses. 
Minimum Violation Rank (MVR) \cite{MVR_1986} encompasses a suite of methods that aim to directly minimize certain penalty functions at the level of each upset.
In our experiments, we compare against the algorithm of  \cite{gupte2011finding}, that considers a linear relaxation of an  integer program that  minimizes a so-called agony loss. 
However, MVR is computationally expensive as also will be shown in our experiments.

SyncRank \cite{cucuringu2016sync} formulates the ranking problem with incomplete noisy pairwise information as an instance of the group synchronization problem over the group SO(2) of planar rotations  \cite{sync}, which has attracted significant attention in recent years.  
The SpringRank algorithm of \cite{de2018physical}  borrows intuition from statistical physics, and proposes to infer hierarchical rankings in directed networks by solving a linear system of equations. 
\cite{d2021ranking} introduces simple algorithms for ranking and synchronization based on singular value decompositions, with theoretical guarantees. SerialRank \cite{fogel2014SerialRank} 
first computes the Laplacian from a certain similarity matrix $\mathbf{S'}$. The corresponding Fiedler vector 
of $\mathbf{S'}$
then serves as the final ranking estimate.
The intuition is that the more similar two players are  (in terms of the pattern of incoming/outgoing edges), the more similar their ranking should be; indeed, if two players defeat, and are defeated, by the same set of other players, then they are likely to have a similar ranking/strength. In this classical ordering problem (called \textit{seriation} \cite{atkins1998spectral}), one is given a similarity matrix between a set of items and assumes that the items can be ordered along a chain such that the similarity between items decreases with their distance  in the chain.

Apart from the above classical learning-free optimizers for ranking problems, there also exist learning-based (mostly neural) models for similar tasks. The early work \cite{scarselli2005graph} applies a certain form of GNN to ranking web pages, while it requires label supervision for ground-truth ranks for training. While the authors in \cite{damke2021ranking} propose the family of so-called RankGNNs, their competitors are graphs. 
\cite{rigutini2011sortnet} applies a neural network approach for preference learning, 
\cite{koppel2019pairwise} generalizes 
\cite{burges2005learning}, but these methods require queries as input, which solve a different problem from ours.
\cite{maurya2021graph} proposes the first GNN-based model to approximate betweenness and closeness centrality, facilitating locating influential nodes in the graphs in terms of information spread and connectivity. The pairwise direction is rarely considered in these works but it
is important for the problem studied in this paper. Thus, 
our task differs fundamentally
from the (abundant) learning-to-rank literature.

\textbf{Directed Graph Neural Networks}
Directed GNNs are useful in learning digraph node embeddings. 
\cite{tong2020digraph} builds aggregators based on higher-order proximity.
\cite{zhang2021magnet} constructs a complex Hermitian Laplacian matrix. 
\cite{he2021digrac} introduces imbalance objectives for digraph clustering. In GNNRank, existing digraph neural networks can be readily incorporated.

\textbf{Unfolding Techiques}
Algorithm unfolding \cite{gregor2010learning} was first introduced to unfold the iterations as a cascade of layers while adding learnable parameters. The unfolding idea has later been applied to problems such as semantic segmentation \cite{liu2017deep} and efficient power allocation \cite{chowdhury2021unfolding}. The work~\cite{yang2021graph} discusses a new family of GNN layers designed to mimic and integrate the update rules of classical iterative algorithms. The unfolding idea inspires us to add a useful inductive bias from the calculation of the Fiedler vector via our bi-level optimization pipeline \cite{talbi2013taxonomy}. Most importantly, unfolding allows us to pass gradients through the optimization of a useful function.
\begin{figure*}[tb!]
\centering
\includegraphics[width=\linewidth]{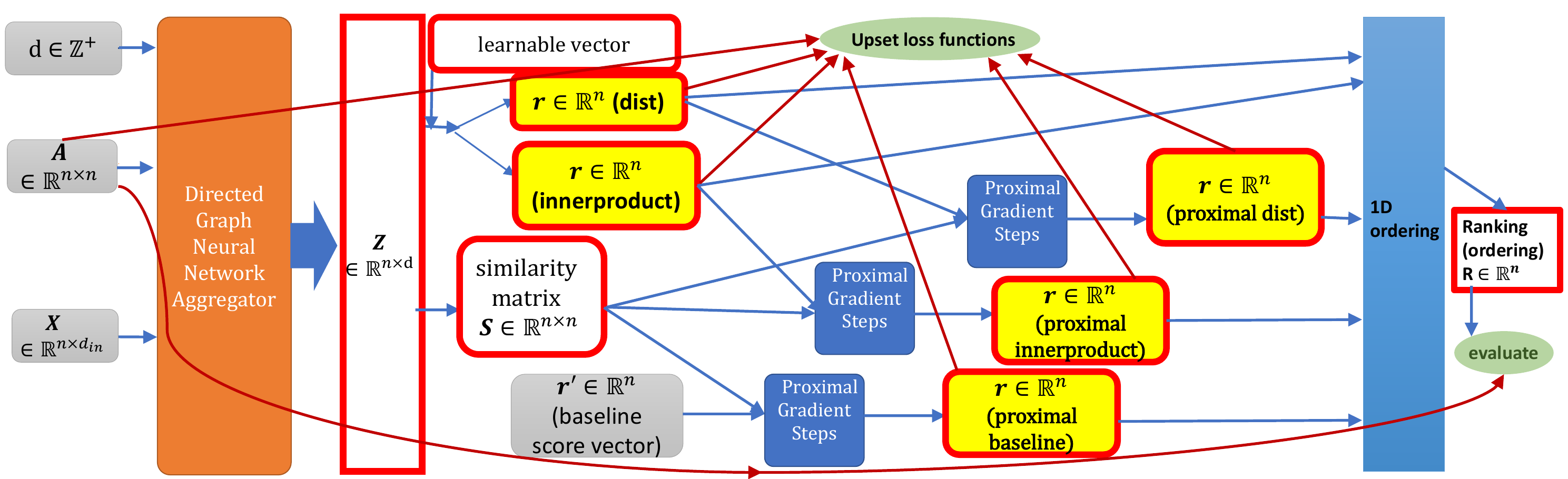}
\vspace{-10pt}
\caption{Overview of GNNRank based on directed graph neural networks and the proximal gradient steps corresponding to Algo.~\ref{algo:framework}: starting from an adjacency matrix $\mathbf{A}$ which encodes pairwise comparisons, input feature matrix $\mathbf{X}$ and embedding dimension $d,$ GNNRank first applies a directed graph neural network model to learn node embeddings $\mathbf{Z}$ for each competitor (node). Then it calculates the inner product or the similarity score with respect to a learnable vector to produce non-proximal outcomes for ranking scores (``innerproduct" or ``dist"). Proximal  variants start from a similarity matrix constructed from the learnable embeddings $\mathbf{Z}$, then utilize proximal gradient steps to output ranking scores. Depending on the initial guess score vector $\mathbf{r}'$, the proximal variants have names ``proximal innerproduct", ``proximal dist" or ``proximal baseline". Ordering the scores in the score vector $\mathbf{r}$ induces the final ranking/ordering vector $\mathbf{R}\in\mathbb{R}^n.$ The loss function is applied to a variant's output score vector $\mathbf{r}$, given the  input adjacency matrix $\mathbf{A},$ while the final evaluation is based on $\mathbf{R}$ and $\mathbf{A}.$ Red frames indicate trainable tensors/vectors/matrices. Grey squares correspond to fixed inputs.}
    \label{fig:framework_overview}
\end{figure*}
\section{Approach}
\subsection{Problem Definition}
Without loss of generality, we consider pairwise comparisons in a competition, which can be encoded in a directed graph (digraph) $\mathcal{G}=(\mathcal{V}, \mathcal{E})$, where the node set $\mathcal{V}$ denotes competitors, and the edge set $\mathcal{E}$ represents pairwise comparisons. The outcomes of the matches can be captured by the adjacency matrix $\mathbf{A}$. 
A single edge $e\in\mathcal{E}$ from node $v_i$ to node $v_j$, with edge weight $\mathbf{A}_{i,j}\geq 0$ that is not reciprocal, denotes that node $v_i$ is stronger than node $v_j$ by $\mathbf{A}_{i,j}$
For a reciprocal edge, $\mathbf{A}_{i,j}$ and $\mathbf{A}_{j,i}$ could be different; 
they could denote the (sum of) match scores for both competitors in matches between them, or the sum of absolute wins across different matches between them, where $\mathbf{A}_{i,j}$ is the sum of absolute wins for $v_i.$
Recovering global rankings from pairwise comparisons amounts to assigning an integer $R_i$ to each node $v_i\in\mathcal{V}$, denoting its position among competitors, where the lower the rank, the stronger the node is. To this end, many existing methods (including our proposed methods) first learn a real-valued ranking score $r_i$ for $v_i$, where the higher the score, the stronger the node is. The scores are then ordered to provide the final integer ranking.

\subsection{Motivation and Connection to SerialRank}
\label{subsec:overview}
The whole GNNRank framework, described in Algo.\,\ref{algo:framework}, uses steps from Algo.\,\ref{algo:proximal} for the lower-level optimization within the whole bi-level optimization pipeline; 
a diagram is provided in Fig.~\ref{fig:framework_overview}. Details are provided later in this section.

Indeed, we highlight an intrinsic connection to SerialRank \cite{fogel2014SerialRank}, which operates by  first computing the Laplacian from a certain similarity matrix $\mathbf{S}'=\frac{1}{2}(n\mathbf{1}\mathbf{1}^\top + \mathbf{C}\mathbf{C}^\top)$, where $\mathbf{C}$ is the binary comparison matrix with $\mathbf{C}_{i,j}=\text{sign}(\mathbf{A}_{i,j}-\mathbf{A}_{j,i})$,
and $\mathbf{A}$ the
digraph adjacency matrix. The corresponding Fiedler vector of $\mathbf{S}'$ 
then serves as the final ranking estimate, after a global sign reconciliation. While often effective in practice, SerialRank is heavily dependent on the quality of the underlying similarity matrix $\mathbf{S}'$.

To address this issue, we introduce a parameterized GNN model that allows us to compute trainable measures of similarity that are useful for subsequent ranking. However, for training purposes, we of course need to somehow back-propagate gradients through the computation of a Fiedler vector to update the GNN parameters.  Because it is generally difficult to directly pass gradients through eigenvector computations, we instead express the Fiedler vector as the solution of a constrained optimization problem.  We then approximate the solution of this problem using proximal gradient steps, each of which are themselves differentiable with respect to the underlying optimization variables, and ultimately by the chain rule, the GNN parameters. Note that Algo. \ref{algo:proximal} can be viewed as a \emph{differentiable} function that inputs $\mathbf{r}'$ and $\mathbf{L}$, and outputs $\mathbf{r}$.  Therefore, the gradients can backpropagate \emph{through} Algo. \ref{algo:proximal} into 
$\mathbf{L}$ and $\mathbf{r}'$, hence the similarity matrix $\mathbf{S}$ and the rest of the model parameters.

In broader contexts, this process is sometimes referred to as \textit{unfolded optimization}  \cite{gregor2010learning}, and is applicable in situations whereby a high-level loss function (in our case a ranking loss) is defined with respect to the minimization of some lower-level, parameter-dependent optimization problem (e.g., Fiedler vector computation) that has been unfolded across differentiable iterations/updates. Within our proposed framework, this process allows to combine the inductive bias of Fiedler-vector-based rankings with the flexibility of GNNs for modeling relations between entities. Also, SerialRank can be viewed as a special case of GNNRank;
$\mathbf{S}'_{i,j}$ counts the number of competitors that beat both $v_i$ and $v_j,$ plus that of competitors beaten by both $v_i$ and $v_j$, minus that of competitors beating one of $v_i$ and $v_j$ but
beaten by the other, plus half the number of nodes. 
Information from common 
neighbors could be aggregated by directed GNNs, and a 
kernel such as a Gaussian RBF kernel applied to the digraph embeddings could approximate $\mathbf{S}'$.

\subsection{Loss Functions and Objectives}

Define the skew-symmetric 
matrix $\mathbf{M'} = \mathbf{A} - \mathbf{A}^\top,$ and let $t$ be the number of nonzero elements in $\mathbf{M'}$. For
a vector $\mathbf{r}=[r_1, \dots,r_n]^\top$ with \emph{real-valued} ranking scores (often viewed as skill levels)  as entries, a naive upset
is defined as the fraction of relationships disagreeing with their expected sign, in the spirit of \cite{d2021ranking}. Formally, let $\mathbf{1}$ 
be an all-one column vector, and define the matrix $\mathbf{T'}=\mathbf{r1}^\top-\mathbf{1r}^\top\in\mathbb{R}^{n\times n}.$ Then we have $\mathbf{T'}_{i,j} = r_i-r_j, \forall i,j\in\{1,\cdots,n\}$. To not penalize entries where the initial pairwise rankings are not available, 
we only compare $\mathbf{T'}$ with $\mathbf{M'}$ at locations where $\mathbf{M'}$ has nonzero entries. A naive upset loss is 
defined as
\begin{equation} 
\vspace{-3mm}
    \mathcal{L}_\text{upset, naive} = 
    {\sum_{i,j:\mathbf{M'}_{i,j}\neq 0}(\text{sign}(\mathbf{T'}_{i,j})\neq \text{sign}(\mathbf{M'}_{i,j}))}/t.
    \label{eq:loss_upset_naive}
\end{equation}

Being piecewise constant, 
$\mathcal{L}_\text{upset, naive}$ is not useful in gradient descent. To account for the difference in the scaling between the output scores and the input adjacency matrix, 
we define element-wise divisions  $\mathbf{M} = \frac{\mathbf{A} - \mathbf{A}^\top}{\mathbf{A} + \mathbf{A}^\top},$ and $\mathbf{T}=\frac{\mathbf{r1}^\top-\mathbf{1r}^\top}{\mathbf{r1}^\top+\mathbf{1r}^\top}\in\mathbb{R}^{n\times n}.$ Then $\mathbf{T}_{i,j} = \frac{r_i-r_j}{r_i+r_j}, \forall i,j\in\{1,\cdots,n\}.$ 
Similarly, we only compare $\mathbf{T}$ with $\mathbf{M}$ at locations where $\mathbf{M}$ has nonzero entries.  

With $\Tilde{\mathbf{T}}$ where $\Tilde{\mathbf{T}}_{i,j}=\mathbf{T}_{i,j}$ if $\mathbf{M}_{i,j}\neq 0$ and $\Tilde{\mathbf{T}}_{i,j}=0$ if $\mathbf{M}_{i,j}= 0$, the differentiable upset loss is then defined as
\vspace{-1.5mm}
\begin{equation}
\vspace{-1.5mm}
    \mathcal{L}_\text{upset, ratio} = 
    \left\lVert \Tilde{\mathbf{T}}-\mathbf{M} \right\rVert^2_F/t(\mathbf{M}),
    \label{eq:loss_upset}
\end{equation}
where the subscript $F$ means Frobenius norm, 
and $t(\mathbf{M})$ is the number of nonzero elements in $\mathbf{M}.$

Note that $\mathcal{L}_\text{upset, ratio}$ requires $r_i\geq 0$ for each $r_i$ to represent the skill level. Hence we use transformations e.g. $r_i \leftarrow \text{sigmoid}(r_i)$ for general $r_i\in\mathbb{R}$,  and $r_i \leftarrow \frac{r_i+1}{2}$ when we know $r_i\geq -1,$ e.g. when the score  $\mathbf{r}=(r_i)$ has unit norm. 

Another choice is the margin loss $\mathcal{L}_\text{upset, margin}$, with a tunable nonnegative parameter $\epsilon\geq 0$ (default 0.01), defined as 
\begin{equation}
    \mathcal{L}_\text{upset, margin} = \sum_{i, j}(\mathbf{M}_{i,j}+|\mathbf{M}_{i,j}|)\cdot \text{ReLU}(r_j - r_i + \epsilon)/t(\mathbf{M}), \label{eq:loss_margin}
\end{equation}
where
ReLU is the Rectified Linear Unit.

The training loss used is either $\mathcal{L}_\text{upset, ratio}$, $\mathcal{L}_\text{upset, margin}$ or their sum, where the choice is set as a hyperparameter.

\vspace{-3mm} 
\paragraph{Evaluation.}For evaluation, in addition to $\mathcal{L}_\text{upset, naive}$, we introduce another objective similar to $\mathcal{L}_\text{upset, naive}$ but penalizing predicted signs opposite to the actual signs more than predicted signs being zero, i.e. we distinguish predictions as ties or being opposite signs, which can also take integer rankings as input
\begin{equation}
\vspace{-1mm}
    \mathcal{L}_\text{upset, simple} =
    \left\lVert \text{sign}(\mathbf{T'})- \text{sign}(\mathbf{M'})) \right\rVert^2_F/t,
    \label{eq:loss_upset_simple}
\end{equation}
where the sign function acts element-wise, 
and $t$ is the number of nonzero elements in $\mathbf{M}'.$ However, although $\mathcal{L}_\text{upset, simple}$ distinguishes ties and opposite signs, this loss 
can easily be made equal to one by assigning the same score to all nodes (i.e., making $\mathbf{r}$ a constant vector which corresponds to all ties), which means that whenever we achieve a value larger than one, the model performs even worse than trivial guess.

When ground-truth rankings are available, we use the Kendall tau \cite{brandenburg2013comparing} values for evaluation, and select the model based on the lowest $\mathcal{L}_\text{upset, simple}$.
\subsection{Obtaining Directed Graph Embeddings}
For obtaining a directed graph embedding, any GNN method which can
take into account directionality and output node embeddings could be applied, e.g. DIMPA by \cite{he2021digrac}, the inception block model (IB) \cite{tong2020digraph}, and MagNet \cite{zhang2021magnet}. In our experiments, we employ DIMPA and IB, to aid in the ranking task.
Denoting the final node embedding by $\mathbf{Z}\in \mathbb{R}^{n \times d}$, the embedding vector $\mathbf{z}_i$ for a node $v_i$ is $\mathbf{z}_i=(\mathbf{Z})_{(i,:)} \in \mathbb{R}^{d}, $ the $i^\text{th}$ row of $\mathbf{Z}$. 

\subsection{Obtaining Final Scores and Rankings}
\label{subsec:proximal}

To obtain the final ranking score, we unfold the calculation of a Fiedler vector for the graph
constructed from our symmetric similarity matrix $\mathbf{S}$ with proximal gradient steps. 
\vspace{-6mm}
\paragraph{Obtaining the similarity matrix.}From the high-dimensional embedding matrix $\mathbf{Z},$ we calculate the symmetric similarity matrix $\mathbf{S}$ with $\mathbf{S}_{i,j}=\exp(-|\mathbf{z}_j-\mathbf{z}_i|_2^2/(\sigma^2d))$  
    where $\sigma\in\mathbb{R}$ is the same trainable parameter as in ``dist". 
    Denote by $\mathbf{D}$ the diagonal matrix with $\mathbf{D}_{i,i}=\sum_j\mathbf{S}_{i,j}$. 
    We consider the unnormalized Laplacian $\mathbf{L = D - S}$, and apply proximal gradient to approximate a Fiedler vector of $\mathbf{S}$,
    which then serves as $\mathbf{r}$.  
\vspace{-3mm}
\paragraph{Transformation of the Optimization Problem.}Computing a Fiedler vector of the similarity matrix $\mathbf{S}$
is equivalent to solving the optimization problem \cite{von2007tutorial}
\vspace{-1mm} 
\begin{align}
\label{eq:problem_original}
\min_\mathbf{r} \mathbf{r}^\top \mathbf{L} \mathbf{r} 
\text{ s.t.} &\quad \left\lVert \mathbf{r} \right\rVert_2^2 = 1, \quad \mathbf{r}^\top \mathbf{1} = 0, 
\end{align} 
\noindent  where $\mathbf{L}$ is the graph Laplacian matrix.
We observe that the constraints describe an intersection of a unit sphere and a hyperplane.  By rotating the problem and the constraints so that the hyperplane becomes cardinal, we can effectively fix one dimension to zero and solve
\vspace{-1mm}
\begin{equation}
   \min_\mathbf{y} \mathbf{y}^\top
   \mathbf{Q} \mathbf{L} \mathbf{Q}^\top 
   \mathbf{y} 
\text{ s.t.} \quad \left\lVert \mathbf{y} \right\rVert_2^2 = 1, y_1=0. 
\end{equation}
Here $\mathbf{Q}$ is an orthogonal matrix. 
We choose a
$\mathbf{Q}$ such that $\mathbf{Q} \mathbf{1} = \sqrt{n} \mathbf{e}_1$, where $\mathbf{e}_1 = \begin{bmatrix} 1 & 0 & \cdots & 0 \end{bmatrix}^\top$.
Let $\mathbf{y} = \mathbf{Q} \mathbf{r}$, and thus $\mathbf{r} = \mathbf{Q}^\top \mathbf{y}$. The problem  becomes
\vspace{-1mm}
\begin{align*}
&\min_\mathbf{y} \mathbf{y}^\top \mathbf{Q} \mathbf{L} \mathbf{Q}^\top \mathbf{y} \\
\text{s.t.} &\quad
\left\lVert \mathbf{r} \right\rVert_2^2=1
\qquad  \mathbf{r}^\top \mathbf{1}  
= \sqrt{n} \mathbf{y}^\top \mathbf{e}_1 = \sqrt{n} y_1 = 0. 
\end{align*}
Since we fix $y_1 = 0$, this is equivalent to
\begin{align}
\vspace{-3mm}
\label{eq:problem_reduced}
&\min_{\mathbf{y}' \in \mathbb{R}^{n-1}} \mathbf{y}'^\top \left[ \mathbf{Q} \mathbf{L} \mathbf{Q}^\top\right]_{2:n,2:n} \mathbf{y}' \quad 
\text{s.t.} \quad \left\lVert \mathbf{y}' \right\rVert_2^2 = 1,  
\end{align}
where $\left[\cdot\right]_{2:n,2:n}$ represents the matrix with its first row and first column removed. 
To illuminate this equivalence, since the constraint $\mathbf{r}^\top \mathbf{1}=0$ is equivalent to $y_1=0,$ we need to ensure that $y_1=0$ is maintained throughout. If we start with $\mathbf{y}\in\mathbb{R}^n$ where $y_1=0,$ and let $\mathbf{y}'=[y_2, \dots,y_n]^\top \in \mathbb{R}^{n-1}$ then \begin{align*}\vspace{-3mm}
&\mathbf{y}^\top \left[ \mathbf{Q} \mathbf{L} \mathbf{Q}^\top\right]_{1:n,1:n} \mathbf{y}=\sum_{1\leq i \leq n, 1\leq j \leq n}\left[ \mathbf{Q} \mathbf{L} \mathbf{Q}^\top\right]_{i,j}y_iy_j \\ = & \sum_{2\leq i \leq n, 2\leq j \leq n}\left[ \mathbf{Q} \mathbf{L} \mathbf{Q}^\top\right]_{i,j}y_iy_j
= \mathbf{y}'^\top \left[ \mathbf{Q} \mathbf{L} \mathbf{Q}^\top\right]_{2:n,2:n} \mathbf{y}'.
\end{align*} 
One possible $\mathbf{Q}$, with details of the construction given in Appendix~\ref{appendix_sec:find_Q_details}, is
the following upper Hessenberg matrix, 
which
can be efficiently precomputed
\begin{equation*}
\mathbf{Q}_{ij} = \begin{cases}
\sqrt{\frac{1}{n}} & i=1 \\
-\sqrt{\frac{n-i+1}{n-i+2}} & i\geq 2, j=i-1 \\
\sqrt{\frac{1}{(n-i+1)(n-i+2)}} & i\geq 2, j\geq i \\
0 & \text{otherwise.}
\end{cases}
\end{equation*}
\paragraph{Proximal Gradient Steps.}  
To enforce a zero in the first entry of the initial guess, we
zero-center the input score vector, then left multiply it by $\mathbf{Q}$, so this resulting vector has 0 in its first entry. We then remove the first entry of the resulting vector, and discard the first row and first column for the matrix $\mathbf{Q L Q^\top}$. The gradient of the objective $\mathbf{y^\top}\widetilde{\mathbf{L}}\mathbf{y}$ with respect to $\mathbf{y}$ is $(\widetilde{\mathbf{L}} + \widetilde{\mathbf{L}}^\top)\mathbf{y}=2\widetilde{\mathbf{L}}\mathbf{y}$ since $\widetilde{\mathbf{L}}$ is symmetric. We set the 
number of proximal steps as $\Gamma=5,$ and initial learning rates inside proximal gradient steps $\alpha_\gamma = 1, \gamma=1, \ldots, \Gamma$.
Define the spherical projection operation $\mathcal{P}_{\mathcal{S}^{n-1}}(\cdot):\mathbb{R}^n\rightarrow\mathbb{R}^n$ by $\mathcal{P}_{\mathcal{S}^{n-1}}(\mathbf{x})=\frac{\mathbf{x}}{\left\lVert \mathbf{x} \right\rVert_2}$ if $\mathbf{x}\neq \mathbf{0}$ and $\mathcal{P}_{\mathcal{S}^{n-1}}(\mathbf{0})=\mathbf{e}_1,$ where $\mathbf{0}=[0,\dots,0]^\top.$ Algo.\,\ref{algo:proximal} details the proximal gradient steps, with proximal operator $\mathcal{P}_{\mathcal{S}^{n-2}}$, which guarantees descent of the optimization objective in~\eqref{eq:problem_original} given suitable $\alpha$'s (see Thm.~\ref{thm:convergence}).
\begin{algorithm}[tb!]
\begin{small}
\caption{Proximal Gradient Steps}
\label{alg:algorithmprox}
\textbf{Input}: Initial score $\mathbf{r'}\in\mathbb{R}^n$, Laplacian $\mathbf{L}\in\mathbb{R}^{n\times n}$ and $\mathbf{Q}\in\mathbb{R}^{n\times n}$\\
\textbf{Parameter}: (Initial) learning rate set $\{\alpha_\gamma>0\}_{\gamma=1}^\Gamma$ that could either be fixed or trainable (default: trainable).\\
\textbf{Output}: Updated score vector $\mathbf{r}=[r_1, \dots, r_n]^\top$. 

\begin{algorithmic}[1] 
\STATE Let $\mathbf{y}= \mathbf{r'} - \sum_{i=1}^n r'_i/n$;
\STATE $\mathbf{y} \leftarrow \mathbf{Q}'\mathbf{y} \in \mathbb{R}^{n-1}$ ($\mathbf{Q}'$ is $\mathbf{Q}$ with the first row removed);
\STATE $\mathbf{y} \leftarrow \mathcal{P}_{\mathcal{S}^{n-1}}(\mathbf{y})$ to have unit 2-norm;
\STATE $ \widetilde{\mathbf{L}} \leftarrow \left[\mathbf{Q L Q}^\top\right]_{2:n,2:n}.$
\FOR{$\gamma < \Gamma$}
\STATE $\mathbf{y} \leftarrow \mathbf{y} - \alpha_\gamma (2\widetilde{\mathbf{L}})\mathbf{y}/n$; 
\STATE $\mathbf{y} \leftarrow \mathcal{P}_{\mathcal{S}^{n-2}}(\mathbf{y})$ to have unit 2-norm;
\STATE $\gamma \leftarrow  \gamma + 1.$
\ENDFOR
\STATE $\mathbf{y} \leftarrow \text{CONCAT}(0, \mathbf{y})\in\mathbb{R}^n$;
\STATE $\mathbf{r} = \mathbf{Q}^\top \mathbf{y}$
\STATE \textbf{return} $\mathbf{r}$;
\end{algorithmic}
\label{algo:proximal}
\end{small}
\end{algorithm}
\begin{algorithm}[tb!]
\begin{small}
\caption{GNNRank: Proposed Ranking Framework}
\label{alg:algorithm}
\textbf{Input}: Digraph adjacency matrix $\mathbf{A}\in\mathbb{R}^{n\times n}$, node feature matrix $\mathbf{X}$, variant name $var$, (optional) initial guess from baseline $\mathbf{r'}$.\\
\textbf{Parameter}: Learnable vector $\mathbf{a}\in\mathbb{R}^n, b\in\mathbb{R}, \sigma\in\mathbb{R}$, GNN parameters, parameters from Algo.\,\ref{algo:proximal}.\\
\textbf{Output}: Score vector $\mathbf{r}$.
\begin{algorithmic}[1] 
\STATE $\mathbf{Z}\in\mathbb{R}^{n\times d} = \text{GNN}(\mathbf{A},\mathbf{X})$;
\IF{$var\in$\{``dist", ``proximal dist"\}}
\STATE Compute $\mathbf{r}: r_i = \exp(-|\mathbf{a}-\mathbf{z}_i|_2^2/(\sigma^2d))$;
\ELSIF{$var\in$\{``innerproduct", ``proximal innerproduct"\}}
\STATE Compute $\mathbf{r}: r_i = \text{sigmoid}(z_i\cdot \mathbf{a}+b)$;
\ENDIF
\IF{$var\in$\{``proximal dist", ``proximal innerproduct",  ``proximal baseline"\}}
\STATE Compute $\mathbf{S}:\mathbf{S}_{i,j}=\exp(-|\mathbf{z}_j-\mathbf{z}_i|_2^2/(\sigma^2d))$;
\STATE Compute Laplacian $\mathbf{L}$ and $\mathbf{Q}$ by Sec. \ref{subsec:proximal};
    \IF{$var==$\{``proximal baseline"\}}
    \STATE $\mathbf{r} =\mathbf{r'}$;
    \ENDIF
    \STATE $\mathbf{r}\leftarrow \text{Proximal Gradient Steps} (\mathbf{r}, \mathbf{L}, \mathbf{Q})$ from Algo.\,\ref{algo:proximal};
\ENDIF
\STATE \textbf{return} $\mathbf{r}$;
\end{algorithmic}
\label{algo:framework}
\end{small}
\end{algorithm}

\subsection{Initialization and Pretraining Considerations}
Algo.\,\ref{algo:proximal} requires an initial guess $\mathbf{r'}\in\mathbb{R}^n$.
To achieve this, we introduce two non-proximal variants, whose output score vector could serve as $\mathbf{r'}$. These non-proximal variants are also evaluated in our experiments.

\bb (1) \emph{``innerproduct"} variant: With a trainable vector $\mathbf{a}$ that has its dimension equal to the embedding dimension, we obtain the scores by the inner product of $\mathbf{z}_i$  with  $\mathbf{a}$, plus a trainable  bias $b$, followed by a sigmoid layer to force positive score values: $r_i = \text{sigmoid}(\mathbf{z}_i \cdot \mathbf{a} + b).$ 
    
\bb (2) \emph{``dist"} variant: $r_i = \exp(-|\mathbf{a}-\mathbf{z}_i|_2^2/(\sigma^2d))$ where $\mathbf{a}\in\mathbb{R}^{d}, \sigma\in\mathbb{R}$ are trainable parameters. This variant applies a trainable Gaussian RBF kernel to describe the scores.

For each non-proximal variant, the corresponding proximal variant
is called \bb (3) \emph{``proximal dist"} or \bb (4) \emph{``proximal innerproduct"}. We can also adopt the initial guess from a certain existing baseline's output, with variant name \bb (5) \emph{``proximal baseline"}.

For a reasonable similarity matrix to start with, we apply some pretraining. One option is to train the ranking model with a non-proximal variant, ``dist" or ``innerproduct", for the early training epochs. 
As the proximal variants are inspired by unfolding the Fiedler vector calculation introduced in SerialRank \cite{fogel2014SerialRank}, another option is to add a term to the loss function in early training epochs that compares the similarity matrix constructed by $\mathbf{Z}$ using a GNN with the normalized version (divided by the max entry so that the maximum is 1) of $\mathbf{S'}$, the similarity matrix from SerialRank. 
The corresponding additional term added to the loss function $\mathcal{L}_\text{upset, ratio}$ from Eq.~\eqref{eq:loss_upset} or $\mathcal{L}_\text{upset, margin}$ from Eq.~\eqref{eq:loss_margin}, or their sum,
is $\mathcal{L}_\text{similarity} = n^{-2}  {\left\lVert \mathbf{S}-\mathbf{S'}/\max(\{\mathbf{S'}_{i,j}\})\right\rVert^2_F}.$
\subsection{Convergence Analysis}
An analysis of the convergence of our proximal gradient steps is provided in Appendix~\ref{appendix_sec:convergence_dis}, along with additional theoretical and practical considerations. We summarize our main result below; the proof is available in  Appendix~\ref{appendix_sec:convergence_dis}.
\begin{theorem}
\label{thm:convergence}
Let $\{\alpha_\gamma>0\}_{\gamma=1}^\Gamma$ in Algo.~\ref{algo:proximal} be fixed (equal to $\alpha$) and let $\rho$ be the Fiedler eigenvalue of $\mathbf{S}$.  Denote a Fiedler eigenvector by $\mathbf{r}^*$. Assume that $\mathbf{r}^*$ is a strict local minimizer of problem~(\ref{eq:problem_original}).
If 
$0<\alpha<\frac{1}{4(n-1)},$ 
then with our definition of $\mathbf{S}$,  Algo.~\ref{algo:proximal} converges locally uniformly to $\mathbf{r}^*$.
\end{theorem}

\section{Experiments}
Implementation details are provided in Appendix~\ref{appendix_sec:implementation}. 
Experiments were conducted on a compute node with 8 Nvidia Tesla T4, 96 Intel Xeon Platinum 8259CL CPUs @ 2.50GHz and $378$GB RAM.
\subsection{Data sets and Protocol}
\textbf{Real-World Data.} 
We consider 10 real-world data sets (78 digraphs constructed in total). Digraphs based on pairwise comparisons are constructed as follows in most cases: for each match, an edge is added if the match result is not a tie, 
with the edge  weight  
the difference between scores (absolute wins, that is, the absolute difference of the scores), 
and ties are treated the same as no match between the pair of players. 

When raw data with individual match scores on both competitors are available, we in addition construct a \emph{``finer"} version which takes ties into account, as follows: for each match, we have a reciprocal edge with potentially different weights on the two
directions. If there are multiple matches, weights are added. To distinguish zero (a tie) from
no game, for our methods, we add
a small value (here 0.1) to every existing edge. 

Real-world data sets, detailed in Appendix~\ref{appendix_sec:full_data_stats}, include 
\begin{itemize}
    \item NCAA College Basketball\footnote{\scriptsize{\url{https://www.ncaa.com/sports/basketball-men/d1}}} (used to construct the networks \textit{Basketball} and \textit{Basketball finer}) are the
outcomes of US College basketball
matches from seasons 1985-2014. 
We construct two separate digraphs for each season, 
a regular version and a ``finer" version introduced in the last paragraph. 
\item England Football Premier League\footnote{\scriptsize{\url{https://www.premierleague.com/}}} (\textit{Football} and \textit{Football finer}) are  Premier League football
match results for six seasons, from 2009 to 2014, between 20 teams. As in NCAA College Basketball, we construct two separate digraphs for each season. 

\item The 
Animal Dominance Network (\textit{Animal}) \cite{hobson2015social} describes the number of net aggressive wins of 21 captive monk parakeets. 

\item
Microsoft Halo 2 Tournament on Head-to-Head Games (\textit{HeadToHead}) collects game outcomes during the Beta testing period for the Xbox game Halo 2\footnote{Credits for using the Halo 2 Beta data set are given to Microsoft Research Ltd. and Bungie.}. 
\item
Faculty Hiring Networks (\textit{Faculty: Business}, \textit{Faculty: CS} and \textit{Faculty: History}) \cite{clauset2015systematic} contains three North American academic hiring networks tracking the flow of academics between universities.  

\item
Lead-Lag Relationships on Stocks (\textit{Finance}) \cite{bennett2021detection} contains lead-lag relationships on 1315 stocks from 2001-2019. An edge $(i,j)$ in the digraph encodes the $t$-value of the coefficient in the regression of the daily returns of stock $i$ on the  lag-1 (previous day) returns of stock $j$.
\end{itemize}

Table \ref{tab:data sets} gives the number of nodes ($n$), the number of directed edges ($|\mathcal{E}|$), the number of reciprocal edges ($|\mathcal{E}^r|$) (self-loops are counted once and for $ u \ne v$, a reciprocal edge $u\rightarrow v, v\rightarrow u$ is counted twice) and their percentage among all edges, for the real-world networks, illustrating the variability in network size and density (defined as ${|\mathcal{E}|}/[{n(n-1)}]$). 
When input features are unavailable, we stack the real and imaginary parts of the top $K$ eigenvectors of $(\mathbf{A}-\mathbf{A}^\top)\cdot i$ in line with the protocol in \cite{cucuringu2020hermitian} for GNNs. 
We report the embedding dimension $d$.

\begin{table}[tb!]
\centering
\vspace{-10pt}
\caption{Summary statistics 
for the real-world networks.
} 
\label{tab:data sets}
\resizebox{1\linewidth}{!}{
\begin{tabular}{lrrrrrrr}
\toprule
Data               & $n$ &  $|\mathcal{E}|$&density&$|\mathcal{E}^r|$&$\frac{|\mathcal{E}^r|}{|\mathcal{E}|}$(\%)&$K$&$d$  \\ \midrule
\textit{HeadToHead}&602&5010&1.38e-02&464&9.26&48&32\\
\textit{Finance}&1315&1729225&1.00e+00&1729225&100&20&64\\
\textit{Animal}&21&193&4.60e-01&64&33.16&3&8\\
\textit{Faculty:Business}&113&1787&1.41e-01&0&0.00&5&16\\
\textit{Faculty:CS}&206&1407&3.33e-02&0&0.00&9&16\\
\textit{Faculty:History}&145&1204&5.77e-02&0&0.00&12&16\\
\textit{Football(avg)}&20&201&5.29e-01&71&32.17&9&8\\
\textit{Basketball(avg)}&316&3506&3.51e-02&986&28.57&20&16\\
\textit{Football finer (avg)}&20&367&9.65e-01&367&100&9&8\\
\textit{Basketball finer (avg)}&316&6139&6.12e-02&6139&100&20&16\\
\bottomrule
\end{tabular}}
\vspace{-3mm}
\end{table} 


\textbf{Synthetic Data}
We perform experiments on graphs with $n=350$ nodes for Erd\H{o}s-R\'enyi Outlier (ERO) models 
as in \cite{d2021ranking}, with edge density $p\in\{0.05, 1\}$, noise level  $\eta\in\{0,0.1,\dots,0.8\}$ (corresponding to $\gamma$ in \cite{d2021ranking}) and style ``uniform" or ``gamma" depending on the distribution from which the ground-truth scores are generated. \footnote{This synthetic graph is
an ER graph with tunable noise level, a test case which is  informative without being  too unrealistic.} We fix $K=5, d=16$ for the ERO.

\subsection{Main Experimental Results} 
In our numerical experiments, we compare against \underline{\textbf{11 baselines}}, where results are averaged over 10 runs: 
\bb Eigenvector Centrality (Eig.Cent.) \cite{bonacich1987power},  
\bb PageRank \cite{Pageetal98},  
\bb Rank Centrality (RankCent.) \cite{negahban2017rank},  
\bb Minimum Violation Rank (MVR) \cite{MVR_1986}, 
\bb SerialRank \cite{fogel2014SerialRank}, 
\bb SyncRank \cite{cucuringu2016sync}, 
\bb SVD\_NRS and SVD\_RS by \cite{d2021ranking}, 
\bb Bradley-Terry-Luce (BTL) model \cite{firth2005bradley}, 
\bb David's Score (DavidScore) \cite{david1987ranking}, and 
\bb SpringRank \cite{de2018physical}. Models are selected based on the lowest $\mathcal{L}_\text{upset, simple}$ obtained without label supervision, where non-proximal results (for ``dist" and ``innerproduct" variants) are listed in the ``GNNRank-N" column and proximal methods (for ``proximal dist", ``proximal innerproduct" and ``proximal baseline" variants) listed with ``GNNRank-P", in all tables. We report the best-performing variant for each data set within GNNRank-N and GNNRank-P, respectively.  
Performance with respect to each of the GNNRank variants, for each individual input digraph, and on different objectives, are given in Appendix~\ref{appendix_sec:full_result_tables}.
Table \ref{tab:upset_simple_naive} compares our two groups of methods 
against baselines on 10 real-world data sets, where basketball data sets are averaged over 30 seasons, and football ones are averaged over 6 seasons.  
Our best-performing variant of the  proximal method also outperforms  its inspiration
SerialRank \cite{fogel2014SerialRank} on all real-world data sets by including more trainable parameters. Our proximal method achieves state-of-the-art performance when using a typically good baseline, such as SyncRank, as an initial guess to the score vector before applying proximal gradient steps.

Table \ref{tab:synthetic} shows
Kendall tau 
selected from the lowest 
$\mathcal{L}_\text{upset, naive}$ on synthetic models. For some dense digraphs, SerialRank (which motivated our proximal gradient steps) attains leading performance, while for some other cases it fails. GNNRank-P outperforms across all synthetic models shown here. Full results are in Appendix~\ref{appendix_subsec:synthetic_results_full}.

We conclude that both non-proximal and proximal methods can achieve leading performance on real-world data sets, while on the synthetic models listed here, the best method in GNNRank-P performs much better than the best method
in GNNRank-N. Performance results for
each of the variants are provided in Appendix~\ref{appendix_subsec:variant_wise_results_full}; 
the individual variants also attain comparable and often superior performance compared to the  baselines.

We observe across all data sets that our proximal methods:
\bb (1)
 can improve on existing baseline methods when using them as initial guesses, and never perform significantly worse than the corresponding baseline, hence they can be used to enhance existing methods;
\bb (2) do not rely on baseline methods for an initial guess  but can instead use GNNRank-N outcomes, such as ``proximal dist" and ``proximal innerproduct'';
\bb (3) can outperform SerialRank by unfolding its Fiedler vector calculations with a trainable similarity matrix and proximal gradient steps. 

\begin{table*}[!tb]
\centering
\vspace{-10pt}
\caption{Performance on $\mathcal{L}_\text{upset, simple}$ (top half) and $\mathcal{L}_\text{upset, naive}$ (bottom half), averaged over 10 runs with one standard deviation. ``avg" for time series first average over all seasons, then consider mean and standard deviation over the 10 averaged values. The best is marked in \red{bold red} while the second best is in \blue{underline blue}. When MVR does not generate results after one week, we fill in ``NAN".}
\label{tab:upset_simple_naive}
\resizebox{1\linewidth}{!}{\begin{tabular}{lrrrrrrrrrrrrrrrrrr}
\toprule
Data  & SpringRank & SyncRank & SerialRank & BTL & DavidScore & Eig.Cent. & PageRank & RankCent. & SVD\_RS & SVD\_NRS& MVR  &  GNNRank-N & GNNRank-P \\
\midrule
{\it HeadToHead} & 1.00$\pm$0.00 & 1.94$\pm$0.00 & 2.01$\pm$0.00 & 1.12$\pm$0.01 & 1.16$\pm$0.00 & 1.47$\pm$0.00 & 1.36$\pm$0.00 & 2.00$\pm$0.02 & 1.79$\pm$0.00 & 1.42$\pm$0.00 & nan$\pm$nan & \blue{0.99$\pm$0.00} & \red{0.96$\pm$0.00} \\
			{\it Finance} & 1.63$\pm$0.00 & 1.98$\pm$0.00 & 1.61$\pm$0.00 & 1.78$\pm$0.01 & 1.63$\pm$0.00 & 1.74$\pm$0.00 & 1.75$\pm$0.00 & 1.88$\pm$0.00 & 1.64$\pm$0.00 & 1.64$\pm$0.00 & nan$\pm$nan & \red{1.00$\pm$0.00} & \red{1.00$\pm$0.00} \\
			{\it Animal} & 0.50$\pm$0.00 & 1.62$\pm$0.24 & 1.98$\pm$0.48 & 0.45$\pm$0.00 & \blue{0.33$\pm$0.00} & 0.55$\pm$0.00 & 0.63$\pm$0.00 & 1.96$\pm$0.00 & 1.03$\pm$0.00 & 0.53$\pm$0.00 & 2.02$\pm$0.32 & 0.41$\pm$0.09 & \red{0.25$\pm$0.00} \\
			{\it Faculty: Business} & 0.41$\pm$0.00 & 0.83$\pm$0.00 & 1.19$\pm$0.00 & 0.41$\pm$0.01 & 0.49$\pm$0.00 & 0.49$\pm$0.00 & 0.49$\pm$0.00 & 2.01$\pm$0.03 & 0.68$\pm$0.00 & 0.46$\pm$0.00 & 0.78$\pm$0.05 & \blue{0.38$\pm$0.01} & \red{0.36$\pm$0.00} \\
			{\it Faculty: CS} & \blue{0.33$\pm$0.00} & 0.98$\pm$0.10 & 1.40$\pm$0.00 & 0.34$\pm$0.01 & 0.61$\pm$0.00 & 0.51$\pm$0.00 & 0.44$\pm$0.00 & 1.99$\pm$0.27 & 0.93$\pm$0.00 & 0.58$\pm$0.00 & 0.87$\pm$0.09 & \blue{0.33$\pm$0.03} & \red{0.32$\pm$0.00} \\
			{\it Faculty: History} & 0.32$\pm$0.00 & 0.57$\pm$0.00 & 2.16$\pm$0.80 & \blue{0.30$\pm$0.01} & 0.57$\pm$0.00 & 0.40$\pm$0.00 & 0.37$\pm$0.00 & 2.13$\pm$0.30 & 0.95$\pm$0.00 & 0.38$\pm$0.00 & 0.84$\pm$0.17 & \red{0.28$\pm$0.01} & \blue{0.30$\pm$0.01} \\
			{\it Basketball (avg)} & \blue{0.78$\pm$0.00} & 1.72$\pm$0.00 & 1.98$\pm$0.00 & 0.91$\pm$0.03 & 0.79$\pm$0.00 & 0.88$\pm$0.00 & 0.88$\pm$0.00 & 1.95$\pm$0.00 & 0.99$\pm$0.00 & 0.89$\pm$0.00 & nan$\pm$nan & 0.80$\pm$0.00 & \red{0.73$\pm$0.00} \\
			{\it Basketball finer (avg)} & \blue{0.81$\pm$0.00} & 1.73$\pm$0.01 & 1.96$\pm$0.00 & 1.39$\pm$0.02 & 0.85$\pm$0.00 & 1.19$\pm$0.00 & 1.15$\pm$0.00 & 1.97$\pm$0.00 & 1.00$\pm$0.00 & 0.90$\pm$0.00 & nan$\pm$nan & 0.84$\pm$0.00 & \red{0.74$\pm$0.00} \\
			{\it Football (avg)} & 0.91$\pm$0.00 & 1.63$\pm$0.12 & 1.20$\pm$0.00 & 0.94$\pm$0.02 & 0.94$\pm$0.00 & 1.07$\pm$0.00 & 1.08$\pm$0.00 & 1.78$\pm$0.03 & 1.00$\pm$0.00 & 0.90$\pm$0.00 & 1.72$\pm$0.09 & \blue{0.81$\pm$0.03} & \red{0.78$\pm$0.02} \\
			{\it Football finer (avg)} & 0.98$\pm$0.00 & 1.68$\pm$0.03 & 1.16$\pm$0.00 & 1.01$\pm$0.02 & 0.93$\pm$0.00 & 1.13$\pm$0.00 & 1.21$\pm$0.00 & 1.91$\pm$0.01 & 1.00$\pm$0.00 & 0.90$\pm$0.00 & 2.06$\pm$0.04 & \blue{0.89$\pm$0.06} & \red{0.82$\pm$0.01} \\
\midrule
{\it HeadToHead} & \blue{0.25$\pm$0.00} & 0.48$\pm$0.00 & 0.50$\pm$0.00 & 0.28$\pm$0.00 & 0.29$\pm$0.00 & 0.37$\pm$0.00 & 0.34$\pm$0.00 & 0.50$\pm$0.01 & 0.45$\pm$0.00 & 0.36$\pm$0.00 & nan$\pm$nan & 0.27$\pm$0.00 & \red{0.24$\pm$0.00} \\
			{\it Finance} & 0.41$\pm$0.00 & 0.50$\pm$0.00 & \red{0.40$\pm$0.00} & 0.45$\pm$0.00 & 0.41$\pm$0.00 & 0.44$\pm$0.00 & 0.44$\pm$0.00 & 0.47$\pm$0.00 & 0.41$\pm$0.00 & 0.41$\pm$0.00 & nan$\pm$nan & 0.41$\pm$0.00 & \red{0.40$\pm$0.00} \\
			{\it Animal} & 0.13$\pm$0.00 & 0.40$\pm$0.06 & 0.58$\pm$0.11 & 0.11$\pm$0.00 & \blue{0.08$\pm$0.00} & 0.14$\pm$0.00 & 0.16$\pm$0.00 & 0.49$\pm$0.00 & 0.26$\pm$0.00 & 0.13$\pm$0.00 & 0.50$\pm$0.08 & 0.10$\pm$0.02 & \red{0.06$\pm$0.00} \\
			{\it Faculty: Business} & \blue{0.10$\pm$0.00} & 0.21$\pm$0.00 & 0.30$\pm$0.00 & \blue{0.10$\pm$0.00} & 0.12$\pm$0.00 & 0.12$\pm$0.00 & 0.12$\pm$0.00 & 0.50$\pm$0.01 & 0.17$\pm$0.00 & 0.12$\pm$0.00 & 0.19$\pm$0.01 & \blue{0.10$\pm$0.00} & \red{0.09$\pm$0.00} \\
			{\it Faculty: CS} & \red{0.08$\pm$0.00} & 0.24$\pm$0.02 & 0.35$\pm$0.00 & \red{0.08$\pm$0.00} & 0.15$\pm$0.00 & 0.13$\pm$0.00 & 0.11$\pm$0.00 & 0.50$\pm$0.07 & 0.23$\pm$0.00 & 0.15$\pm$0.00 & 0.22$\pm$0.02 & \red{0.08$\pm$0.01} & \red{0.08$\pm$0.00} \\
			{\it Faculty: History} & 0.08$\pm$0.00 & 0.14$\pm$0.00 & 0.54$\pm$0.20 & 0.08$\pm$0.00 & 0.15$\pm$0.00 & 0.10$\pm$0.00 & 0.09$\pm$0.00 & 0.53$\pm$0.08 & 0.24$\pm$0.00 & 0.10$\pm$0.00 & 0.21$\pm$0.04 & \red{0.07$\pm$0.00} & \red{0.07$\pm$0.00} \\
			{\it Basketball (avg)} & \blue{0.20$\pm$0.00} & 0.43$\pm$0.00 & 0.49$\pm$0.00 & 0.23$\pm$0.01 & \blue{0.20$\pm$0.00} & 0.22$\pm$0.00 & 0.22$\pm$0.00 & 0.49$\pm$0.00 & 0.25$\pm$0.00 & 0.22$\pm$0.00 & nan$\pm$nan & \blue{0.20$\pm$0.00} & \red{0.18$\pm$0.00} \\
			{\it Basketball finer (avg)} & \blue{0.20$\pm$0.00} & 0.43$\pm$0.00 & 0.49$\pm$0.00 & 0.35$\pm$0.00 & 0.21$\pm$0.00 & 0.30$\pm$0.00 & 0.29$\pm$0.00 & 0.49$\pm$0.00 & 0.25$\pm$0.00 & 0.23$\pm$0.00 & nan$\pm$nan & 0.21$\pm$0.00 & \red{0.18$\pm$0.00} \\
			{\it Football (avg)} & 0.23$\pm$0.00 & 0.41$\pm$0.03 & 0.30$\pm$0.00 & 0.23$\pm$0.01 & 0.24$\pm$0.00 & 0.27$\pm$0.00 & 0.27$\pm$0.00 & 0.44$\pm$0.01 & 0.25$\pm$0.00 & \blue{0.22$\pm$0.00} & 0.43$\pm$0.02 & \blue{0.22$\pm$0.01} & \red{0.21$\pm$0.00} \\
			{\it Football finer (avg)} & 0.25$\pm$0.00 & 0.42$\pm$0.01 & 0.29$\pm$0.00 & 0.25$\pm$0.01 & 0.23$\pm$0.00 & 0.28$\pm$0.00 & 0.30$\pm$0.00 & 0.48$\pm$0.00 & 0.25$\pm$0.00 & \blue{0.22$\pm$0.00} & 0.51$\pm$0.01 & 0.24$\pm$0.01 & \red{0.21$\pm$0.00} \\

\bottomrule
\end{tabular}}
\end{table*}

\begin{table*}[!ht]
\centering
\vspace{-10pt}
\caption{Performance on Kendall Tau based on the lowest $\mathcal{L}_\text{upset, naive}$ on ERO models, averaged over 10 runs with one standard deviation. ``avg" for time series first average over all seasons, then consider mean and standard deviation over the 10 averaged values. The best is marked in \red{bold red} while the second best is in \blue{underline blue}. As MVR does not generate results after one week, we leave it out here.}
\label{tab:synthetic}
\resizebox{1\linewidth}{!}{\begin{tabular}{lrrrrrrrrrrrrrrrrr}
\toprule
Data  & SpringRank & SyncRank & SerialRank & BTL & DavidScore & Eig.Cent. & PageRank & RankCent. & SVD\_RS & SVD\_NRS&  GNNRank-N & GNNRank-P \\
\midrule
{\it ERO(p=0.05, style=uniform,$\eta$=0.1)} & 0.75$\pm$0.00 & 0.04$\pm$0.00 & 0.03$\pm$0.00 & 0.70$\pm$0.01 & \blue{0.77$\pm$0.00} & 0.56$\pm$0.00 & 0.58$\pm$0.00 & 0.01$\pm$0.05 & 0.74$\pm$0.00 & \blue{0.77$\pm$0.00} & 0.76$\pm$0.01 & \red{0.79$\pm$0.01} \\
			{\it ERO(p=0.05, style=gamma,$\eta$=0.2)} & 0.61$\pm$0.00 & 0.01$\pm$0.00 & -0.01$\pm$0.00 & 0.61$\pm$0.00 & \blue{0.74$\pm$0.00} & 0.52$\pm$0.00 & 0.51$\pm$0.00 & -0.01$\pm$0.01 & 0.45$\pm$0.00 & 0.64$\pm$0.00 & 0.52$\pm$0.01 & \red{0.77$\pm$0.00} \\
			{\it ERO(p=0.05, style=uniform,$\eta$=0.3)} & 0.61$\pm$0.00 & 0.05$\pm$0.00 & 0.01$\pm$0.00 & 0.59$\pm$0.01 & \blue{0.68$\pm$0.00} & 0.44$\pm$0.00 & 0.41$\pm$0.00 & 0.05$\pm$0.00 & 0.60$\pm$0.00 & 0.62$\pm$0.00 & 0.62$\pm$0.00 & \red{0.70$\pm$0.02} \\
			{\it ERO(p=0.05, style=gamma,$\eta$=0.4)} & 0.51$\pm$0.00 & 0.08$\pm$0.00 & -0.00$\pm$0.00 & 0.52$\pm$0.00 & \blue{0.65$\pm$0.00} & 0.43$\pm$0.00 & 0.43$\pm$0.00 & 0.09$\pm$0.01 & 0.23$\pm$0.00 & 0.44$\pm$0.00 & 0.38$\pm$0.08 & \red{0.66$\pm$0.01} \\
			{\it ERO(p=1, style=uniform,$\eta$=0.5)} & 0.85$\pm$0.00 & 0.07$\pm$0.00 & \red{0.92$\pm$0.00} & 0.81$\pm$0.03 & 0.91$\pm$0.00 & 0.80$\pm$0.00 & 0.73$\pm$0.00 & 0.24$\pm$0.00 & 0.89$\pm$0.00 & 0.87$\pm$0.00 & 0.90$\pm$0.01 & \red{0.92$\pm$0.00} \\
			{\it ERO(p=1, style=gamma,$\eta$=0.6)} & 0.72$\pm$0.00 & 0.09$\pm$0.00 & \red{0.89$\pm$0.00} & 0.67$\pm$0.01 & 0.88$\pm$0.00 & 0.65$\pm$0.00 & 0.64$\pm$0.00 & 0.05$\pm$0.02 & 0.74$\pm$0.00 & 0.73$\pm$0.00 & 0.77$\pm$0.00 & \red{0.89$\pm$0.00} \\
\bottomrule
\end{tabular}}
\end{table*}

\begin{table*}[tb!]
\centering
\vspace{-10pt}
	\caption{$\mathcal{L}_\text{upset, simple}$ comparison for different variants on real-world data, averaged over 10 runs, and plus/minus one standard deviation. The best for each group (GNNRank-N or GNNRank-P) is marked in \red{bold red} while the second best is in \blue{underline blue}.}
\label{tab:ablation}
\resizebox{1\linewidth}{!}{\begin{tabular}{l|rr p{5em}| rrrrrrrr}
\toprule
Methods&\multicolumn{3}{c|}{GNNRank-N}&\multicolumn{7}{c}{GNNRank-P}\\
Data/Variant &loss sum & $\mathcal{L}_\text{upset,margin} $ & $\mathcal{L}_\text{upset, ratio} $ & loss sum & $\mathcal{L}_\text{upset,margin} $  & $\mathcal{L}_\text{upset, ratio} $   & no pretrain & $\{\alpha_\gamma\}_{\gamma=1}^\Gamma$not trainable &$\Gamma=3$ & $\Gamma=7$ \\
			\midrule
{\it Animal} & \blue{0.43$\pm$0.06} & 0.59$\pm$0.08 & \red{0.41$\pm$0.09} & \red{0.25$\pm$0.00} & \red{0.25$\pm$0.00} & \red{0.25$\pm$0.01} & \red{0.25$\pm$0.00} & \red{0.25$\pm$0.00} & \red{0.25$\pm$0.00} & \red{0.25$\pm$0.00} \\
			{\it Faculty: Business} & \blue{0.40$\pm$0.02} & 0.49$\pm$0.16 & \red{0.38$\pm$0.01} & \red{0.36$\pm$0.00} & \red{0.36$\pm$0.00} & \red{0.36$\pm$0.00} & \red{0.36$\pm$0.00} & \red{0.36$\pm$0.00} & \red{0.36$\pm$0.00} & \red{0.36$\pm$0.00} \\
			{\it Faculty: CS} & \blue{0.35$\pm$0.01} & 0.36$\pm$0.01 & \red{0.33$\pm$0.03} & \red{0.32$\pm$0.00} & \red{0.32$\pm$0.00} & \red{0.32$\pm$0.00} & 0.33$\pm$0.00 & \red{0.32$\pm$0.00} & \red{0.32$\pm$0.00} & \red{0.32$\pm$0.00} \\
			{\it Faculty: History} & \red{0.28$\pm$0.01} & 0.31$\pm$0.01 & \red{0.28$\pm$0.01} & \red{0.30$\pm$0.01} & \red{0.30$\pm$0.01} & \red{0.30$\pm$0.02} & \red{0.30$\pm$0.01} & \red{0.30$\pm$0.01} & \red{0.30$\pm$0.01} & \red{0.30$\pm$0.01} \\
			{\it Football (avg)} & \red{0.82$\pm$0.01} & 0.84$\pm$0.03 & \red{0.82$\pm$0.05} & \blue{0.78$\pm$0.02} & \blue{0.78$\pm$0.01} & 0.79$\pm$0.01 & 0.79$\pm$0.02 & 0.79$\pm$0.02 & \red{0.77$\pm$0.01} & \blue{0.78$\pm$0.02} \\
			{\it Football finer (avg)} & \red{0.90$\pm$0.01} & 0.97$\pm$0.06 & \blue{0.91$\pm$0.07} & \red{0.82$\pm$0.01} & 0.84$\pm$0.01 & \red{0.82$\pm$0.01} & 0.84$\pm$0.02 & \red{0.82$\pm$0.00} & \red{0.82$\pm$0.01} & \red{0.82$\pm$0.01} \\
			\bottomrule
		\end{tabular}}
\end{table*}
\vspace{-1mm}

\subsection{Discussion}
\textbf{Ablation Study.}
Table \ref{tab:ablation} shows results on varying the current choices:
\bb 1) For GNNRank-N methods: forcing the loss to be the sum $\mathcal{L}_\text{upset, simple}+\mathcal{L}_\text{upset, margin}$, $\mathcal{L}_\text{upset, simple}$ only, or $\mathcal{L}_\text{upset, margin}$, respectively; 
\bb 2) for GNNRank-P methods: in addition to the variants for GNNRank-N methods, removing pretraining, fixing all $\alpha_\gamma$ in Algo.~\ref{algo:proximal}, and changing the number of Fiedler proximal steps from default 5 to 3 or 7. 

It is shown that for GNNRank-N methods, using $\mathcal{L}_\text{upset, margin}$  usually harms performance.  For GNNRank-P methods, using $\mathcal{L}_\text{upset, margin}$ in addition to $\mathcal{L}_\text{upset, ratio}$ usually boosts performance.
 Pretraining generally leads to better performance, so does making $\{\alpha_\gamma\}_{\gamma=1}^\Gamma$ trainable. 
The number of proximal gradient steps does not need to be large probably due to fast convergence, so we use 5 throughout. Full comparison tables are in Appendix~\ref{appendix_subsec:ablation_full} including results on $\mathcal{L}_\text{upset, naive}$, with similar conclusions.

In addition, note that essentially we could use any neural network method to obtain the node embeddings, yet digraph GNNs are natural to be employed given the input data structure. To validate the benefit of using a digraph GNN, we adopt a two-layer Multilayer perceptron to obtain node embeddings, but obtained on average 2\% worse $\mathcal{L}_{upset, simple}$ for both the non-proximal and proximal variants across all real-world data sets, respectively.

\textbf{Inductive Learning.} 
We observe that
our proximal methods, if trained only once and then applied to similar data sets, still perform comparably to multiply trained analogs. This can save training time and validates the inductive learning ability for our framework. 
To this end, Appendix~\ref{appendix_subsec:inductive} shows results on the performance of the ``IB proximal baseline" variant, trained with ``emb baseline" on the \textit{Basketball finer} data set. On average, directly applying gives $\mathcal{L}_\text{upset, simple}=0.75\pm 0.02$ and $\mathcal{L}_\text{upset, naive}=0.19\pm 0.01$ while training specifically for the 
season gives $\mathcal{L}_\text{upset, simple}=0.74\pm 0.00$ and $\mathcal{L}_\text{upset, naive}=0.19\pm 0.00$.


\textbf{Variants and Hyperparameters.} 
The results
in Tables \ref{tab:upset_simple_naive} and \ref{tab:synthetic} are selected within either non-proximal or proximal categories, depending on whether they have proximal gradient steps within the architecture. They show
the lowest reported evaluation metric (except for Kendall tau,
when we select variants based on the lowest $\mathcal{L}_\text{upset, naive}$) for all variants within the group. 
Appendix~\ref{appendix_sec:variant_hyper} gives
the variant selected based on minimizing either $\mathcal{L}_\text{upset, simple}$ or $\mathcal{L}_\text{upset, ratio}$ for non-proximal and proximal groups, respectively. We find that each variant has its scenarios where it shows competitive or even outstanding performance;  that ``dist" seems to outperform ``innerproduct" within non-proximal methods; and that ``proximal baseline" is usually the best among proximal methods, with SyncRank output as initial guess, pretrained with a SerialRank similarity matrix. This shows
that our proximal method, initialized
with a good baseline, e.g. Sync-Rank, and pretrained with information from SerialRank, can boost the corresponding baseline method
by using a learnable similarity matrix.

\textbf{Boosting Baselines.} 
Appendix~\ref{appendix_sec:improvement_over_baselines} shows improvements on $\mathcal{L}_{upset, simple}$ and $\mathcal{L}_{upset, naive}$ by ``proximal baseline" when setting a certain baseline as $\mathbf{r}'$. Across all data sets, ``proximal baseline" improves the most (by 1.02 and 0.24, respectively) with SyncRank as initial guess,
while the average improvement for SpringRank, SerialRank, BTL, Eig.Cent., PageRank and SVD\_NRS are 0.07, 0.82, 0.22, 0.19, 0.21, and 0.12, respectively, for $\mathcal{L}_{upset, simple}$, and for 0.00, 0.18, 0.04, 0.03, 0.03, and 0.01, respectively, for $\mathcal{L}_{upset, naive}$.
\section{Conclusion and Outlook}
We have proposed a general framework based on directed graph neural networks to recover global rankings from pairwise comparisons. Future directions include  learning a more powerful model to work for different input digraphs, minimizing upsets under some constraints, training with some supervision of ground-truth rankings, and exploring the interplay with low-rank matrix completion. Incorporating side information, in the form of node level covariates, and comparing to the, currently rather limited, existing literature on ranking with covariates, is another interesting direction. 

\section*{Acknowledgment} 
Yixuan He is supported by a Clarendon scholarship from University of Oxford. This work was partially done during her internship at Amazon and a visit to Shanghai Jiao Tong University.
Gesine Reinert is funded in part by EPSRC grants EP/T018445/1 and EP/R018472/1. Junchi Yan is funded in part by National Key Research and Development of China (2020AAA0107600), and Shanghai Municipal Science and Technology Major Project (2021SHZDZX0102). Gesine Reinert and Mihai Cucuringu acknowledge support from the EPSRC grant EP/N510129/1 at The Alan Turing Institute.

\bibliography{ICML_arxiv_v2}
\bibliographystyle{icml2022}

\newpage
\appendix
\section{Implementation Details}
\label{appendix_sec:implementation}
\subsection{Setup}
We use all data for training for at most 1000 epochs, and stop  early if the loss does not decrease for 200 epochs. For proximal variants, we use 50 epochs for pretraining. We use Adam \citep{kingma2014adam} and  Stochastic Gradient Descend (SGD) as the optimizers and $\ell_2$ regularization with weight decay $5\cdot 10^{-4}$ to avoid overfitting. We use as learning rate 0.01 throughout for GNNRank-N methods as well as pretraining with Adam, and 10 times that of pretraining learning rate for GNNRank-P methods with SGD. We run a grid search on hyperparameters. 

For real-world data sets, we conduct 10 repeated runs, while for synthetic data, we generate 5 synthetic networks under the same setting, each with 2 repeated runs.

Note that we do not evaluate our method by the loss which is used to devise the method, as that would not be fair; instead we employ  $\mathcal{L}_{upset, naive}$ or $\mathcal{L}_{upset, simple}$ which are never used in training. Thus, the comparison is fair. When ground truths are given, we follow \cite{d2021ranking} to use Kendall Tau values for comparison.

\subsection{Codes, Data and Hardware}
To fully  reproduce our results, codes and preprocessed data are available at \url{https://github.com/SherylHYX/GNNRank}. Experiments were conducted on a compute node with 8 Nvidia Tesla T4, 96 Intel Xeon Platinum 8259CL CPUs @ 2.50GHz and $378$GB RAM. Most experiments can be completed within a week, including all variants,  hyperparameter searches and ablation studies, except for those on MVR. 

The data sets considered here are relatively small and the same applies to GNNRank's competitive papers. However, even a network with $100$ nodes has more than $10^{157}$ possible rankings; this large scale task  needs efficient methods. Although each individual task does not require much resource (often $<$ 5min/run), 
for the paper we have 78 real-world and 18 synthetic data sets, 
each requires 10 runs for each of the 36 (proximal innerproduct, proximal dist) + 126 (proximal baseline) + 12 (non-proximal) = 174 variants, thus 167,040 runs, plus an extra ablation study.

\section{Details on Finding $\mathbf{Q}$ for Proximal Gradient Steps}
\label{appendix_sec:find_Q_details}
There are infinitely many choices of valid $\mathbf{Q}$; 
here we construct one of the special $\mathbf{Q}$'s for which
we can compute $\mathbf{Q} \mathbf{L}$ efficiently.
Since $\mathbf{Q}^\top \mathbf{e}_1 = \mathbf{1}/\sqrt{n}$, we can construct $\mathbf{R} = \mathbf{Q}^\top$ as a series of matrix multiplication of $(n-1)$ rotation matrices on two adjacent axes
$$
\mathbf{R} = \mathbf{R}_{n-1} \mathbf{R}_{n-2} \cdots \mathbf{R}_2 \mathbf{R}_1
$$
where $\mathbf{R}_k$ is defined by:
$$
\left[ \mathbf{R}_k \right]_{ij} = \begin{cases}
\sqrt{\frac{1}{n-k}} & i=k,j=k \text{ or } i=k+1, j=k+1 \\
-\sqrt{\frac{n-k-1}{n-k}} & i=k,j=k+1 \\
\sqrt{\frac{n-k-1}{n-k}} & i=k+1, j=k \\
1 & i=j \text{, and } i, j \ne k \text{ or } k + 1 \\
0 & \text{otherwise.}
\end{cases}
$$
To explain this construction, starting from $\mathbf{e}_1 = \begin{bmatrix} 1 & 0 & \cdots & 0 \end{bmatrix}^\top$, 
we carry out a rotation on the first and second axis to make the first element $\sqrt{1 / n}$.  The rotation matrix is
$$
\mathbf{R}_1 = \begin{bmatrix}
\sqrt{\frac{1}{n}} & -\sqrt{\frac{n-1}{n}} & & & \cdots & \\
\sqrt{\frac{n-1}{n}} & \sqrt{\frac{1}{n}} & & & \cdots & \\
& & 1 & & \cdots & \\
& & & 1 & \cdots & \\
\vdots & \vdots & \vdots & \vdots & \ddots & \vdots \\
& & & & & 1
\end{bmatrix} 
$$
We observe that $$\mathbf{R}_1 \mathbf{e}_1 = \begin{bmatrix} \sqrt{\frac{1}{n}} & \sqrt{\frac{n-1}{n}} & 0 & 0 & \cdots & 0 \end{bmatrix}^\top.$$

The second rotation 
occurs on the second and third axis to render $\mathbf{R}_2 \mathbf{R}_1 \mathbf{e}_1 = \begin{bmatrix} \sqrt{\frac{1}{n}} & \sqrt{\frac{1}{n}} & \sqrt{\frac{n-2}{n}} & 0 & \cdots & 0 \end{bmatrix}^\top$; this $\mathbf{R}_2$ is
$$
\mathbf{R}_2 = \begin{bmatrix}
1 & & & & \cdots & \\
& \sqrt{\frac{1}{n-1}} & -\sqrt{\frac{n-2}{n-1}} & & \cdots & \\
& \sqrt{\frac{n-2}{n-1}} & \sqrt{\frac{1}{n-1}} & & \cdots & \\
& & & 1 & \cdots & \\
\vdots & \vdots & \vdots & \vdots & \ddots & \vdots \\
& & & & & 1
\end{bmatrix}
$$
The general matrices $R_k$ are obtained by continuing this construction.

The matrix $\mathbf{R}$ is
\begin{equation*}
\begin{scriptsize}
\begin{bmatrix}
\sqrt\frac{1}{n} & -\sqrt{\frac{n-1}{n}} & & & \cdots & \\
\sqrt\frac{1}{n} & \sqrt{\frac{1}{n(n-1)}} & -\sqrt{\frac{n-2}{n-1}} & & \cdots & \\
\sqrt\frac{1}{n} & \sqrt{\frac{1}{n(n-1)}} & \sqrt{\frac{1}{(n-1)(n-2)}} & -\sqrt{\frac{n-3}{n-2}} & \cdots & \\
\sqrt{\frac{1}{n}} & \sqrt{\frac{1}{n(n-1)}} & \sqrt{\frac{1} {(n-1)(n-2)}} & \sqrt{\frac{1}{(n-2)(n-3)}} & \cdots & \\
\vdots & \vdots & \vdots & \vdots & \ddots & \vdots \\
\sqrt{\frac{1}{n}} & \sqrt{\frac{1}{n(n-1)}} & \sqrt{\frac{1}{(n-1)(n-2)}}& \sqrt{\frac{1}{(n-2)(n-3)}} & \cdots & \sqrt{\frac{1}{2}}
\end{bmatrix}
\end{scriptsize}
\end{equation*}
We then put $\mathbf{Q} = \mathbf{R}^\top$.

The resulting $\mathbf{Q}$ is 
the following upper Hessenberg matrix, which is independent of model parameters and can be efficiently precomputed
\begin{equation*}
\mathbf{Q}_{ij} = \begin{cases}
\sqrt{\frac{1}{n}} & i=1 \\
-\sqrt{\frac{n-i+1}{n-i+2}} & i\geq 2, j=i-1 \\
\sqrt{\frac{1}{(n-i+1)(n-i+2)}} & i\geq 2, j\geq i \\
0 & \text{otherwise.}
\end{cases}
\end{equation*}

The computation of $\mathbf{Q} \mathbf{L}$ takes $O(n^2)$ time since $\mathbf{Q}$ is a summation of (1) an upper-triangular matrix $\mathbf{U}$ with the same non-zero value on the same row, and (2) a shift matrix $\mathbf{V}$ (with negative subdiagonal). Then
$\mathbf{U} \mathbf{L}$ is essentially a cumulative sum from the bottom row to the top row, followed by a row-wise multiplication of the diagonal of $\mathbf{U}$, and 
$\mathbf{V} \mathbf{L}$ is essentially a combination of row-wise multiplication and indexing.

\section{Theoretical Analysis and Practical Considerations on Convergence of the Proximal Gradient Steps}
\label{appendix_sec:convergence_dis} 
First we prove a theorem useful for the main result.
\begin{theorem}
\label{thm:convergence_appendix}
Let $\{\alpha_\gamma>0\}_{\gamma=1}^\Gamma$ in Algo.~\ref{algo:proximal} be fixed and equal to $\alpha$. Let $\rho$ be the Fiedler eigenvalue of $\mathbf{S}$, i.e., the second smallest eigenvalue of $\mathbf{L}$.  Let $\mathbf{r}^*$ be a Fiedler eigenvector corresponding to $\rho,$ let  $\mathbf{y}^*=\left[\mathbf{Q}\mathbf{r}^*\right]_{2:n}$ and $\widetilde{L} = \left[\mathbf{Q L Q}^\top\right]_{2:n,2:n}.$ 
Let $\lambda_1 \geq \lambda_2 \cdots \geq \lambda_{n-1}=0$ be the eigenvalues of $ \mathbf{P}\widetilde{\mathbf{L}}\mathbf{P},$ where $\mathbf{P}=\mathbf{I}-\mathbf{y}^*\mathbf{y}^{*\top}$.
Let $\rho_\alpha=\max_{1\leq i \leq n-2}\frac{|1-2\alpha\lambda_i|}{1-2\alpha\rho}$. If $\mathbf{r}^*$ is a strict local minimum of problem~(\ref{eq:problem_original}), and if $\rho_\alpha<1$ and $\alpha(\lambda_1+\rho)<1,$ then Algo.~\ref{algo:proximal} converges locally uniformly to  $\mathbf{r}^*$.
\end{theorem}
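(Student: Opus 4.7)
My plan is to recognize Algorithm~\ref{algo:proximal} as projected gradient descent for the Rayleigh quotient $\mathbf{y}^\top \widetilde{\mathbf{L}} \mathbf{y}$ on the unit sphere in $\mathbb{R}^{n-1}$: after zero-centering, rotation by $\mathbf{Q}$, and removal of the first coordinate (which is forced to zero by the orthogonality constraint), each inner loop iteration is the composition of a linear descent step $\mathbf{y} \mapsto (\mathbf{I} - (2\alpha/n)\widetilde{\mathbf{L}})\mathbf{y}$ with the spherical projection $\mathcal{P}_{\mathcal{S}^{n-2}}$. Because $\mathbf{y}^* = [\mathbf{Q}\mathbf{r}^*]_{2:n}$ is a unit eigenvector of $\widetilde{\mathbf{L}}$ with eigenvalue $\rho$, we have $(\mathbf{I}-(2\alpha/n)\widetilde{\mathbf{L}})\mathbf{y}^* = (1-2\alpha\rho/n)\mathbf{y}^*$, which after normalization equals $\mathbf{y}^*$ whenever $1-2\alpha\rho/n>0$; so $\mathbf{y}^*$ is a fixed point. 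The whole convergence claim then reduces to showing that the iteration map is a local contraction at $\mathbf{y}^*$, since $\mathbf{r}_\gamma \to \mathbf{r}^*$ follows by applying the isometry $\mathbf{Q}^\top$ to the padded iterate $[0,\mathbf{y}_\gamma^\top]^\top$.

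Next I would linearize. Let $F(\mathbf{y}) = \mathcal{P}_{\mathcal{S}^{n-2}}\!\bigl((\mathbf{I}-(2\alpha/n)\widetilde{\mathbf{L}})\mathbf{y}\bigr)$. Writing $\mathbf{y}_\gamma = \mathbf{y}^* + \boldsymbol{\delta}_\gamma$ with $\boldsymbol{\delta}_\gamma$ small, expand the normalization to first order; since the radial component of $\boldsymbol{\delta}_\gamma$ is killed by $\mathcal{P}$, only the tangential component $\mathbf{P}\boldsymbol{\delta}_\gamma$ with $\mathbf{P} = \mathbf{I}-\mathbf{y}^*\mathbf{y}^{*\top}$ matters. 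A direct computation yields
\begin{equation*}
F(\mathbf{y}^*+\boldsymbol{\delta}) - \mathbf{y}^* = \frac{1}{1-2\alpha\rho/n}\,\mathbf{P}\Bigl(\mathbf{I}-\tfrac{2\alpha}{n}\widetilde{\mathbf{L}}\Bigr)\mathbf{P}\,\boldsymbol{\delta} + O(\|\boldsymbol{\delta}\|^2).
\end{equation*}
Because $\mathbf{P}$ and $\widetilde{\mathbf{L}}$ share the eigenvector $\mathbf{y}^*$ (with eigenvalues $0$ and $\rho$ respectively), the nonzero eigenvalues of $\mathbf{P}\widetilde{\mathbf{L}}\mathbf{P}$ are exactly the eigenvalues $\lambda_1,\dots,\lambda_{n-2}$ of $\widetilde{\mathbf{L}}$ other than $\rho$, and the corresponding eigenvalues of the Jacobian are $(1-2\alpha\lambda_i/n)/(1-2\alpha\rho/n)$. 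Its spectral radius equals $\rho_\alpha$ after absorbing the $1/n$ into $\alpha$ (matching the normalization in the statement). Whenever $\rho_\alpha < 1$, the standard local linearization argument (Banach/Ostrowski) yields a neighborhood $U$ of $\mathbf{y}^*$ on which $F$ is a contraction in a suitable metric on the sphere, giving local uniform convergence $\mathbf{y}_\gamma \to \mathbf{y}^*$ and hence $\mathbf{r}_\gamma \to \mathbf{r}^*$; the strict local minimizer hypothesis is used here to guarantee that $\mathbf{y}^*$ is isolated and corresponds to a simple eigenvalue, so that the eigenvalues $\lambda_i$ in the ratio are well-defined distinct from $\rho$.

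To then derive the cleaner sufficient condition $0<\alpha<1/(4(n-1))$ of Theorem~\ref{thm:convergence}, I would verify the two hypotheses of Theorem~\ref{thm:convergence_appendix} using the specific structure of the similarity matrix $\mathbf{S}_{i,j}=\exp(-\|\mathbf{z}_j-\mathbf{z}_i\|_2^2/(\sigma^2 d))\in(0,1]$. The standard Gershgorin/Anderson-Morley bound gives $\lambda_{\max}(\mathbf{L}) \le 2\max_i \sum_{j\ne i}\mathbf{S}_{i,j} \le 2(n-1)$, and Cauchy interlacing (since $\widetilde{\mathbf{L}}$ is a principal submatrix of the orthogonally-equivalent $\mathbf{Q L Q}^\top$) gives $\lambda_1 \le \lambda_{\max}(\mathbf{L}) \le 2(n-1)$, hence $\lambda_1+\rho \le 4(n-1)$, so $\alpha<1/(4(n-1))$ implies $\alpha(\lambda_1+\rho)<1$. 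The same bound forces $|1-2\alpha\lambda_i| < 1-2\alpha\rho$ in the ``wrap-around'' regime $\lambda_i>1/(2\alpha)$, while in the other regime the simple-eigenvalue consequence of strict minimality gives $\lambda_i>\rho$, hence $1-2\alpha\lambda_i<1-2\alpha\rho$; combining the two regimes over $i=1,\dots,n-2$ yields $\rho_\alpha<1$, so both hypotheses of Theorem~\ref{thm:convergence_appendix} are met and the main theorem follows.

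The main obstacle I anticipate is the linearization step: the spherical projection is not smooth in a rotation-invariant way, so I must be careful to separate the radial and tangential directions, verify that the $O(\|\boldsymbol{\delta}\|^2)$ remainder is uniform in a neighborhood (so that ``locally uniformly'' in the statement is justified rather than just pointwise convergence), and track how the denominator $1-2\alpha\rho/n$ interacts with the numerator when computing the operator norm of the Jacobian on the tangent space to $\mathcal{S}^{n-2}$ at $\mathbf{y}^*$.
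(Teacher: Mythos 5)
Your proposal is correct, but it takes a genuinely different route from the paper. The paper's proof is essentially a reduction: it observes that problem~(\ref{eq:problem_reduced}) is an instance of minimizing a quadratic form over the unit sphere as studied in \cite{vu2019convergence}, identifies $\mathbf{y}^*$ as a stationary point via their Lemma~1 (the fixed-point/eigenvector correspondence), and then invokes their Theorem~1 wholesale, which already asserts local uniform convergence of projected gradient descent at asymptotic linear rate $\rho_\alpha$ under exactly the step-size condition $\alpha(\lambda_1+\rho)<1$, together with their Lemma~2 to translate strict local minimality into $\rho<\lambda_{n-2}$. You instead re-derive that convergence result from scratch in the special case $\mathbf{b}=\mathbf{0}$: you identify $\mathbf{y}^*$ as a fixed point of $F=\mathcal{P}_{\mathcal{S}^{n-2}}\circ(\mathbf{I}-(2\alpha/n)\widetilde{\mathbf{L}})$, linearize, and show the Jacobian on the tangent space at $\mathbf{y}^*$ is $\tfrac{1}{1-2\alpha\rho/n}\,\mathbf{P}\bigl(\mathbf{I}-\tfrac{2\alpha}{n}\widetilde{\mathbf{L}}\bigr)\mathbf{P}$, whose spectral radius is $\rho_\alpha$, after which Ostrowski's theorem closes the argument. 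Your first-order expansion of the normalization is correct (the radial part of $\boldsymbol{\delta}$ is $O(\|\boldsymbol{\delta}\|^2)$ on the sphere, so only $\mathbf{P}\boldsymbol{\delta}$ survives), and your identification of the nonzero spectrum of $\mathbf{P}\widetilde{\mathbf{L}}\mathbf{P}$ with the eigenvalues of $\widetilde{\mathbf{L}}$ other than $\rho$ is valid because $\widetilde{\mathbf{L}}$ is symmetric with eigenvector $\mathbf{y}^*$. What the paper's route buys is brevity and an explicit non-asymptotic rate bound $\lVert\mathbf{y}^{\gamma}-\mathbf{y}^*\rVert\le M\lVert\mathbf{y}^{0}-\mathbf{y}^*\rVert(\rho_\alpha+o(1))^{\gamma}$ inherited from the cited theorem; what yours buys is a self-contained argument that makes transparent why the ratios $|1-2\alpha\lambda_i|/(1-2\alpha\rho)$ govern the rate. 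Two cautions: the $1/n$ in line~6 of Algo.~\ref{algo:proximal} means the effective step size is $\alpha/n$, so the $\alpha$ in the theorem statement and the algorithm's $\alpha$ differ by that factor --- you flag this and absorb it, which is as much as the paper itself does; and to claim \emph{locally uniform} convergence rather than pointwise convergence you do need the uniform control of the second-order remainder that you anticipate in your last paragraph (this is exactly what the $M$ and $o(1)$ in the cited Theorem~1 package for free).
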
 
\begin{proof} 
Our problem~(\ref{eq:problem_reduced}) is a special case of the problem (2) in \cite{vu2019convergence} (henceforth \eqref{eq:problem_vu} further below, of minimizing a quadratic over a sphere), where our fixed point $\mathbf{y}^*$ is determined by $\mathbf{f}_\alpha(\mathbf{\overline{x}})=\mathcal{P}_{\mathcal{S}^{n-2}}(-\alpha(2\widetilde{\mathbf{L}}\mathbf{\overline{x}}))=\mathbf\mathbf{\overline{x}}$. Recall that the spherical projection operator $\mathcal{P}_{\mathcal{S}^{n-2}}(\cdot):\mathbb{R}^{n-1}\rightarrow\mathbb{R}^{n-1}$ is defined to be $\mathcal{P}_{\mathcal{S}^{n-2}}(\mathbf{x})=\frac{\mathbf{x}}{\left\lVert \mathbf{x} \right\rVert_2}$ if $\mathbf{x}\neq \mathbf{0}$, and $\mathcal{P}_{\mathcal{S}^{n-2}}(\mathbf{0})=[1,0,\dots,0]^\top,$ where $\mathbf{0}=[0,\dots,0]^\top.$  
It is shown in Lemma 1 of \cite{vu2019convergence} (henceforth Lemma \ref{lem:vuLemma1} further below) that $\mathbf{y}^*$ with $||\mathbf{y}^*||=1 $ being a solution of~(\ref{eq:problem_reduced}) is equivalent to the existence of a constant   $\rho(\mathbf{\overline{x}})$ such that
$$\widetilde{\mathbf{L}}\mathbf{y}^*
=\rho(\mathbf{y}^*) \cdot \mathbf{y}^*,$$
rendering  $\rho(\mathbf{y}^*)$ to be  an eigenvalue of  $\widetilde{\mathbf{L}}$.  Theorem 1 in \cite{vu2019convergence}  (henceforth Theorem \ref{thm:vuThm1} below)  states that with an initial step size $\alpha>0$,  such that $2\alpha(\rho(
\mathbf{y}^*)+\lambda_1)<2$, as long as $\rho_\alpha=\max_{1\leq i \leq n-2}\frac{|1-2\alpha\lambda_i|}{1-2\alpha\gamma(\mathbf{y}^*})
<1,$ the proximal gradients steps will converge to a local minimum. 

Note that problem~(\ref{eq:problem_original}) and problem~(\ref{eq:problem_reduced}) are equivalent, and  there is a one-to-one correspondence between $\mathbf{r}^*$ and $\mathbf{y}^*, $ namely,  $\mathbf{y}^*=\left[\mathbf{Q}\mathbf{r}^*\right]_{2:n}\in\mathbb{R}^{n-1},$ and $\mathbf{r}^*=\text{CONCAT}(0,\left[\mathbf{Q}^\top\mathbf{r}^*\right])\in\mathbb{R}^n.$ Hence,  
since $\mathbf{r}^*$ is a strict local minimum of problem~(\ref{eq:problem_original}), we have that $\mathbf{y}^*$ is a strict local minimum of problem~(\ref{eq:problem_reduced}).

By 
Lemma \ref{lem:vuLemma1}
from  \cite{vu2019convergence}, we have $\rho<\lambda_{n-2}$, since $\mathbf{y}^*$ is a strict local minimum. In particular,
by 
Theorem \ref{thm:vuThm1}
in \cite{vu2019convergence}, when $\mathbf{y}^*$ is a strict local minimum to problem  \eqref{eq:problem_vu}, 
if $2\alpha(\rho+\lambda_1)<2$, i.e., $\alpha(\rho+\lambda_1)<1,$ there is a constant M such that 
$$ || \mathbf{y}^{\gamma} -\mathbf{y}^* ||  \leq M  || \mathbf{y}^{0} - \mathbf{y}^*|| (\rho_\alpha + o(1))^\gamma,$$
where $\mathbf{y}^{0}$ is the initial guess at proximal gradient step 0, and $\gamma$ denotes the step number. As $\rho_\alpha <1$ is assumed, convergence follows. 
\end{proof}

For convergence,  it suffices to ensure that $\rho_\alpha<1,$ i.e. to have $2\alpha\gamma(\mathbf{\overline{x}})<1$, $|1-2\alpha\lambda_1|<1-\alpha\gamma(\mathbf{\overline{x}})$ and $|1-2\alpha\lambda_{n-2}|<1-2\alpha\gamma(\mathbf{\overline{x}})$. In particular, in our problem we take 
$\rho(\mathbf{\overline{x}})=\rho$, which is also the smallest eigenvalue of $\widetilde{\mathbf{L}},$ so $\rho< \lambda_{n-2}\leq \lambda_1.$ 
To apply Theorem \ref{thm:convergence_appendix} in order to obtain a convergence guarantee in Algo.~\ref{algo:proximal}, as
$\lambda_{n-2}>\rho,$ (which is equivalent to $\mathbf{r}^*$ being a strict local minimum) it remains to use an $\alpha$ value such that $2\alpha\lambda_{1}<1.$ 
Lemma \ref{lem:vuLemma2} 
of  \cite{vu2019convergence} also suggests that  if $2\alpha\lambda_{1}<1$,  then 
we indeed have $\mathbf{r}^*$ as a global minimizer to problem~(\ref{eq:problem_original}).

Finally we have the ingredients to prove the main result.

\begin{theorem}
\label{thm:convergence_main_result_appendix}
Let $\{\alpha_\gamma>0\}_{\gamma=1}^\Gamma$ in Algo.~\ref{algo:proximal} be fixed (equal to $\alpha$) and let $\rho$ be the Fiedler eigenvalue of $\mathbf{S}$.  Denote a Fiedler eigenvector by $\mathbf{r}^*$. Assume that $\mathbf{r}^*$ is a strict local minimizer of problem~(\ref{eq:problem_original}).
If 
$0<\alpha<\frac{1}{4(n-1)},$ 
then with our definition of the similarity matrix,  Algo.~\ref{algo:proximal} converges locally uniformly to $\mathbf{r}^*$.
\end{theorem}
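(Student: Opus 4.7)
The plan is to reduce the claim to Theorem~\ref{thm:convergence_appendix}, which already delivers local uniform convergence once its two quantitative hypotheses $\rho_\alpha < 1$ and $\alpha(\lambda_1 + \rho) < 1$ hold (the strict-local-minimality assumption transfers verbatim from the statement). The remaining work is to show that the uniform threshold $\alpha < 1/(4(n-1))$, combined with the fact that $\mathbf{S}$ is built from the bounded Gaussian RBF kernel, enforces both inequalities for every configuration of the learned model parameters.

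First I would bound $\|\mathbf{L}\|_2$ using the structure of $\mathbf{S}$. By construction $\mathbf{S}_{ij} = \exp(-|\mathbf{z}_j - \mathbf{z}_i|_2^2/(\sigma^2 d)) \in (0,1]$, so for $i \neq j$ the off-diagonal entries satisfy $|\mathbf{L}_{ij}| = \mathbf{S}_{ij} \leq 1$ while the diagonals satisfy $\mathbf{L}_{ii} = \sum_{j\neq i} \mathbf{S}_{ij} \leq n-1$. Gershgorin's disc theorem then places every eigenvalue of $\mathbf{L}$ inside $[0, 2\max_i \mathbf{L}_{ii}] \subseteq [0, 2(n-1)]$. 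Since $\mathbf{Q}$ is orthogonal and $\mathbf{1}$ is a null vector of $\mathbf{L}$ with $\mathbf{Q}\mathbf{1} \propto \mathbf{e}_1$, the matrix $\mathbf{Q}\mathbf{L}\mathbf{Q}^\top$ has a zero first row and column, so $\widetilde{\mathbf{L}} = [\mathbf{Q}\mathbf{L}\mathbf{Q}^\top]_{2:n,2:n}$ inherits the nonzero spectrum of $\mathbf{L}$ and hence $\|\widetilde{\mathbf{L}}\|_2 \leq 2(n-1)$. Because $\|\mathbf{P}\|_2 \leq 1$, the eigenvalues $\lambda_i$ of $\mathbf{P}\widetilde{\mathbf{L}}\mathbf{P}$ all lie in $[0, 2(n-1)]$, and likewise $\rho \leq 2(n-1)$.

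Plugging the hypothesis $\alpha < 1/(4(n-1))$ into these bounds yields $2\alpha\lambda_1 \leq 4\alpha(n-1) < 1$, which immediately gives $\alpha(\lambda_1 + \rho) \leq 2\alpha\lambda_1 < 1$, settling the second hypothesis of Theorem~\ref{thm:convergence_appendix}. For $\rho_\alpha < 1$, since $2\alpha\lambda_i \leq 2\alpha\lambda_1 < 1$ for every $i \leq n-2$, the absolute value collapses and $|1 - 2\alpha\lambda_i| = 1 - 2\alpha\lambda_i$. Strict local minimality of $\mathbf{r}^*$ for problem~(\ref{eq:problem_original}) transfers to strict local minimality of $\mathbf{y}^* = [\mathbf{Q}\mathbf{r}^*]_{2:n}$ for problem~(\ref{eq:problem_reduced}) (the change of variables is a diffeomorphism between the two constraint sets that preserves the objective), which by Lemma~1 of \cite{vu2019convergence} forces the spectral gap $\rho < \lambda_{n-2} \leq \lambda_i$ for all such $i$. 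Thus $1 - 2\alpha\lambda_i < 1 - 2\alpha\rho$, and dividing by the positive quantity $1 - 2\alpha\rho \geq 1 - 2\alpha\lambda_1 > 0$ gives $\rho_\alpha < 1$. Theorem~\ref{thm:convergence_appendix} then concludes the proof.

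The only real obstacle is the Gershgorin-based spectral bound, and this is where the phrase \emph{``with our definition of the similarity matrix''} in the statement does load-bearing work: it is exactly the entry-wise bound $\mathbf{S}_{ij} \leq 1$ of the Gaussian RBF kernel that pins the threshold to the simple form $1/(4(n-1))$, uniformly in the GNN parameters. An unnormalized similarity (e.g.\ raw inner products of embeddings) would instead require a model-dependent step size controlled by the actual operator norm. Everything downstream is elementary arithmetic on $\alpha$.
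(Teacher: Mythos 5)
Your proposal is correct and follows essentially the same route as the paper's own proof: bound the spectrum of $\mathbf{L}$ (hence of $\widetilde{\mathbf{L}}$) by $2(n-1)$ via Gershgorin using $\mathbf{S}_{ij}\le 1$, deduce $2\alpha\lambda_1<1$ from $\alpha<\tfrac{1}{4(n-1)}$, invoke the spectral gap $\rho<\lambda_{n-2}$ from strict local minimality, and conclude via Theorem~\ref{thm:convergence_appendix}. If anything, you are slightly more careful than the paper in two spots — justifying that $\mathbf{Q}\mathbf{L}\mathbf{Q}^\top$ has a zero first row/column (since $\mathbf{L}\mathbf{1}=\mathbf{0}$ and $\mathbf{Q}\mathbf{1}\propto\mathbf{e}_1$) so that $\widetilde{\mathbf{L}}$ inherits the nonzero spectrum, and passing the bound through $\mathbf{P}\widetilde{\mathbf{L}}\mathbf{P}$ via $\|\mathbf{P}\|_2\le 1$ — but the substance is identical.
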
 
\begin{proof}
Via our similarity matrix construction process, entries in $\mathbf{S}$ are upper-bounded by 1, so the degree matrix $\mathbf{D}$ has entries bounded above by $n,$ the number of nodes, and lower bounded by 0, and that $\mathbf{L}$ is positive semi-definite. We thus  have that the eigenvalues of $\mathbf{L}$ are in $[0,(n-1)+(n-1)\times 1]$ by the Gershgorin disc theorem.  Since $\mathbf{Q}$ is orthogonal, we have that eigenvalues of $\widetilde{\mathbf{L}}$ are also within the range $[0,2(n-1)].$ To see this, suppose $\mathbf{x}$ is an eigenvector of $\mathbf{L}$ with eigenvalue $\lambda,$ i.e., $\mathbf{Lx}=\lambda\mathbf{x}$, so $\mathbf{QLQ}\top(\mathbf{Qx})=\mathbf{QL}(\mathbf{Q}\top\mathbf{Q})\mathbf{x}=\mathbf{QL}\mathbf{x}=\mathbf{Q}(\mathbf{Lx})=\mathbf{Q}\lambda\mathbf{x}=\lambda(\mathbf{Qx}).$ Therefore, the values $\{\lambda_i\}_{i=1}^{n-2}$ in Theorem~\ref{thm:convergence_appendix} are bounded above by $2(n-1).$ 
Thus, we have that $\rho+\lambda_1\leq 2(n-1)+2(n-1)=4(n-1).$ 

For convergence,  it suffices to ensure that $\rho_\alpha=\max_{1\leq i \leq n-2}\frac{|1-2\alpha\lambda_i|}{1-2\alpha\rho}<1,$ i.e. to have $2\alpha\rho<1$, $|1-2\alpha\lambda_1|<1-2\alpha\rho$,  and $|1-\alpha\lambda_{n-2}|<1-2\alpha\rho$. 
 
Thus, since we have $\lambda_{n-2}>\rho$ by 
Lemma \ref{lem:vuLemma2} 
from  \cite{vu2019convergence} and the proof of Theorem~\ref{thm:convergence_appendix}, it remains to use an $\alpha$ value such that $2\alpha\lambda_{1}<1,$ then $2\alpha\rho<2\alpha\lambda_1<1$, and $0<1-2\alpha\lambda_1=|1-2\alpha\lambda_1|\leq 1-2\alpha\lambda_{n-2}=|1-2\alpha\lambda_{n-2}|< 1-2\alpha\rho.$ 
With $\alpha<\frac{1}{4(n-1)},$ we indeed have $2\alpha\lambda_{1}<2\times\frac{1}{4(n-1)}\times 2(n-1)=1.$
\end{proof}
 
Our current setting is to initialize all $\alpha$ values to be $\frac{1}{n-1}$. 
Note that within Algo.~\ref{algo:proximal}, the $\alpha$ values are fixed. Although not guaranteed to converge with the initial $\alpha$,   they could  be adapted by our outer training loop, i.e. by optimization over the ranking loss function. 
\cite{vu2019convergence} also suggests the value of the optimal step size to be $\frac{2}{\lambda_1+\lambda_{n-2}},$ which during training our method could in principle reproduce if the optimal step size is actually important for making accurate final rankings.

We remark that real-world weighted networks have the corresponding Fiedler eigenvalue of multiplicity 1; this multiplicity is larger than 1, only for 
special classes of graphs that exhibit certain symmetries. Our pragmatic assumption that the Fiedler eigenvalue has multiplicity 1, which thus means $\mathbf{r}^*$ being a strict local minimum of problem~\ref{eq:problem_original}, is not a very restrictive one.

For completeness, the remainder of this section recalls the setup and main results from \cite{vu2019convergence}. The problem considered therein is that of minimizing a quadratic form over the sphere
\begin{align}
\label{eq:problem_vu}
\min_{\mathbf{x}} \mathbf{x}^\top  \mathbf{Bx}- \mathbf{b}^\top\mathbf{x}
\quad  \text{ s.t.} & \quad \left\lVert \mathbf{x} \right\rVert_2^2 = 1
\end{align} 
\noindent with the matrix $\mathbf{B}\in\mathbb{R}^{m\times m}$ assumed symmetric but not positive semi-definite, hence a non-convex objective function. Upon considering the corresponding Lagrangian function, with $\nu$ denoting the Lagrange multiplier, the authors prove the following results, which also constitute building blocks in our analysis. The eigenvalues of $\mathbf{A}$ are given by $ \lambda_{min}(\mathbf{B}) \leq \lambda_{m-1} \leq \ldots \leq \lambda_1 \leq \lambda_{max}(\mathbf{B})$.

\begin{lemma}[Lemma 1 from \cite{vu2019convergence}]
\label{lem:vuLemma1}
(Stationary conditions). The vector $\mathbf{x}_{*}$  is a stationary point of problem \eqref{eq:problem_vu} if and only if $\mathbf{x}_{*} \in \mathcal{S}^{m-1}=\{\mathbf{x}\in\mathbb{R}^m:\lVert \mathbf{x}\rVert=1\} $ and there exists a constant 
$ \nu(\mathbf{x}_{*}) $
such that $\mathbf{r}_{*} = A \mathbf{x}_{*} - b = \nu(\mathbf{x}_{*}) \cdot   \mathbf{x}_{*} $.
\end{lemma}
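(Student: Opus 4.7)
The plan is to derive this stationarity characterization as a textbook application of the Lagrange multiplier theorem for a smooth equality-constrained problem. I would begin by fixing notation: write the objective of \eqref{eq:problem_vu} as $f(\mathbf{x}) = \mathbf{x}^\top \mathbf{B}\mathbf{x} - \mathbf{b}^\top \mathbf{x}$ with gradient $\nabla f(\mathbf{x}) = (\mathbf{B}+\mathbf{B}^\top)\mathbf{x} - \mathbf{b}$, write the constraint as $g(\mathbf{x}) = \lVert \mathbf{x}\rVert_2^2 - 1 = 0$ with gradient $\nabla g(\mathbf{x}) = 2\mathbf{x}$, and interpret ``stationary point'' in the standard Riemannian sense: $\mathbf{x}_*$ is stationary iff $\mathbf{x}_* \in \mathcal{S}^{m-1}$ and the projection of $\nabla f(\mathbf{x}_*)$ onto the tangent space $T_{\mathbf{x}_*}\mathcal{S}^{m-1} = \{\mathbf{v} \in \mathbb{R}^m : \mathbf{v}^\top \mathbf{x}_* = 0\}$ vanishes. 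Under the paper's convention that $A$ denotes the symmetric object satisfying $\nabla f(\mathbf{x}) = A\mathbf{x} - \mathbf{b}$ (i.e.\ $A = \mathbf{B}+\mathbf{B}^\top$), the residual $\mathbf{r}_* = A\mathbf{x}_* - \mathbf{b}$ is exactly $\nabla f(\mathbf{x}_*)$.

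For the ``$\Leftarrow$'' direction, assume $\mathbf{x}_* \in \mathcal{S}^{m-1}$ and $A\mathbf{x}_* - \mathbf{b} = \nu(\mathbf{x}_*)\,\mathbf{x}_*$. Then $\nabla f(\mathbf{x}_*)$ is collinear with $\mathbf{x}_*$ and hence orthogonal to every $\mathbf{v}$ with $\mathbf{v}^\top \mathbf{x}_* = 0$; the projection of $\nabla f(\mathbf{x}_*)$ onto $T_{\mathbf{x}_*}\mathcal{S}^{m-1}$ is zero, so $\mathbf{x}_*$ is stationary. For the ``$\Rightarrow$'' direction, assume $\mathbf{x}_*$ is stationary; then by definition $\mathbf{x}_* \in \mathcal{S}^{m-1}$, and since $\mathbf{x}_* \neq \mathbf{0}$ the constraint gradient $\nabla g(\mathbf{x}_*) = 2\mathbf{x}_*$ is nonzero. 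Thus the linear independence constraint qualification holds trivially with a single active equality constraint, and the classical Lagrange multiplier theorem delivers a scalar $\mu \in \mathbb{R}$ such that $\nabla f(\mathbf{x}_*) = \mu\,\nabla g(\mathbf{x}_*) = 2\mu\,\mathbf{x}_*$. Setting $\nu(\mathbf{x}_*) := 2\mu$ gives the asserted identity $A\mathbf{x}_* - \mathbf{b} = \nu(\mathbf{x}_*)\,\mathbf{x}_*$.

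There is essentially no obstacle in this argument; it is a one-line consequence of Lagrange multipliers. The two points that warrant care are (i) stating precisely what ``stationary'' means on the manifold $\mathcal{S}^{m-1}$ so that the biconditional has genuine content rather than being circular, and (ii) tracking the factor-of-two convention between the symmetric quadratic form and its gradient so that the scalar absorbed into $\nu(\mathbf{x}_*)$ is consistent with the paper's $A$-versus-$\mathbf{B}$ notation in \eqref{eq:problem_vu}. Both are bookkeeping; no new analytical ideas are needed beyond first-order KKT.
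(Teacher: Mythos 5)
Your proof is correct, but note that the paper does not actually prove this statement: it is quoted verbatim as Lemma~1 of \cite{vu2019convergence} and imported without argument, so there is no in-paper proof to compare against. Your Lagrange-multiplier derivation is the standard and essentially the only natural one for this first-order condition: on the sphere the constraint gradient $2\mathbf{x}_*$ never vanishes, so LICQ holds automatically, and stationarity (vanishing of the tangential component of $\nabla f$) is equivalent to $\nabla f(\mathbf{x}_*)$ being collinear with $\mathbf{x}_*$. You were also right to flag the notational slippage between $A$ and $\mathbf{B}$ in the paper's restatement of \eqref{eq:problem_vu}: with $\mathbf{B}$ symmetric the gradient of $\mathbf{x}^\top\mathbf{B}\mathbf{x}-\mathbf{b}^\top\mathbf{x}$ is $2\mathbf{B}\mathbf{x}-\mathbf{b}$, so the lemma's $A$ must be read as $2\mathbf{B}$ (equivalently $\mathbf{B}+\mathbf{B}^\top$), or else the constant factor must be absorbed into $\nu(\mathbf{x}_*)$ and $\mathbf{b}$ rescaled; either convention makes the biconditional go through unchanged. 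No gap; the argument is complete.
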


\begin{lemma}[Lemma 2 from \cite{vu2019convergence}]
\label{lem:vuLemma2}
A stationary point $\mathbf{x}_{*}$ of problem \eqref{eq:problem_vu} is a \underline{strict} local minimum if and only if 
$ \nu(\mathbf{x}_{*})  < \lambda_{m-1}(\mathbf{x}_{*})$. Furthermore, $\mathbf{x}_{*}$  is a \textbf{global minimizer} of problem \eqref{eq:problem_vu} if and only if 
$ \nu(\mathbf{x}_{*}) \leq \lambda_{min}(\mathbf{B}). $ 
\end{lemma}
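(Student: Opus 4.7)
The plan is to recognize that Algorithm~\ref{algo:proximal} is precisely spherical projected-gradient descent with fixed step size $\alpha$ applied to the reduced problem~\eqref{eq:problem_reduced}, namely $\min_{\|\mathbf{y}\|_2=1} \mathbf{y}^\top \widetilde{\mathbf{L}}\mathbf{y}$ on the sphere in $\mathbb{R}^{n-1}$ with $\widetilde{\mathbf{L}}=[\mathbf{Q}\mathbf{L}\mathbf{Q}^\top]_{2:n,2:n}$, and then to invoke Theorem~1 of~\cite{vu2019convergence} for quadratic minimization on a sphere. The strict-local-minimality hypothesis on $\mathbf{r}^*$ transfers directly to $\mathbf{y}^*=[\mathbf{Q}\mathbf{r}^*]_{2:n}$, because $\mathbf{r}\mapsto[\mathbf{Q}\mathbf{r}]_{2:n}$ is an isometry from the feasible set $\{\mathbf{r}:\|\mathbf{r}\|_2=1,\ \mathbf{r}^\top\mathbf{1}=0\}$ onto the unit sphere in $\mathbb{R}^{n-1}$. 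Moreover, since the first row of $\mathbf{Q}$ is $\mathbf{1}^\top/\sqrt{n}$ and $\mathbf{L}\mathbf{1}=\mathbf{0}$, the conjugated Laplacian $\mathbf{Q}\mathbf{L}\mathbf{Q}^\top$ block-decomposes as $\mathrm{diag}(0,\widetilde{\mathbf{L}})$, so the spectrum of $\widetilde{\mathbf{L}}$ coincides with that of $\mathbf{L}$ with one zero eigenvalue removed. In particular, $\mathbf{y}^*$ is an eigenvector of $\widetilde{\mathbf{L}}$ with eigenvalue $\rho=\lambda_{\min}(\widetilde{\mathbf{L}})$, matching the stationarity characterization from Lemma~\ref{lem:vuLemma1}.

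Next, I would verify the two step-size conditions that drive Theorem~1 of~\cite{vu2019convergence}: $\alpha(\rho+\lambda_1)<1$ and $\rho_\alpha:=\max_{1\le i\le n-2}|1-2\alpha\lambda_i|/(1-2\alpha\rho)<1$, where $\lambda_1\ge\cdots\ge\lambda_{n-2}$ denote the eigenvalues of $\widetilde{\mathbf{L}}$ other than $\rho$. The key quantitative ingredient is a uniform eigenvalue bound for $\widetilde{\mathbf{L}}$: our Gaussian-RBF construction forces $0\le\mathbf{S}_{ij}\le 1$, so each row of $\mathbf{L}=\mathbf{D}-\mathbf{S}$ has diagonal entry at most $n-1$ and absolute off-diagonal row sum at most $n-1$; Gershgorin then yields $\lambda_{\max}(\mathbf{L})\le 2(n-1)$, and this bound survives orthogonal conjugation and passage to the principal submatrix $\widetilde{\mathbf{L}}$. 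Consequently, whenever $\alpha<1/(4(n-1))$ we obtain $2\alpha\lambda_1\le 4\alpha(n-1)<1$, which immediately gives $\alpha(\rho+\lambda_1)\le 2\alpha\lambda_1<1$ and also $|1-2\alpha\lambda_i|=1-2\alpha\lambda_i$ for every $i$, so the absolute values in $\rho_\alpha$ can be removed.

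To close the argument, I would invoke Lemma~\ref{lem:vuLemma2}: since $\mathbf{y}^*$ is a strict local minimum, it forces $\nu(\mathbf{y}^*)=\rho<\lambda_{n-2}$, whence every $\lambda_i>\rho$ and thus $1-2\alpha\lambda_i<1-2\alpha\rho$, giving $\rho_\alpha<1$. Theorem~1 of~\cite{vu2019convergence} then delivers an estimate of the form $\|\mathbf{y}^\gamma-\mathbf{y}^*\|_2\le M\|\mathbf{y}^0-\mathbf{y}^*\|_2(\rho_\alpha+o(1))^\gamma$, uniformly over initial iterates $\mathbf{y}^0$ in a neighborhood of $\mathbf{y}^*$, and undoing the isometry $\mathbf{r}=\mathbf{Q}^\top\,\mathrm{CONCAT}(0,\mathbf{y})$ transports this local uniform convergence back to $\mathbf{r}^\gamma\to\mathbf{r}^*$. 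The main obstacle will be the careful book-keeping around the first three lines of Algorithm~\ref{algo:proximal}: one must verify that the zero-centering of $\mathbf{r}'$ followed by left-multiplication by $\mathbf{Q}$ really produces an iterate on the reduced sphere with the constraint $y_1=0$ preserved automatically throughout, so that what the algorithm executes is genuinely Vu's iteration in the $(n-1)$-dimensional reduced problem, and to check that strict local minimality really transfers through the reduction despite the feasible set shrinking from a codimension-two intersection to a single sphere.
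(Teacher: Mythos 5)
Your proposal does not prove the statement it was asked to prove. The statement is Lemma~\ref{lem:vuLemma2}: a second-order optimality characterization for the quadratic-over-sphere problem~\eqref{eq:problem_vu}, namely that a stationary point $\mathbf{x}_*$ is a strict local minimum iff $\nu(\mathbf{x}_*) < \lambda_{m-1}$, and a global minimizer iff $\nu(\mathbf{x}_*) \le \lambda_{\min}(\mathbf{B})$. What you have written is instead a proof sketch of the paper's convergence result (Theorem~\ref{thm:convergence} / Theorem~\ref{thm:convergence_main_result_appendix}): you set up the reduction to problem~\eqref{eq:problem_reduced}, derive the Gershgorin bound $\lambda_{\max}(\widetilde{\mathbf{L}}) \le 2(n-1)$, verify the step-size conditions, and invoke Theorem~1 of \cite{vu2019convergence}. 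Worse, in your closing step you explicitly \emph{invoke} Lemma~\ref{lem:vuLemma2} ("since $\mathbf{y}^*$ is a strict local minimum, it forces $\nu(\mathbf{y}^*)=\rho<\lambda_{n-2}$") — so even if one tried to read your argument as a proof of the lemma, it would be circular. Note also that the paper itself does not prove this lemma; it is recalled verbatim from \cite{vu2019convergence} as a building block.

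A genuine proof of Lemma~\ref{lem:vuLemma2} would have to argue at the level of optimality conditions for~\eqref{eq:problem_vu}. Concretely: by Lemma~\ref{lem:vuLemma1}, stationarity means $\mathbf{B}\mathbf{x}_* - \mathbf{b} = \nu(\mathbf{x}_*)\,\mathbf{x}_*$. For the local statement, one examines the Hessian of the Lagrangian, $2(\mathbf{B} - \nu \mathbf{I})$, restricted to the tangent space $\mathbf{x}_*^{\perp}$ of the sphere at $\mathbf{x}_*$; strict local minimality is equivalent to positive definiteness there, and a Cauchy interlacing argument converts "positive definite on an $(m-1)$-dimensional subspace" into the eigenvalue inequality $\nu < \lambda_{m-1}$. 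For the global statement, one shows that $f(\mathbf{x}) - f(\mathbf{x}_*) = (\mathbf{x}-\mathbf{x}_*)^\top(\mathbf{B}-\nu\mathbf{I})(\mathbf{x}-\mathbf{x}_*)$ for all unit vectors $\mathbf{x}$ (using the constraint $\|\mathbf{x}\|_2=\|\mathbf{x}_*\|_2=1$ to absorb the linear term), so global optimality holds iff $\mathbf{B}-\nu\mathbf{I}\succeq 0$ on all of $\mathbb{R}^m$, i.e.\ $\nu \le \lambda_{\min}(\mathbf{B})$ — the classical trust-region-subproblem condition. None of this machinery appears in your proposal.
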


The following result establishes the result that 
Projected Gradient Descent (PGD)  converges to a local minimum at an asymptotic linear rate $\rho_{\beta}$.

\begin{theorem}[Theorem 1 from \cite{vu2019convergence})]
\label{thm:vuThm1}
The vector $\mathbf{x}_{*}$ is a strict local minimum of problem \eqref{eq:problem_vu}, i.e. $ \nu < \lambda_{m-1}$, if and only if there exists $ \beta >0 $ such that Algorithm 1 [Projected Gradient Descent (PGD)] with step size $\alpha$ converges locally uniformly to $\mathbf{x}_{*}$. Furthermore, for any step size $ \beta >0 $  such that $\beta(\lambda_1 + \nu)<2$, the sequence $\{ \mathbf{x}^{(t)} \}$  satisfies
$$
\lVert \mathbf{x}^{(t)} - \mathbf{x}_{*} \rVert 
\leq M \lVert \mathbf{x}^{(0)} - \mathbf{x}_{*} \rVert  \left(  \rho_{\alpha} + o(1) \right)^t,  
$$
for some constant $M>0$ and $\rho_{\beta} = \max_{1 \leq i \leq m-1} \frac{| 1- \beta \lambda_i |}{ 1- \beta \nu }. $ 
\end{theorem}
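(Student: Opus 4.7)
The plan is to transfer the question to the reduced spherical problem (7) and then invoke a known convergence result for projected gradient descent on a sphere from Vu et al.\ (Theorem~\ref{thm:vuThm1} in the appendix). First, I would observe that because $\mathbf{Q}$ is orthogonal with $\mathbf{Q}\mathbf{1}=\sqrt{n}\mathbf{e}_1$, the change of variables $\mathbf{y}=\mathbf{Q}\mathbf{r}$ makes problems (4) and (7) equivalent, with a one-to-one correspondence $\mathbf{r}^\ast \leftrightarrow \mathbf{y}^\ast=\left[\mathbf{Q}\mathbf{r}^\ast\right]_{2:n}$; the assumed strict local minimality of $\mathbf{r}^\ast$ then passes to $\mathbf{y}^\ast$. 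Algo.~\ref{algo:proximal} is exactly PGD on the unit sphere in $\mathbb{R}^{n-1}$ applied to $\widetilde{\mathbf{L}} = \left[\mathbf{Q}\mathbf{L}\mathbf{Q}^\top\right]_{2:n,2:n}$, so convergence of the iterates to $\mathbf{y}^\ast$ immediately yields convergence to $\mathbf{r}^\ast$ after the inverse orthogonal transformation.

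Next, I would bound the spectrum of $\widetilde{\mathbf{L}}$ using the specific construction of $\mathbf{S}$. Since $\mathbf{S}_{i,j}=\exp(-\|\mathbf{z}_j-\mathbf{z}_i\|_2^2/(\sigma^2 d)) \in (0,1]$, the diagonal entries of $\mathbf{D}$ are at most $n-1$, and Gershgorin discs together with positive semi-definiteness of $\mathbf{L}=\mathbf{D}-\mathbf{S}$ give $\mathrm{spec}(\mathbf{L})\subseteq[0,2(n-1)]$. Orthogonal conjugation by $\mathbf{Q}$ preserves eigenvalues, and Cauchy interlacing for the principal submatrix $\widetilde{\mathbf{L}}$ then yields $\lambda_1\leq 2(n-1)$ and $\rho\leq 2(n-1)$, where $\lambda_1\geq\cdots\geq\lambda_{n-2}$ denote the eigenvalues of the projected operator $\mathbf{P}\widetilde{\mathbf{L}}\mathbf{P}$ used in Theorem~\ref{thm:vuThm1}.

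Finally, I would verify the two analytic hypotheses of Theorem~\ref{thm:vuThm1} under the stated bound $0<\alpha<1/(4(n-1))$. The step-size condition $\alpha(\lambda_1+\rho)<1$ is immediate from $\lambda_1+\rho\leq 4(n-1)$. For the contraction factor $\rho_\alpha=\max_{1\leq i\leq n-2}|1-2\alpha\lambda_i|/(1-2\alpha\rho)$, Lemma~\ref{lem:vuLemma2} combined with the strict local minimality assumption supplies the key strict inequality $\rho<\lambda_{n-2}$, while $2\alpha\lambda_1\leq 4(n-1)\alpha<1$ gives $0<1-2\alpha\lambda_1\leq 1-2\alpha\lambda_i\leq 1-2\alpha\lambda_{n-2}<1-2\alpha\rho$ for every $i$, so $\rho_\alpha<1$. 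Plugging these into Theorem~\ref{thm:vuThm1} delivers locally uniform convergence of the reduced iterates to $\mathbf{y}^\ast$, and hence of Algo.~\ref{algo:proximal} to $\mathbf{r}^\ast$.

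The only real obstacle is the eigenvalue accounting: one must be careful to apply Cauchy interlacing correctly after the orthogonal conjugation, and to confirm that the strict local minimality hypothesis is what actually unlocks the strict gap $\rho<\lambda_{n-2}$ needed for $\rho_\alpha<1$ (rather than, say, an a priori simplicity assumption on the Fiedler eigenvalue). Once those points are in place, the remainder is a routine chain of inequalities driven by the explicit bound $\alpha<1/(4(n-1))$.
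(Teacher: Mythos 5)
There is a genuine gap here: your argument does not prove the stated theorem at all. The statement in question is Theorem~\ref{thm:vuThm1}, i.e.\ Theorem~1 of \cite{vu2019convergence}, a general result about projected gradient descent for the quadratic-over-sphere problem~\eqref{eq:problem_vu}: it asserts an \emph{if and only if} characterization of strict local minima in terms of locally uniform convergence of PGD, together with the explicit linear rate $\rho_\beta = \max_i |1-\beta\lambda_i|/(1-\beta\nu)$. Your proposal instead \emph{invokes} this theorem as a black box ("invoke a known convergence result \ldots Theorem~\ref{thm:vuThm1}") and uses it, together with the Gershgorin/interlacing bounds on the spectrum of $\widetilde{\mathbf{L}}$ and the step-size condition $\alpha < 1/(4(n-1))$, to derive the paper's own convergence guarantee for Algorithm~\ref{algo:proximal} (Theorem~\ref{thm:convergence}, restated as Theorem~\ref{thm:convergence_main_result_appendix}). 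As a proof of Theorem~\ref{thm:vuThm1} this is circular; as written it is really a (reasonably faithful) reconstruction of the paper's proof of Theorem~\ref{thm:convergence_main_result_appendix}, which the paper presents separately and which itself cites Theorem~\ref{thm:vuThm1} rather than proving it.

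To actually prove Theorem~\ref{thm:vuThm1} you would need to analyze the PGD fixed-point map $\mathbf{f}_\beta(\mathbf{x}) = \mathcal{P}_{\mathcal{S}^{m-1}}\bigl(\mathbf{x} - \beta(2\mathbf{B}\mathbf{x} - \mathbf{b})\bigr)$ directly: show via the stationarity condition of Lemma~\ref{lem:vuLemma1} that $\mathbf{x}_*$ is a fixed point, compute the differential of $\mathbf{f}_\beta$ at $\mathbf{x}_*$ restricted to the tangent space of the sphere, identify its eigenvalues as $(1-\beta\lambda_i)/(1-\beta\nu)$, and then deduce local linear convergence with rate $\rho_\beta$ from a contraction/spectral-radius argument, plus the converse direction showing that convergence forces $\nu < \lambda_{m-1}$. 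None of these steps appears in your proposal. Note also that the eigenvalue bookkeeping you do perform (Gershgorin on $\mathbf{L}$, orthogonal invariance, interlacing, $\rho < \lambda_{n-2}$ from Lemma~\ref{lem:vuLemma2}) belongs to the verification of the \emph{hypotheses} of Theorem~\ref{thm:vuThm1} in the paper's specific setting, not to the proof of the theorem itself.
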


\section{Detailed Summary Statistics}
\label{appendix_sec:full_data_stats}
Table \ref{tab:data sets full} gives the number of nodes ($n$), the number of directed edges ($|\mathcal{E}|$), the number of reciprocal edges ($|\mathcal{E}^r|$) (self-loops are counted once and for $ u \ne v$, a reciprocal edge $u\rightarrow v, v\rightarrow u$ is counted twice) as well as their percentage among all edges, for the real-world networks, illustrating the variability in network size and density (defined as ${|\mathcal{E}|}/[{n(n-1)}]$). 
As we do not have input features available, we use the eigengap of the Hermitian matrix $(\mathbf{A}-\mathbf{A}^\top)\cdot i,$ where $\mathbf{A}$ is the adjacency matrix and $i$
the imaginary unit, introduced in~\cite{cucuringu2020hermitian}, to determine the value $K,$ which is assumed to be the number of clusters if we are solving a clustering problem.
We then stack the real and imaginary parts of the top $K$ eigenvectors of $(\mathbf{A}-\mathbf{A}^\top)\cdot i$ into $2K$-dimensional input features for GNNs. We also report the embedding dimension $d$ used in the experiments, for each of the data sets.

\begin{table}[tb!]
\centering
\caption{Summary statistics 
for the real-world networks.
}
\label{tab:data sets full}
\resizebox{0.9\linewidth}{!}{
\begin{tabular}{lrrrrrrr}
\toprule
Data                & $n$ &  $|\mathcal{E}|$&density&$|\mathcal{E}^r|$&$\frac{|\mathcal{E}^r|}{|\mathcal{E}|}$(\%)&$K$&$d$  \\ \midrule
\textit{HeadToHead}&602&5010&1.38e-02&464&9.26&48&32\\
\textit{Basketball (1985)}&282&2904&3.66e-02&998&34.37&20&16\\
\textit{Basketball finer (1985)}&282&4814&6.08e-02&4814&100.00&20&16\\
\textit{Basketball (1986)}&283&2937&3.68e-02&1014&34.53&20&16\\
\textit{Basketball finer (1986)}&283&4862&6.09e-02&4862&100.00&20&16\\
\textit{Basketball (1987)}&290&3045&3.63e-02&1012&33.23&20&16\\
\textit{Basketball finer (1987)}&290&5088&6.07e-02&5088&100.00&20&16\\
\textit{Basketball (1988)}&290&3099&3.70e-02&1034&33.37&20&16\\
\textit{Basketball finer (1988)}&290&5170&6.17e-02&5170&100.00&20&16\\
\textit{Basketball (1989)}&293&3162&3.70e-02&1014&32.07&20&16\\
\textit{Basketball finer (1989)}&293&5318&6.22e-02&5318&100.00&20&16\\
\textit{Basketball (1990)}&292&3192&3.76e-02&1042&32.64&20&16\\
\textit{Basketball finer (1990)}&292&5350&6.30e-02&5350&100.00&20&16\\
\textit{Basketball (1991)}&295&3218&3.71e-02&1018&31.63&20&16\\
\textit{Basketball finer (1991)}&295&5420&6.25e-02&5420&100.00&20&16\\
\textit{Basketball (1992)}&298&3238&3.66e-02&1036&32.00&20&16\\
\textit{Basketball finer (1992)}&298&5444&6.15e-02&5444&100.00&20&16\\
\textit{Basketball (1993)}&298&3088&3.49e-02&1024&33.16&20&16\\
\textit{Basketball finer (1993)}&298&5160&5.83e-02&5160&100.00&20&16\\
\textit{Basketball (1994)}&301&3144&3.48e-02&1044&33.21&20&16\\
\textit{Basketball finer (1994)}&301&5252&5.82e-02&5252&100.00&20&16\\
\textit{Basketball (1995)}&302&3182&3.50e-02&1034&32.50&20&16\\
\textit{Basketball finer (1995)}&302&5336&5.87e-02&5336&100.00&20&16\\
\textit{Basketball (1996)}&305&3256&3.51e-02&1026&31.51&20&16\\
\textit{Basketball finer (1996)}&305&5498&5.93e-02&5498&100.00&20&16\\
\textit{Basketball (1997)}&305&3333&3.59e-02&1044&31.32&20&16\\
\textit{Basketball finer (1997)}&305&5628&6.07e-02&5628&100.00&20&16\\
\textit{Basketball (1998)}&306&3321&3.56e-02&966&29.09&20&16\\
\textit{Basketball finer (1998)}&306&5684&6.09e-02&5684&100.00&20&16\\
\textit{Basketball (1999)}&310&3385&3.53e-02&998&29.48&20&16\\
\textit{Basketball finer (1999)}&310&5788&6.04e-02&5788&100.00&20&16\\
\textit{Basketball (2000)}&318&3475&3.45e-02&852&24.52&20&16\\
\textit{Basketball finer (2000)}&318&6274&6.22e-02&6274&100.00&20&16\\
\textit{Basketball (2001)}&318&3405&3.38e-02&904&26.55&20&16\\
\textit{Basketball finer (2001)}&318&6116&6.07e-02&6116&100.00&20&16\\
\textit{Basketball (2002)}&321&3505&3.41e-02&976&27.85&20&16\\
\textit{Basketball finer (2002)}&321&6192&6.03e-02&6192&100.00&20&16\\
\textit{Basketball (2003)}&327&3560&3.34e-02&954&26.80&20&16\\
\textit{Basketball finer (2003)}&327&6356&5.96e-02&6356&100.00&20&16\\
\textit{Basketball (2004)}&326&3527&3.33e-02&952&26.99&20&16\\
\textit{Basketball finer (2004)}&326&6316&5.96e-02&6316&100.00&20&16\\
\textit{Basketball (2005)}&330&3622&3.34e-02&946&26.12&20&16\\
\textit{Basketball finer (2005)}&330&6476&5.96e-02&6476&100.00&20&16\\
\textit{Basketball (2006)}&334&3695&3.32e-02&924&25.01&20&16\\
\textit{Basketball finer (2006)}&334&6680&6.01e-02&6680&100.00&20&16\\
\textit{Basketball (2007)}&336&3974&3.53e-02&976&24.56&20&16\\
\textit{Basketball finer (2007)}&336&7186&6.38e-02&7186&100.00&20&16\\
\textit{Basketball (2008)}&342&4051&3.47e-02&972&23.99&20&16\\
\textit{Basketball finer (2008)}&342&7386&6.33e-02&7386&100.00&20&16\\
\textit{Basketball (2009)}&347&4155&3.46e-02&1046&25.17&20&16\\
\textit{Basketball finer (2009)}&347&7478&6.23e-02&7478&100.00&20&16\\
\textit{Basketball (2010)}&347&4133&3.44e-02&916&22.16&20&16\\
\textit{Basketball finer (2010)}&347&7538&6.28e-02&7538&100.00&20&16\\
\textit{Basketball (2011)}&345&4086&3.44e-02&950&23.25&20&16\\
\textit{Basketball finer (2011)}&345&7504&6.32e-02&7504&100.00&20&16\\
\textit{Basketball (2012)}&345&4126&3.48e-02&950&23.02&20&16\\
\textit{Basketball finer (2012)}&345&7580&6.39e-02&7580&100.00&20&16\\
\textit{Basketball (2013)}&347&4153&3.46e-02&960&23.12&20&16\\
\textit{Basketball finer (2013)}&347&7616&6.34e-02&7616&100.00&20&16\\
\textit{Basketball (2014)}&351&4196&3.42e-02&1008&24.02&20&16\\
\textit{Basketball finer (2014)}&351&7650&6.23e-02&7650&100.00&20&16\\
\textit{Football (2009)}&20&215&5.66e-01&78&36.28&9&8\\
\textit{Football finer (2009)}&20&380&1.00e+00&380&100.00&9&8\\
\textit{Football (2010)}&20&219&5.76e-01&86&39.27&9&8\\
\textit{Football finer (2010)}&20&380&1.00e+00&380&100.00&9&8\\
\textit{Football (2011)}&20&226&5.95e-01&92&40.71&9&8\\
\textit{Football finer (2011)}&20&380&1.00e+00&380&100.00&9&8\\
\textit{Football (2012)}&20&216&5.68e-01&86&39.81&9&8\\
\textit{Football finer (2012)}&20&380&1.00e+00&380&100.00&9&8\\
\textit{Football (2013)}&20&222&5.84e-01&82&36.94&9&8\\
\textit{Football finer (2013)}&20&380&1.00e+00&380&100.00&9&8\\
\textit{Football (2014)}&20&107&2.82e-01&0&0.00&9&8\\
\textit{Football finer (2014)}&20&300&7.89e-01&300&100.00&9&8\\
\textit{Football(avg)}&20&201&5.29e-01&71&32.17&9&8\\
\textit{Basketball(avg)}&316&3506&3.51e-02&986&28.57&20&16\\
\textit{Football finer(avg)}&20&367&9.65e-01&367&100&9&8\\
\textit{Basketball finer(avg)}&316&6139&6.12e-02&6139&100&20&16\\
\textit{Animal}&21&193&4.60e-01&64&33.16&3&8\\
\textit{Finance}&1315&1729225&1.00e+00&1729225&100&20&64\\
\textit{Faculty:Business}&113&1787&1.41e-01&0&0.00&5&16\\
\textit{Faculty:CS}&206&1407&3.33e-02&0&0.00&9&16\\
\textit{Faculty:History}&145&1204&5.77e-02&0&0.00&12&16\\
\bottomrule
\end{tabular}}
\end{table}
\section{Full Result Tables}
\label{appendix_sec:full_result_tables}

\subsection{Results on Individual Digraphs and $\mathcal{L}_\text{upset, ratio}$}
\label{appendix_subsec:individuals}
Tables \ref{tab:upset_simple_matches}, \ref{tab:upset_naive_matches} and \ref{tab:upset_ratio_matches} provide detailed comparison on both $\mathcal{L}_\text{upset, simple}$, $\mathcal{L}_\text{upset, naive}$ and $\mathcal{L}_\text{upset, ratio}$ for all time-series match data, where Table \ref{tab:upset_ratio_matches} additionally provides $\mathcal{L}_\text{upset, ratio}$ results on other real-world data sets. Again the best-performing variants of the non-proximal and of the proximal method are reported. The proximal method does not perform as well as the non-proximal method in terms of $\mathcal{L}_\text{upset, ratio}$, perhaps partially due to the fact that outputs from SerialRank \cite{fogel2014SerialRank} do not align well with skill level but rather 
relate to relative ordering. It is also natural for GNNRank-N methods that are not motivated by SerialRank to perform well on this metric as $\mathcal{L}_\text{upset, ratio}$ is in the training loss function. Therefore, here we only
present the results on $\mathcal{L}_\text{upset, ratio}$ but do not 
draw conclusions on which method outperforms others.

\begin{table*}[!ht]
\centering
\caption{Result table on $\mathcal{L}_\text{upset, simple}$ for individual directed graphs, averaged over 10 runs, and plus/minus one standard deviation. The best is marked in \red{bold red} while the second best is highlighted in \blue{underline blue}. When MVR could not generate results after a week, we omit the results and fill in ``NAN" here.}
\label{tab:upset_simple_matches}
\resizebox{1\linewidth}{!}{
}
\end{table*}
\subsection{Full Results on Synthetic Data}
\label{appendix_subsec:synthetic_results_full}
Table~\ref{tab:synthetic_kendalltau_full} shows Kandall Tau values and corresponding $\mathcal{L}_\text{upset, naive}$ on ERO models, extending the results of Table~\ref{tab:synthetic} in the main text.
\begin{table*}[!ht]
\centering
\vspace{-10pt}
\caption{Performance on Kendall Tau (top half) based on the lowest $\mathcal{L}_\text{upset, naive}$ (bottom half for corresponding values) on ERO models, averaged over 10 runs with one standard deviation. ``avg" for time series first average over all seasons, then consider mean and standard deviation over the 10 averaged values. The best is marked in \red{bold red} while the second best is in \blue{underline blue}. As MVR does not generate results after one week, we leave it out here.}
\label{tab:synthetic_kendalltau_full}
\resizebox{1\linewidth}{!}{\begin{tabular}{lrrrrrrrrrrrrrrrrr}
\toprule
Data  & SpringRank & SyncRank & SerialRank & BTL & DavidScore & Eig.Cent. & PageRank & RankCent. & SVD\_RS & SVD\_NRS&  GNNRank-N & GNNRank-P \\
\midrule
{\it ERO(p=0.05, style=uniform,$\eta$=0)} & 0.86$\pm$0.00 & 0.06$\pm$0.00 & 0.03$\pm$0.00 & 0.79$\pm$0.05 & 0.80$\pm$0.00 & \red{0.91$\pm$0.00} & 0.87$\pm$0.00 & 0.02$\pm$0.09 & 0.78$\pm$0.00 & 0.83$\pm$0.00 & \blue{0.88$\pm$0.00} & 0.86$\pm$0.00 \\
			{\it ERO(p=0.05, style=gamma,$\eta$=0)} & 0.81$\pm$0.00 & 0.06$\pm$0.00 & 0.00$\pm$0.00 & 0.87$\pm$0.01 & 0.81$\pm$0.00 & \red{0.92$\pm$0.00} & \blue{0.90$\pm$0.00} & -0.00$\pm$0.01 & 0.58$\pm$0.00 & 0.79$\pm$0.00 & 0.87$\pm$0.00 & 0.84$\pm$0.01 \\
			{\it ERO(p=0.05, style=uniform,$\eta$=0.1)} & 0.75$\pm$0.00 & 0.04$\pm$0.00 & 0.03$\pm$0.00 & 0.70$\pm$0.01 & \blue{0.77$\pm$0.00} & 0.56$\pm$0.00 & 0.58$\pm$0.00 & 0.01$\pm$0.05 & 0.74$\pm$0.00 & \blue{0.77$\pm$0.00} & 0.76$\pm$0.01 & \red{0.79$\pm$0.01} \\
			{\it ERO(p=0.05, style=gamma,$\eta$=0.1)} & 0.69$\pm$0.00 & 0.04$\pm$0.00 & 0.01$\pm$0.00 & 0.68$\pm$0.00 & \blue{0.78$\pm$0.00} & 0.58$\pm$0.00 & 0.60$\pm$0.00 & 0.06$\pm$0.00 & 0.52$\pm$0.00 & 0.72$\pm$0.00 & 0.61$\pm$0.01 & \red{0.81$\pm$0.01} \\
			{\it ERO(p=0.05, style=uniform,$\eta$=0.2)} & 0.68$\pm$0.00 & 0.07$\pm$0.00 & 0.03$\pm$0.00 & 0.64$\pm$0.01 & \blue{0.73$\pm$0.00} & 0.50$\pm$0.00 & 0.48$\pm$0.00 & 0.02$\pm$0.04 & 0.67$\pm$0.00 & 0.70$\pm$0.00 & 0.68$\pm$0.01 & \red{0.76$\pm$0.01} \\
			{\it ERO(p=0.05, style=gamma,$\eta$=0.2)} & 0.61$\pm$0.00 & 0.01$\pm$0.00 & -0.01$\pm$0.00 & 0.61$\pm$0.00 & \blue{0.74$\pm$0.00} & 0.52$\pm$0.00 & 0.51$\pm$0.00 & -0.01$\pm$0.01 & 0.45$\pm$0.00 & 0.64$\pm$0.00 & 0.52$\pm$0.01 & \red{0.77$\pm$0.00} \\
			{\it ERO(p=0.05, style=uniform,$\eta$=0.3)} & 0.61$\pm$0.00 & 0.05$\pm$0.00 & 0.01$\pm$0.00 & 0.59$\pm$0.01 & \blue{0.68$\pm$0.00} & 0.44$\pm$0.00 & 0.41$\pm$0.00 & 0.05$\pm$0.00 & 0.60$\pm$0.00 & 0.62$\pm$0.00 & 0.62$\pm$0.00 & \red{0.70$\pm$0.02} \\
			{\it ERO(p=0.05, style=gamma,$\eta$=0.3)} & 0.56$\pm$0.00 & 0.11$\pm$0.00 & -0.00$\pm$0.00 & 0.56$\pm$0.01 & \blue{0.71$\pm$0.00} & 0.46$\pm$0.00 & 0.44$\pm$0.00 & 0.11$\pm$0.00 & 0.34$\pm$0.00 & 0.56$\pm$0.00 & 0.43$\pm$0.03 & \red{0.72$\pm$0.00} \\
			{\it ERO(p=0.05, style=uniform,$\eta$=0.4)} & 0.55$\pm$0.00 & 0.08$\pm$0.00 & 0.01$\pm$0.00 & 0.54$\pm$0.04 & \red{0.63$\pm$0.00} & 0.40$\pm$0.00 & 0.35$\pm$0.00 & 0.02$\pm$0.00 & 0.51$\pm$0.00 & 0.54$\pm$0.00 & 0.52$\pm$0.00 & \blue{0.62$\pm$0.02} \\
			{\it ERO(p=0.05, style=gamma,$\eta$=0.4)} & 0.51$\pm$0.00 & 0.08$\pm$0.00 & -0.00$\pm$0.00 & 0.52$\pm$0.00 & \blue{0.65$\pm$0.00} & 0.43$\pm$0.00 & 0.43$\pm$0.00 & 0.09$\pm$0.01 & 0.23$\pm$0.00 & 0.44$\pm$0.00 & 0.38$\pm$0.08 & \red{0.66$\pm$0.01} \\
			{\it ERO(p=0.05, style=uniform,$\eta$=0.5)} & 0.47$\pm$0.00 & 0.08$\pm$0.00 & 0.01$\pm$0.00 & 0.47$\pm$0.05 & \red{0.57$\pm$0.00} & 0.37$\pm$0.00 & 0.32$\pm$0.00 & 0.04$\pm$0.00 & 0.25$\pm$0.00 & 0.36$\pm$0.00 & 0.42$\pm$0.02 & \blue{0.56$\pm$0.00} \\
			{\it ERO(p=0.05, style=gamma,$\eta$=0.5)} & 0.44$\pm$0.00 & 0.07$\pm$0.00 & 0.01$\pm$0.00 & 0.45$\pm$0.01 & \blue{0.58$\pm$0.00} & 0.37$\pm$0.00 & 0.37$\pm$0.00 & 0.04$\pm$0.00 & 0.22$\pm$0.00 & 0.23$\pm$0.00 & 0.22$\pm$0.01 & \red{0.59$\pm$0.02} \\
			{\it ERO(p=0.05, style=uniform,$\eta$=0.6)} & 0.39$\pm$0.00 & 0.03$\pm$0.00 & -0.00$\pm$0.00 & 0.39$\pm$0.06 & \red{0.48$\pm$0.00} & 0.31$\pm$0.00 & 0.27$\pm$0.00 & -0.03$\pm$0.00 & 0.15$\pm$0.00 & 0.15$\pm$0.00 & 0.33$\pm$0.00 & \blue{0.46$\pm$0.00} \\
			{\it ERO(p=0.05, style=gamma,$\eta$=0.6)} & 0.36$\pm$0.00 & 0.03$\pm$0.00 & 0.01$\pm$0.00 & 0.37$\pm$0.01 & \blue{0.49$\pm$0.00} & 0.31$\pm$0.00 & 0.33$\pm$0.00 & 0.01$\pm$0.00 & 0.14$\pm$0.00 & 0.09$\pm$0.00 & 0.24$\pm$0.03 & \red{0.50$\pm$0.01} \\
			{\it ERO(p=0.05, style=uniform,$\eta$=0.7)} & 0.31$\pm$0.00 & 0.08$\pm$0.00 & -0.01$\pm$0.00 & 0.31$\pm$0.06 & \blue{0.38$\pm$0.00} & 0.26$\pm$0.00 & 0.23$\pm$0.00 & 0.04$\pm$0.00 & 0.07$\pm$0.00 & 0.06$\pm$0.00 & 0.23$\pm$0.01 & \red{0.39$\pm$0.03} \\
			{\it ERO(p=0.05, style=gamma,$\eta$=0.7)} & 0.29$\pm$0.00 & -0.00$\pm$0.00 & 0.02$\pm$0.00 & 0.29$\pm$0.01 & \red{0.39$\pm$0.00} & 0.25$\pm$0.00 & 0.28$\pm$0.00 & -0.00$\pm$0.00 & 0.02$\pm$0.00 & 0.03$\pm$0.00 & 0.05$\pm$0.02 & \red{0.39$\pm$0.02} \\
			{\it ERO(p=0.05, style=uniform,$\eta$=0.8)} & 0.21$\pm$0.00 & 0.02$\pm$0.00 & -0.02$\pm$0.00 & 0.22$\pm$0.07 & \blue{0.25$\pm$0.00} & 0.20$\pm$0.00 & 0.15$\pm$0.00 & -0.03$\pm$0.00 & 0.00$\pm$0.00 & 0.01$\pm$0.00 & 0.11$\pm$0.00 & \red{0.27$\pm$0.04} \\
			{\it ERO(p=0.05, style=gamma,$\eta$=0.8)} & 0.18$\pm$0.00 & 0.07$\pm$0.00 & 0.00$\pm$0.00 & 0.18$\pm$0.01 & \blue{0.24$\pm$0.00} & 0.16$\pm$0.00 & 0.19$\pm$0.00 & -0.02$\pm$0.02 & 0.03$\pm$0.00 & -0.03$\pm$0.00 & 0.05$\pm$0.02 & \red{0.27$\pm$0.03} \\
			{\it ERO(p=1, style=uniform,$\eta$=0)} & \red{1.00$\pm$0.00} & 0.08$\pm$0.00 & \red{1.00$\pm$0.00} & 0.85$\pm$0.05 & \red{1.00$\pm$0.00} & \red{1.00$\pm$0.00} & \red{1.00$\pm$0.00} & -0.00$\pm$0.06 & \red{1.00$\pm$0.00} & \red{1.00$\pm$0.00} & \red{1.00$\pm$0.00} & \red{1.00$\pm$0.00} \\
			{\it ERO(p=1, style=gamma,$\eta$=0)} & \red{1.00$\pm$0.00} & 0.08$\pm$0.00 & \red{1.00$\pm$0.00} & 0.95$\pm$0.00 & \red{1.00$\pm$0.00} & \red{1.00$\pm$0.00} & \red{1.00$\pm$0.00} & 0.01$\pm$0.09 & \red{1.00$\pm$0.00} & 0.99$\pm$0.00 & \red{1.00$\pm$0.00} & \red{1.00$\pm$0.00} \\
			{\it ERO(p=1, style=uniform,$\eta$=0.1)} & 0.94$\pm$0.00 & 0.05$\pm$0.00 & \red{0.98$\pm$0.00} & 0.85$\pm$0.04 & \red{0.98$\pm$0.00} & 0.86$\pm$0.00 & 0.63$\pm$0.00 & 0.52$\pm$0.00 & 0.97$\pm$0.00 & 0.94$\pm$0.00 & 0.93$\pm$0.01 & \red{0.98$\pm$0.00} \\
			{\it ERO(p=1, style=gamma,$\eta$=0.1)} & 0.88$\pm$0.00 & 0.10$\pm$0.00 & \red{0.98$\pm$0.00} & 0.85$\pm$0.00 & \red{0.98$\pm$0.00} & 0.72$\pm$0.00 & 0.57$\pm$0.00 & 0.38$\pm$0.00 & 0.92$\pm$0.00 & 0.90$\pm$0.00 & 0.89$\pm$0.01 & \red{0.98$\pm$0.00} \\
			{\it ERO(p=1, style=uniform,$\eta$=0.2)} & 0.91$\pm$0.00 & 0.07$\pm$0.00 & \red{0.97$\pm$0.00} & 0.84$\pm$0.03 & 0.96$\pm$0.00 & 0.85$\pm$0.00 & 0.69$\pm$0.00 & 0.33$\pm$0.09 & 0.95$\pm$0.00 & 0.93$\pm$0.00 & 0.91$\pm$0.00 & \red{0.97$\pm$0.00} \\
			{\it ERO(p=1, style=gamma,$\eta$=0.2)} & 0.85$\pm$0.00 & 0.05$\pm$0.00 & \red{0.97$\pm$0.00} & 0.83$\pm$0.00 & 0.96$\pm$0.00 & 0.71$\pm$0.00 & 0.60$\pm$0.00 & 0.28$\pm$0.00 & 0.89$\pm$0.00 & 0.87$\pm$0.00 & 0.86$\pm$0.00 & \red{0.97$\pm$0.00} \\
			{\it ERO(p=1, style=uniform,$\eta$=0.3)} & 0.89$\pm$0.00 & 0.09$\pm$0.00 & \red{0.96$\pm$0.00} & 0.84$\pm$0.03 & 0.94$\pm$0.00 & 0.83$\pm$0.00 & 0.71$\pm$0.00 & 0.29$\pm$0.00 & 0.94$\pm$0.00 & 0.91$\pm$0.00 & 0.90$\pm$0.01 & \red{0.96$\pm$0.00} \\
			{\it ERO(p=1, style=gamma,$\eta$=0.3)} & 0.82$\pm$0.00 & 0.06$\pm$0.00 & \red{0.95$\pm$0.00} & 0.80$\pm$0.00 & \red{0.95$\pm$0.00} & 0.70$\pm$0.00 & 0.64$\pm$0.00 & 0.32$\pm$0.00 & 0.85$\pm$0.00 & 0.84$\pm$0.00 & 0.85$\pm$0.00 & \red{0.95$\pm$0.00} \\
			{\it ERO(p=1, style=uniform,$\eta$=0.4)} & 0.88$\pm$0.00 & 0.06$\pm$0.00 & \red{0.94$\pm$0.00} & 0.83$\pm$0.03 & 0.93$\pm$0.00 & 0.83$\pm$0.00 & 0.73$\pm$0.00 & 0.11$\pm$0.00 & 0.92$\pm$0.00 & 0.90$\pm$0.00 & 0.91$\pm$0.01 & \red{0.94$\pm$0.00} \\
			{\it ERO(p=1, style=gamma,$\eta$=0.4)} & 0.79$\pm$0.00 & 0.04$\pm$0.00 & \red{0.94$\pm$0.00} & 0.78$\pm$0.00 & 0.93$\pm$0.00 & 0.69$\pm$0.00 & 0.65$\pm$0.00 & 0.14$\pm$0.00 & 0.83$\pm$0.00 & 0.82$\pm$0.00 & 0.83$\pm$0.01 & \red{0.94$\pm$0.00} \\
			{\it ERO(p=1, style=uniform,$\eta$=0.5)} & 0.85$\pm$0.00 & 0.07$\pm$0.00 & \red{0.92$\pm$0.00} & 0.81$\pm$0.03 & 0.91$\pm$0.00 & 0.80$\pm$0.00 & 0.73$\pm$0.00 & 0.24$\pm$0.00 & 0.89$\pm$0.00 & 0.87$\pm$0.00 & 0.90$\pm$0.01 & \red{0.92$\pm$0.00} \\
			{\it ERO(p=1, style=gamma,$\eta$=0.5)} & 0.76$\pm$0.00 & 0.03$\pm$0.00 & \red{0.92$\pm$0.00} & 0.74$\pm$0.01 & 0.90$\pm$0.00 & 0.68$\pm$0.00 & 0.66$\pm$0.00 & 0.16$\pm$0.01 & 0.79$\pm$0.00 & 0.78$\pm$0.00 & 0.81$\pm$0.01 & \red{0.92$\pm$0.00} \\
			{\it ERO(p=1, style=uniform,$\eta$=0.6)} & 0.83$\pm$0.00 & 0.08$\pm$0.00 & \red{0.89$\pm$0.00} & 0.78$\pm$0.04 & 0.88$\pm$0.00 & 0.78$\pm$0.00 & 0.73$\pm$0.00 & -0.00$\pm$0.00 & 0.86$\pm$0.00 & 0.85$\pm$0.00 & 0.87$\pm$0.01 & \red{0.89$\pm$0.00} \\
			{\it ERO(p=1, style=gamma,$\eta$=0.6)} & 0.72$\pm$0.00 & 0.09$\pm$0.00 & \red{0.89$\pm$0.00} & 0.67$\pm$0.01 & 0.88$\pm$0.00 & 0.65$\pm$0.00 & 0.64$\pm$0.00 & 0.05$\pm$0.02 & 0.74$\pm$0.00 & 0.73$\pm$0.00 & 0.77$\pm$0.00 & \red{0.89$\pm$0.00} \\
			{\it ERO(p=1, style=uniform,$\eta$=0.7)} & 0.77$\pm$0.00 & 0.08$\pm$0.00 & 0.80$\pm$0.00 & 0.72$\pm$0.05 & \red{0.83$\pm$0.00} & 0.74$\pm$0.00 & 0.70$\pm$0.00 & 0.18$\pm$0.00 & 0.80$\pm$0.00 & 0.79$\pm$0.00 & \red{0.83$\pm$0.01} & \red{0.83$\pm$0.00} \\
			{\it ERO(p=1, style=gamma,$\eta$=0.7)} & 0.67$\pm$0.00 & 0.02$\pm$0.00 & 0.75$\pm$0.00 & 0.61$\pm$0.01 & \red{0.83$\pm$0.00} & 0.62$\pm$0.00 & 0.61$\pm$0.00 & 0.09$\pm$0.00 & 0.67$\pm$0.00 & 0.67$\pm$0.00 & 0.71$\pm$0.00 & \red{0.83$\pm$0.00} \\
			{\it ERO(p=1, style=uniform,$\eta$=0.8)} & 0.68$\pm$0.00 & 0.06$\pm$0.00 & 0.11$\pm$0.00 & 0.62$\pm$0.05 & \red{0.75$\pm$0.00} & 0.65$\pm$0.00 & 0.62$\pm$0.00 & 0.15$\pm$0.00 & 0.67$\pm$0.00 & 0.66$\pm$0.00 & 0.70$\pm$0.01 & \blue{0.74$\pm$0.01} \\
			{\it ERO(p=1, style=gamma,$\eta$=0.8)} & 0.57$\pm$0.00 & 0.06$\pm$0.00 & 0.04$\pm$0.00 & 0.51$\pm$0.01 & \red{0.75$\pm$0.00} & 0.55$\pm$0.00 & 0.55$\pm$0.00 & 0.04$\pm$0.00 & 0.54$\pm$0.00 & 0.54$\pm$0.00 & 0.56$\pm$0.00 & \blue{0.74$\pm$0.00} \\
\midrule
{\it ERO(p=0.05, style=uniform,$\eta$=0)} & 0.06$\pm$0.00 & 0.47$\pm$0.00 & 0.47$\pm$0.00 & 0.09$\pm$0.03 & 0.06$\pm$0.00 & \red{0.03$\pm$0.00} & 0.05$\pm$0.00 & 0.49$\pm$0.03 & 0.10$\pm$0.00 & 0.07$\pm$0.00  & \blue{0.04$\pm$0.00} & 0.05$\pm$0.00 \\
			{\it ERO(p=0.05, style=gamma,$\eta$=0)} & 0.08$\pm$0.00 & 0.48$\pm$0.00 & 0.51$\pm$0.00 & 0.04$\pm$0.00 & 0.05$\pm$0.00 & \red{0.02$\pm$0.00} & \blue{0.03$\pm$0.00} & 0.50$\pm$0.00 & 0.20$\pm$0.00 & 0.09$\pm$0.00  & 0.04$\pm$0.00 & 0.04$\pm$0.01 \\
			{\it ERO(p=0.05, style=uniform,$\eta$=0.1)} & 0.13$\pm$0.00 & 0.46$\pm$0.00 & 0.47$\pm$0.00 & 0.15$\pm$0.01 & \blue{0.11$\pm$0.00} & 0.22$\pm$0.00 & 0.21$\pm$0.00 & 0.50$\pm$0.01 & 0.16$\pm$0.00 & 0.14$\pm$0.00  & 0.13$\pm$0.01 & \red{0.09$\pm$0.00} \\
			{\it ERO(p=0.05, style=gamma,$\eta$=0.1)} & 0.16$\pm$0.00 & 0.49$\pm$0.00 & 0.51$\pm$0.00 & 0.16$\pm$0.00 & \blue{0.11$\pm$0.00} & 0.22$\pm$0.00 & 0.21$\pm$0.00 & 0.47$\pm$0.00 & 0.24$\pm$0.00 & 0.15$\pm$0.00  & 0.19$\pm$0.00 & \red{0.09$\pm$0.00} \\
			{\it ERO(p=0.05, style=uniform,$\eta$=0.2)} & 0.18$\pm$0.00 & 0.48$\pm$0.00 & 0.50$\pm$0.00 & 0.20$\pm$0.01 & \blue{0.15$\pm$0.00} & 0.25$\pm$0.00 & 0.27$\pm$0.00 & 0.49$\pm$0.01 & 0.21$\pm$0.00 & 0.19$\pm$0.00  & 0.19$\pm$0.00 & \red{0.13$\pm$0.00} \\
			{\it ERO(p=0.05, style=gamma,$\eta$=0.2)} & 0.21$\pm$0.00 & 0.49$\pm$0.00 & 0.52$\pm$0.00 & 0.21$\pm$0.00 & \blue{0.15$\pm$0.00} & 0.28$\pm$0.00 & 0.27$\pm$0.00 & 0.51$\pm$0.01 & 0.30$\pm$0.00 & 0.21$\pm$0.00  & 0.24$\pm$0.00 & \red{0.14$\pm$0.00} \\
			{\it ERO(p=0.05, style=uniform,$\eta$=0.3)} & 0.22$\pm$0.00 & 0.46$\pm$0.00 & 0.51$\pm$0.00 & 0.23$\pm$0.01 & \blue{0.20$\pm$0.00} & 0.28$\pm$0.00 & 0.31$\pm$0.00 & 0.47$\pm$0.00 & 0.26$\pm$0.00 & 0.24$\pm$0.00  & 0.23$\pm$0.00 & \red{0.18$\pm$0.01} \\
			{\it ERO(p=0.05, style=gamma,$\eta$=0.3)} & 0.25$\pm$0.00 & 0.48$\pm$0.00 & 0.51$\pm$0.00 & 0.26$\pm$0.00 & \blue{0.20$\pm$0.00} & 0.32$\pm$0.00 & 0.33$\pm$0.00 & 0.46$\pm$0.00 & 0.37$\pm$0.00 & 0.26$\pm$0.00  & 0.31$\pm$0.01 & \red{0.19$\pm$0.00} \\
			{\it ERO(p=0.05, style=uniform,$\eta$=0.4)} & 0.26$\pm$0.00 & 0.48$\pm$0.00 & 0.51$\pm$0.00 & 0.27$\pm$0.01 & \red{0.23$\pm$0.00} & 0.31$\pm$0.00 & 0.34$\pm$0.00 & 0.50$\pm$0.00 & 0.31$\pm$0.00 & 0.28$\pm$0.00  & 0.28$\pm$0.00 & \red{0.23$\pm$0.01} \\
			{\it ERO(p=0.05, style=gamma,$\eta$=0.4)} & 0.29$\pm$0.00 & 0.47$\pm$0.00 & 0.51$\pm$0.00 & 0.28$\pm$0.00 & \blue{0.24$\pm$0.00} & 0.34$\pm$0.00 & 0.34$\pm$0.00 & 0.48$\pm$0.01 & 0.43$\pm$0.00 & 0.32$\pm$0.00  & 0.34$\pm$0.04 & \red{0.23$\pm$0.01} \\
			{\it ERO(p=0.05, style=uniform,$\eta$=0.5)} & 0.29$\pm$0.00 & 0.48$\pm$0.00 & 0.51$\pm$0.00 & 0.30$\pm$0.01 & \red{0.27$\pm$0.00} & 0.34$\pm$0.00 & 0.36$\pm$0.00 & 0.49$\pm$0.00 & 0.40$\pm$0.00 & 0.35$\pm$0.00  & 0.32$\pm$0.00 & \red{0.27$\pm$0.00} \\
			{\it ERO(p=0.05, style=gamma,$\eta$=0.5)} & 0.32$\pm$0.00 & 0.48$\pm$0.00 & 0.49$\pm$0.00 & 0.33$\pm$0.00 & \blue{0.28$\pm$0.00} & 0.37$\pm$0.00 & 0.37$\pm$0.00 & 0.48$\pm$0.00 & 0.43$\pm$0.00 & 0.43$\pm$0.00  & 0.38$\pm$0.01 & \red{0.27$\pm$0.01} \\
			{\it ERO(p=0.05, style=uniform,$\eta$=0.6)} & 0.32$\pm$0.00 & 0.47$\pm$0.00 & 0.51$\pm$0.00 & 0.33$\pm$0.01 & \blue{0.31$\pm$0.00} & 0.37$\pm$0.00 & 0.39$\pm$0.00 & 0.51$\pm$0.00 & 0.44$\pm$0.00 & 0.42$\pm$0.00  & 0.35$\pm$0.00 & \red{0.30$\pm$0.00} \\
			{\it ERO(p=0.05, style=gamma,$\eta$=0.6)} & 0.35$\pm$0.00 & 0.48$\pm$0.00 & 0.50$\pm$0.00 & 0.35$\pm$0.00 & \blue{0.32$\pm$0.00} & 0.40$\pm$0.00 & 0.40$\pm$0.00 & 0.48$\pm$0.00 & 0.47$\pm$0.00 & 0.46$\pm$0.00  & 0.40$\pm$0.01 & \red{0.30$\pm$0.01} \\
			{\it ERO(p=0.05, style=uniform,$\eta$=0.7)} & \blue{0.34$\pm$0.00} & 0.48$\pm$0.00 & 0.49$\pm$0.00 & 0.35$\pm$0.01 & \blue{0.34$\pm$0.00} & 0.39$\pm$0.00 & 0.40$\pm$0.00 & 0.49$\pm$0.00 & 0.48$\pm$0.00 & 0.47$\pm$0.00  & 0.38$\pm$0.01 & \red{0.32$\pm$0.00} \\
			{\it ERO(p=0.05, style=gamma,$\eta$=0.7)} & 0.36$\pm$0.00 & 0.49$\pm$0.00 & 0.50$\pm$0.00 & 0.37$\pm$0.01 & \blue{0.35$\pm$0.00} & 0.42$\pm$0.00 & 0.40$\pm$0.00 & 0.49$\pm$0.00 & 0.49$\pm$0.00 & 0.49$\pm$0.00  & 0.43$\pm$0.01 & \red{0.33$\pm$0.01} \\
			{\it ERO(p=0.05, style=uniform,$\eta$=0.8)} & \blue{0.36$\pm$0.00} & 0.48$\pm$0.00 & 0.49$\pm$0.00 & 0.37$\pm$0.00 & \blue{0.36$\pm$0.00} & 0.40$\pm$0.00 & 0.41$\pm$0.00 & 0.51$\pm$0.00 & 0.50$\pm$0.00 & 0.49$\pm$0.00  & 0.42$\pm$0.00 & \red{0.34$\pm$0.00} \\
			{\it ERO(p=0.05, style=gamma,$\eta$=0.8)} & 0.38$\pm$0.00 & 0.48$\pm$0.00 & 0.50$\pm$0.00 & 0.39$\pm$0.01 & \blue{0.36$\pm$0.00} & 0.42$\pm$0.00 & 0.41$\pm$0.00 & 0.50$\pm$0.01 & 0.49$\pm$0.00 & 0.50$\pm$0.00  & 0.43$\pm$0.02 & \red{0.34$\pm$0.01} \\
			{\it ERO(p=1, style=uniform,$\eta$=0)} & \red{0.00$\pm$0.00} & 0.46$\pm$0.00 & \red{0.00$\pm$0.00} & 0.07$\pm$0.02 & \red{0.00$\pm$0.00} & \red{0.00$\pm$0.00} & \red{0.00$\pm$0.00} & 0.50$\pm$0.03 & \red{0.00$\pm$0.00} & \red{0.00$\pm$0.00}  & \red{0.00$\pm$0.00} & \red{0.00$\pm$0.00} \\
			{\it ERO(p=1, style=gamma,$\eta$=0)} & \red{0.00$\pm$0.00} & 0.46$\pm$0.00 & \red{0.00$\pm$0.00} & 0.02$\pm$0.00 & \red{0.00$\pm$0.00} & \red{0.00$\pm$0.00} & \red{0.00$\pm$0.00} & 0.50$\pm$0.05 & \red{0.00$\pm$0.00} & \red{0.00$\pm$0.00}  & \red{0.00$\pm$0.00} & \red{0.00$\pm$0.00} \\
			{\it ERO(p=1, style=uniform,$\eta$=0.1)} & 0.08$\pm$0.00 & 0.48$\pm$0.00 & \red{0.06$\pm$0.00} & 0.12$\pm$0.02 & \red{0.06$\pm$0.00} & 0.11$\pm$0.00 & 0.21$\pm$0.00 & 0.27$\pm$0.00 & \red{0.06$\pm$0.00} & 0.08$\pm$0.00  & 0.08$\pm$0.00 & \red{0.06$\pm$0.00} \\
			{\it ERO(p=1, style=gamma,$\eta$=0.1)} & 0.10$\pm$0.00 & 0.46$\pm$0.00 & \red{0.06$\pm$0.00} & 0.11$\pm$0.00 & \red{0.06$\pm$0.00} & 0.18$\pm$0.00 & 0.24$\pm$0.00 & 0.33$\pm$0.00 & 0.09$\pm$0.00 & 0.09$\pm$0.00  & 0.10$\pm$0.00 & \red{0.06$\pm$0.00} \\
			{\it ERO(p=1, style=uniform,$\eta$=0.2)} & 0.13$\pm$0.00 & 0.47$\pm$0.00 & \red{0.11$\pm$0.00} & 0.16$\pm$0.01 & 0.12$\pm$0.00 & 0.16$\pm$0.00 & 0.22$\pm$0.00 & 0.37$\pm$0.04 & 0.12$\pm$0.00 & 0.13$\pm$0.00  & 0.14$\pm$0.00 & \red{0.11$\pm$0.00} \\
			{\it ERO(p=1, style=gamma,$\eta$=0.2)} & 0.16$\pm$0.00 & 0.48$\pm$0.00 & \red{0.11$\pm$0.00} & 0.17$\pm$0.00 & 0.12$\pm$0.00 & 0.21$\pm$0.00 & 0.26$\pm$0.00 & 0.39$\pm$0.00 & 0.14$\pm$0.00 & 0.15$\pm$0.00  & 0.15$\pm$0.00 & \red{0.11$\pm$0.00} \\
			{\it ERO(p=1, style=uniform,$\eta$=0.3)} & 0.19$\pm$0.00 & 0.47$\pm$0.00 & \red{0.16$\pm$0.00} & 0.21$\pm$0.01 & 0.17$\pm$0.00 & 0.21$\pm$0.00 & 0.25$\pm$0.00 & 0.40$\pm$0.00 & 0.17$\pm$0.00 & 0.18$\pm$0.00  & 0.18$\pm$0.00 & \red{0.16$\pm$0.00} \\
			{\it ERO(p=1, style=gamma,$\eta$=0.3)} & 0.21$\pm$0.00 & 0.48$\pm$0.00 & \red{0.16$\pm$0.00} & 0.21$\pm$0.00 & 0.17$\pm$0.00 & 0.25$\pm$0.00 & 0.27$\pm$0.00 & 0.39$\pm$0.00 & 0.20$\pm$0.00 & 0.20$\pm$0.00  & 0.20$\pm$0.00 & \red{0.16$\pm$0.00} \\
			{\it ERO(p=1, style=uniform,$\eta$=0.4)} & 0.23$\pm$0.00 & 0.48$\pm$0.00 & \red{0.21$\pm$0.00} & 0.25$\pm$0.01 & 0.22$\pm$0.00 & 0.25$\pm$0.00 & 0.28$\pm$0.00 & 0.47$\pm$0.00 & 0.22$\pm$0.00 & 0.23$\pm$0.00  & 0.23$\pm$0.00 & \red{0.21$\pm$0.00} \\
			{\it ERO(p=1, style=gamma,$\eta$=0.4)} & 0.26$\pm$0.00 & 0.49$\pm$0.00 & \red{0.22$\pm$0.00} & 0.26$\pm$0.00 & \red{0.22$\pm$0.00} & 0.29$\pm$0.00 & 0.30$\pm$0.00 & 0.46$\pm$0.00 & 0.25$\pm$0.00 & 0.25$\pm$0.00  & 0.25$\pm$0.00 & \red{0.22$\pm$0.00} \\
			{\it ERO(p=1, style=uniform,$\eta$=0.5)} & 0.28$\pm$0.00 & 0.49$\pm$0.00 & \red{0.27$\pm$0.00} & 0.30$\pm$0.01 & \red{0.27$\pm$0.00} & 0.30$\pm$0.00 & 0.31$\pm$0.00 & 0.44$\pm$0.00 & \red{0.27$\pm$0.00} & 0.28$\pm$0.00  & \red{0.27$\pm$0.00} & \red{0.27$\pm$0.00} \\
			{\it ERO(p=1, style=gamma,$\eta$=0.5)} & 0.30$\pm$0.00 & 0.49$\pm$0.00 & \red{0.27$\pm$0.00} & 0.31$\pm$0.00 & \red{0.27$\pm$0.00} & 0.32$\pm$0.00 & 0.33$\pm$0.00 & 0.46$\pm$0.00 & 0.30$\pm$0.00 & 0.30$\pm$0.00  & 0.29$\pm$0.00 & \red{0.27$\pm$0.00} \\
			{\it ERO(p=1, style=uniform,$\eta$=0.6)} & 0.33$\pm$0.00 & 0.48$\pm$0.00 & \red{0.32$\pm$0.00} & 0.34$\pm$0.01 & \red{0.32$\pm$0.00} & 0.34$\pm$0.00 & 0.35$\pm$0.00 & 0.50$\pm$0.00 & \red{0.32$\pm$0.00} & 0.33$\pm$0.00  & \red{0.32$\pm$0.00} & \red{0.32$\pm$0.00} \\
			{\it ERO(p=1, style=gamma,$\eta$=0.6)} & 0.35$\pm$0.00 & 0.48$\pm$0.00 & \red{0.32$\pm$0.00} & 0.36$\pm$0.00 & \red{0.32$\pm$0.00} & 0.36$\pm$0.00 & 0.36$\pm$0.00 & 0.49$\pm$0.00 & 0.34$\pm$0.00 & 0.34$\pm$0.00  & 0.34$\pm$0.00 & \red{0.32$\pm$0.00} \\
			{\it ERO(p=1, style=uniform,$\eta$=0.7)} & 0.38$\pm$0.00 & 0.49$\pm$0.00 & \red{0.37$\pm$0.00} & 0.39$\pm$0.01 & \red{0.37$\pm$0.00} & 0.38$\pm$0.00 & 0.39$\pm$0.00 & 0.47$\pm$0.00 & \red{0.37$\pm$0.00} & 0.38$\pm$0.00  & \red{0.37$\pm$0.00} & \red{0.37$\pm$0.00} \\
			{\it ERO(p=1, style=gamma,$\eta$=0.7)} & 0.39$\pm$0.00 & 0.50$\pm$0.00 & 0.38$\pm$0.00 & 0.40$\pm$0.00 & \red{0.37$\pm$0.00} & 0.40$\pm$0.00 & 0.40$\pm$0.00 & 0.49$\pm$0.00 & 0.39$\pm$0.00 & 0.39$\pm$0.00  & 0.38$\pm$0.00 & \red{0.37$\pm$0.00} \\
			{\it ERO(p=1, style=uniform,$\eta$=0.8)} & \blue{0.42$\pm$0.00} & 0.49$\pm$0.00 & 0.49$\pm$0.00 & 0.43$\pm$0.00 & \blue{0.42$\pm$0.00} & 0.43$\pm$0.00 & 0.43$\pm$0.00 & 0.48$\pm$0.00 & \blue{0.42$\pm$0.00} & \blue{0.42$\pm$0.00}  & \blue{0.42$\pm$0.00} & \red{0.41$\pm$0.00} \\
			{\it ERO(p=1, style=gamma,$\eta$=0.8)} & 0.43$\pm$0.00 & 0.49$\pm$0.00 & 0.49$\pm$0.00 & 0.44$\pm$0.00 & \blue{0.42$\pm$0.00} & 0.43$\pm$0.00 & 0.43$\pm$0.00 & 0.50$\pm$0.00 & 0.43$\pm$0.00 & 0.43$\pm$0.00  & 0.43$\pm$0.00 & \red{0.41$\pm$0.00} \\
\bottomrule
\end{tabular}}
\end{table*}
\subsection{Results on Different Variants}
\label{appendix_subsec:variant_wise_results_full}
Tables \ref{tab:upset_simple_variant_wise_full}, \ref{tab:upset_naive_variant_wise_full} and \ref{tab:upset_ratio_variant_wise_full} compare different variants in the proposed GNNRank framework with themselves, together with the best and the worst baseline method, across all real-world data sets. Again, $\mathcal{L}_\text{upset, ratio}$ results are only  shown to illustrate the performance
in terms of minimizing the loss function $\mathcal{L}_\text{upset, ratio}$, but are not used to compare
methods. We conclude that ``proximal baseline" with IB as aggregation GNN usually performs the best among the variants. Compared with baselines, our different variants can attain comparable and often superior performance, and are never strongly outperformed. 
\begin{table*}[!ht]
\centering
\vspace{-10pt}
\caption{Performance on $\mathcal{L}_\text{upset, simple}$ for each variant in the proposed GNNRank framework, compared with the worst and the best baseline method, averaged over 10 runs, and plus/minus one standard deviation. ``avg" for time series data sets first average over all seasons, then consider mean and standard deviation over the 10 averaged values. The best is marked in \red{bold red} while the second best is highlighted in \blue{underline blue}.}
\label{tab:upset_simple_variant_wise_full}
\resizebox{1\linewidth}{!}{
}
\end{table*}
\subsection{Ablation Study Full Tables}
\label{appendix_subsec:ablation_full}
Tables \ref{tab:ablation_simple_full}, \ref{tab:ablation_naive_full} and \ref{tab:ablation_ratio_full} extend results of the ablation study to all seasons, and additionally report $\mathcal{L}_\text{upset, ratio}$ and $\mathcal{L}_\text{upset, naive}$ results. Again, $\mathcal{L}_\text{upset, ratio}$ results are only  shown to illustrate the performance
in terms of minimizing the loss function $\mathcal{L}_\text{upset, ratio}$, but are not used to compare methods.

\begin{table*}[tb!]
\centering
\vspace{-10pt}
	\caption{$\mathcal{L}_\text{upset, simple}$ comparison for different variants on selected real-world data, averaged over 10 runs, and plus/minus one standard deviation. ``avg" for time series data first average over all seasons, then consider mean and standard deviation over the 10 averaged values. The best for each group of variants (GNNRank-N or GNNRank-P) is marked in \red{bold red} while the second best is highlighted in \blue{underline blue}.}
\label{tab:ablation_simple_full}
\resizebox{\linewidth}{!}{\begin{tabular}{l|rr p{5em}| rrrrrrrr}
\toprule
Methods&\multicolumn{3}{c|}{GNNRank-N}&\multicolumn{7}{c}{GNNRank-P}\\
Data/Variant &loss sum & $\mathcal{L}_\text{upset,margin} $ & $\mathcal{L}_\text{upset, ratio} $ & loss sum & $\mathcal{L}_\text{upset,margin} $  & $\mathcal{L}_\text{upset, ratio} $   & no pretrain & $\{\alpha_\gamma\}_{\gamma=1}^\Gamma$not trainable &$\Gamma=3$ & $\Gamma=7$ \\
			\midrule
{\it Animal} & \blue{0.43$\pm$0.06} & 0.59$\pm$0.08 & \red{0.41$\pm$0.09} & \red{0.25$\pm$0.00} & \red{0.25$\pm$0.00} & \red{0.25$\pm$0.01} & \red{0.25$\pm$0.00} & \red{0.25$\pm$0.00} & \red{0.25$\pm$0.00} & \red{0.25$\pm$0.00} \\
			{\it Faculty: Business} & \blue{0.40$\pm$0.02} & 0.49$\pm$0.16 & \red{0.38$\pm$0.01} & \red{0.36$\pm$0.00} & \red{0.36$\pm$0.00} & \red{0.36$\pm$0.00} & \red{0.36$\pm$0.00} & \red{0.36$\pm$0.00} & \red{0.36$\pm$0.00} & \red{0.36$\pm$0.00} \\
			{\it Faculty: CS} & \blue{0.35$\pm$0.01} & 0.36$\pm$0.01 & \red{0.33$\pm$0.03} & \red{0.32$\pm$0.00} & \red{0.32$\pm$0.00} & \red{0.32$\pm$0.00} & 0.33$\pm$0.00 & \red{0.32$\pm$0.00} & \red{0.32$\pm$0.00} & \red{0.32$\pm$0.00} \\
			{\it Faculty: History} & \red{0.28$\pm$0.01} & 0.31$\pm$0.01 & \red{0.28$\pm$0.01} & \red{0.30$\pm$0.01} & \red{0.30$\pm$0.01} & \red{0.30$\pm$0.02} & \red{0.30$\pm$0.01} & \red{0.30$\pm$0.01} & \red{0.30$\pm$0.01} & \red{0.30$\pm$0.01} \\
			{\it Football(2009)} & \blue{0.76$\pm$0.03} & 0.79$\pm$0.04 & \red{0.75$\pm$0.06} & \red{0.61$\pm$0.00} & \red{0.61$\pm$0.00} & \red{0.61$\pm$0.00} & \red{0.61$\pm$0.00} & \red{0.61$\pm$0.00} & \red{0.61$\pm$0.00} & \red{0.61$\pm$0.00} \\
			{\it Football(2010)} & \red{0.90$\pm$0.02} & 0.93$\pm$0.05 & \blue{0.92$\pm$0.03} & \red{0.95$\pm$0.05} & 0.96$\pm$0.06 & 0.96$\pm$0.05 & \red{0.95$\pm$0.06} & 0.96$\pm$0.06 & \red{0.95$\pm$0.05} & \red{0.95$\pm$0.04} \\
			{\it Football(2011)} & 0.86$\pm$0.10 & \red{0.83$\pm$0.03} & \blue{0.84$\pm$0.08} & \red{0.80$\pm$0.00} & \red{0.80$\pm$0.00} & \red{0.80$\pm$0.01} & \red{0.80$\pm$0.00} & \red{0.80$\pm$0.01} & \red{0.80$\pm$0.00} & \red{0.80$\pm$0.00} \\
			{\it Football(2012)} & 0.81$\pm$0.09 & \red{0.75$\pm$0.03} & \blue{0.78$\pm$0.03} & \blue{0.80$\pm$0.08} & \blue{0.80$\pm$0.08} & 0.84$\pm$0.00 & 0.84$\pm$0.00 & \blue{0.80$\pm$0.08} & \red{0.78$\pm$0.03} & \blue{0.80$\pm$0.06} \\
			{\it Football(2013)} & \blue{0.65$\pm$0.03} & 0.75$\pm$0.05 & \red{0.64$\pm$0.04} & \red{0.56$\pm$0.00} & \red{0.56$\pm$0.00} & \red{0.56$\pm$0.00} & \red{0.56$\pm$0.00} & \red{0.56$\pm$0.00} & \red{0.56$\pm$0.00} & \red{0.56$\pm$0.00} \\
			{\it Football(2014)} & \red{0.96$\pm$0.12} & 1.00$\pm$0.11 & \blue{0.98$\pm$0.17} & 0.98$\pm$0.06 & \blue{0.96$\pm$0.07} & 0.98$\pm$0.09 & 0.99$\pm$0.10 & 0.98$\pm$0.06 & \red{0.95$\pm$0.05} & 0.97$\pm$0.08 \\
			{\it Football finer(2009)} & \red{0.76$\pm$0.03} & 0.93$\pm$0.38 & \red{0.76$\pm$0.18} & 0.65$\pm$0.02 & 0.65$\pm$0.02 & 0.65$\pm$0.03 & 0.66$\pm$0.00 & 0.65$\pm$0.03 & \red{0.63$\pm$0.03} & \blue{0.64$\pm$0.02} \\
			{\it Football finer(2010)} & \red{1.00$\pm$0.01} & \red{1.00$\pm$0.01} & \red{1.00$\pm$0.01} & \red{0.99$\pm$0.02} & 1.00$\pm$0.01 & 1.00$\pm$0.00 & \red{0.99$\pm$0.02} & 1.00$\pm$0.01 & 1.00$\pm$0.00 & 1.00$\pm$0.00 \\
			{\it Football finer(2011)} & \blue{0.99$\pm$0.03} & \red{0.92$\pm$0.16} & \blue{0.99$\pm$0.04} & 0.85$\pm$0.01 & \blue{0.84$\pm$0.02} & 0.85$\pm$0.00 & 0.85$\pm$0.00 & 0.85$\pm$0.01 & \red{0.83$\pm$0.03} & \blue{0.84$\pm$0.03} \\
			{\it Football finer(2012)} & \red{0.93$\pm$0.07} & 1.00$\pm$0.02 & \blue{0.96$\pm$0.03} & \red{0.86$\pm$0.02} & \red{0.86$\pm$0.03} & \red{0.86$\pm$0.02} & \red{0.86$\pm$0.00} & \red{0.86$\pm$0.03} & \red{0.86$\pm$0.01} & \red{0.86$\pm$0.00} \\
			{\it Football finer(2013)} & \blue{0.74$\pm$0.04} & 0.98$\pm$0.04 & \red{0.73$\pm$0.19} & 0.59$\pm$0.02 & 0.69$\pm$0.03 & \red{0.57$\pm$0.01} & 0.58$\pm$0.00 & 0.58$\pm$0.01 & \red{0.57$\pm$0.02} & \red{0.57$\pm$0.02} \\
			{\it Football finer(2014)} & \red{1.00$\pm$0.00} & 1.01$\pm$0.04 & \red{1.00$\pm$0.00} & \red{1.00$\pm$0.00} & 1.01$\pm$0.03 & \red{1.00$\pm$0.03} & 1.08$\pm$0.12 & \red{1.00$\pm$0.03} & \red{1.00$\pm$0.00} & \red{1.00$\pm$0.00} \\
			\bottomrule
		\end{tabular}}
\end{table*}
\begin{table*}[tb!]
\centering
\vspace{-10pt}
	\caption{$\mathcal{L}_\text{upset, naive}$ comparison for different variants on selected real-world data, averaged over 10 runs, and plus/minus one standard deviation. ``avg" for time series data first average over all seasons, then consider mean and standard deviation over the 10 averaged values. The best is marked in \red{bold red} while the second best is highlighted in \blue{underline blue}.}
\label{tab:ablation_naive_full}
\resizebox{1\linewidth}{!}{\begin{tabular}{l|rr p{5em}| rrrrrrrr}
\toprule
Methods&\multicolumn{3}{c|}{GNNRank-N}&\multicolumn{7}{c}{GNNRank-P}\\
Data/Variant &loss sum & $\mathcal{L}_\text{upset,margin} $ & $\mathcal{L}_\text{upset, ratio} $ & loss sum & $\mathcal{L}_\text{upset,margin} $  & $\mathcal{L}_\text{upset, ratio} $   & no pretrain & $\{\alpha_\gamma\}_{\gamma=1}^\Gamma$not trainable &$\Gamma=3$ & $\Gamma=7$ \\
			\midrule
{\it Animal} & \blue{0.11$\pm$0.02} & 0.15$\pm$0.02 & \red{0.10$\pm$0.02} & \red{0.06$\pm$0.00} & \red{0.06$\pm$0.00} & \red{0.06$\pm$0.00} & \red{0.06$\pm$0.00} & \red{0.06$\pm$0.00} & \red{0.06$\pm$0.00} & \red{0.06$\pm$0.00} \\
			{\it Faculty: Business} & \red{0.10$\pm$0.00} & 0.12$\pm$0.04 & \red{0.10$\pm$0.00} & \red{0.09$\pm$0.00} & \red{0.09$\pm$0.00} & \red{0.09$\pm$0.00} & \red{0.09$\pm$0.00} & \red{0.09$\pm$0.00} & \red{0.09$\pm$0.00} & \red{0.09$\pm$0.00} \\
			{\it Faculty: CS} & \blue{0.09$\pm$0.00} & \blue{0.09$\pm$0.00} & \red{0.08$\pm$0.01} & \red{0.08$\pm$0.00} & \red{0.08$\pm$0.00} & \red{0.08$\pm$0.00} & \red{0.08$\pm$0.00} & \red{0.08$\pm$0.00} & \red{0.08$\pm$0.00} & \red{0.08$\pm$0.00} \\
			{\it Faculty: History} & \red{0.07$\pm$0.00} & 0.08$\pm$0.00 & \red{0.07$\pm$0.00} & \red{0.07$\pm$0.00} & 0.08$\pm$0.00 & 0.08$\pm$0.00 & \red{0.07$\pm$0.00} & \red{0.07$\pm$0.00} & 0.08$\pm$0.00 & 0.08$\pm$0.00 \\
			{\it Football(2009)} & \red{0.19$\pm$0.01} & 0.26$\pm$0.06 & \red{0.19$\pm$0.01} & \red{0.15$\pm$0.00} & \red{0.15$\pm$0.00} & \red{0.15$\pm$0.00} & \red{0.15$\pm$0.00} & \red{0.15$\pm$0.00} & \red{0.15$\pm$0.00} & \red{0.15$\pm$0.00} \\
			{\it Football(2010)} & \red{0.30$\pm$0.01} & 0.36$\pm$0.06 & \red{0.30$\pm$0.01} & \red{0.29$\pm$0.00} & \red{0.29$\pm$0.00} & \red{0.29$\pm$0.00} & \red{0.29$\pm$0.00} & \red{0.29$\pm$0.00} & \red{0.29$\pm$0.01} & \red{0.29$\pm$0.00} \\
			{\it Football(2011)} & \red{0.22$\pm$0.01} & 0.26$\pm$0.07 & \red{0.22$\pm$0.01} & \red{0.20$\pm$0.00} & \red{0.20$\pm$0.00} & \red{0.20$\pm$0.00} & \red{0.20$\pm$0.00} & \red{0.20$\pm$0.00} & \red{0.20$\pm$0.00} & \red{0.20$\pm$0.00} \\
			{\it Football(2012)} & \red{0.20$\pm$0.02} & 0.29$\pm$0.06 & \blue{0.21$\pm$0.04} & \blue{0.21$\pm$0.00} & \blue{0.21$\pm$0.00} & \blue{0.21$\pm$0.00} & \blue{0.21$\pm$0.00} & \blue{0.21$\pm$0.00} & \red{0.20$\pm$0.00} & \blue{0.21$\pm$0.00} \\
			{\it Football(2013)} & \red{0.16$\pm$0.01} & 0.20$\pm$0.07 & \red{0.16$\pm$0.01} & \red{0.14$\pm$0.00} & \red{0.14$\pm$0.00} & \red{0.14$\pm$0.00} & \red{0.14$\pm$0.00} & \red{0.14$\pm$0.00} & \red{0.14$\pm$0.00} & \red{0.14$\pm$0.00} \\
			{\it Football(2014)} & \red{0.24$\pm$0.01} & 0.30$\pm$0.02 & \blue{0.25$\pm$0.04} & \red{0.26$\pm$0.01} & 0.27$\pm$0.00 & 0.27$\pm$0.02 & 0.27$\pm$0.00 & \red{0.26$\pm$0.01} & 0.27$\pm$0.02 & 0.27$\pm$0.00 \\
			{\it Football finer(2009)} & \red{0.19$\pm$0.01} & 0.23$\pm$0.09 & \red{0.19$\pm$0.05} & \red{0.16$\pm$0.01} & \red{0.16$\pm$0.01} & \red{0.16$\pm$0.01} & \red{0.16$\pm$0.00} & \red{0.16$\pm$0.01} & \red{0.16$\pm$0.01} & \red{0.16$\pm$0.00} \\
			{\it Football finer(2010)} & \red{0.31$\pm$0.01} & 0.35$\pm$0.05 & \red{0.31$\pm$0.02} & \red{0.29$\pm$0.00} & \red{0.29$\pm$0.00} & \red{0.29$\pm$0.00} & \red{0.29$\pm$0.00} & \red{0.29$\pm$0.00} & \red{0.29$\pm$0.00} & \red{0.29$\pm$0.00} \\
			{\it Football finer(2011)} & \blue{0.25$\pm$0.01} & \red{0.23$\pm$0.04} & \blue{0.25$\pm$0.01} & \red{0.21$\pm$0.00} & \red{0.21$\pm$0.01} & \red{0.21$\pm$0.00} & \red{0.21$\pm$0.00} & \red{0.21$\pm$0.01} & \red{0.21$\pm$0.01} & \red{0.21$\pm$0.01} \\
			{\it Football finer(2012)} & \red{0.24$\pm$0.02} & 0.25$\pm$0.04 & \red{0.24$\pm$0.01} & \red{0.21$\pm$0.00} & \red{0.21$\pm$0.01} & 0.22$\pm$0.01 & 0.22$\pm$0.00 & \red{0.21$\pm$0.01} & 0.22$\pm$0.00 & 0.22$\pm$0.00 \\
			{\it Football finer(2013)} & \blue{0.19$\pm$0.01} & 0.26$\pm$0.12 & \red{0.18$\pm$0.05} & 0.15$\pm$0.00 & 0.17$\pm$0.01 & \red{0.14$\pm$0.00} & 0.15$\pm$0.00 & \red{0.14$\pm$0.00} & \red{0.14$\pm$0.00} & \red{0.14$\pm$0.00} \\
			{\it Football finer(2014)} & \red{0.27$\pm$0.01} & 0.35$\pm$0.09 & \blue{0.28$\pm$0.04} & \red{0.27$\pm$0.02} & \red{0.27$\pm$0.00} & \red{0.27$\pm$0.02} & \red{0.27$\pm$0.00} & \red{0.27$\pm$0.02} & \red{0.27$\pm$0.00} & \red{0.27$\pm$0.01} \\
			\bottomrule
		\end{tabular}}
\end{table*}
\begin{table*}[tb!]
\centering
\vspace{-10pt}
	\caption{$\mathcal{L}_\text{upset, ratio}$ comparison for different variants on selected real-world data, averaged over 10 runs, and plus/minus one standard deviation. ``avg" for time series data first average over all seasons, then consider mean and standard deviation over the 10 averaged values. The best for each group of variants (GNNRank-N or GNNRank-P) is marked in \red{bold red} while the second best is highlighted in \blue{underline blue}.}
\label{tab:ablation_ratio_full}
\resizebox{1\linewidth}{!}{\begin{tabular}{l|rr p{5em}| rrrrrrrr}
\toprule
Methods&\multicolumn{3}{c|}{GNNRank-N}&\multicolumn{7}{c}{GNNRank-P}\\
Data/Variant &loss sum & $\mathcal{L}_\text{upset,margin} $ & $\mathcal{L}_\text{upset, ratio} $ & loss sum & $\mathcal{L}_\text{upset,margin} $  & $\mathcal{L}_\text{upset, ratio} $   & no pretrain & $\{\alpha_\gamma\}_{\gamma=1}^\Gamma$not trainable &$\Gamma=3$ & $\Gamma=7$ \\
			\midrule
{\it Animal} & \red{0.24$\pm$0.01} & 0.41$\pm$0.09 & \red{0.24$\pm$0.03} & \red{0.66$\pm$0.00} & \red{0.66$\pm$0.00} & \red{0.66$\pm$0.00} & \red{0.66$\pm$0.00} & \red{0.66$\pm$0.00} & \red{0.66$\pm$0.00} & \red{0.66$\pm$0.00} \\
			{\it Faculty: Business} & \blue{0.32$\pm$0.02} & 0.71$\pm$0.14 & \red{0.31$\pm$0.00} & \red{0.89$\pm$0.00} & \red{0.89$\pm$0.01} & \red{0.89$\pm$0.00} & \red{0.89$\pm$0.01} & \red{0.89$\pm$0.00} & \red{0.89$\pm$0.01} & \red{0.89$\pm$0.00} \\
			{\it Faculty: CS} & \blue{0.27$\pm$0.01} & 0.69$\pm$0.05 & \red{0.26$\pm$0.02} & 0.90$\pm$0.00 & \red{0.86$\pm$0.00} & 0.90$\pm$0.00 & 0.90$\pm$0.00 & \red{0.86$\pm$0.00} & \red{0.86$\pm$0.00} & \red{0.86$\pm$0.00} \\
			{\it Faculty: History} & \red{0.21$\pm$0.00} & 0.60$\pm$0.10 & \red{0.21$\pm$0.00} & 0.86$\pm$0.00 & \red{0.84$\pm$0.02} & 0.85$\pm$0.00 & 0.87$\pm$0.00 & \red{0.84$\pm$0.02} & \red{0.84$\pm$0.02} & \red{0.84$\pm$0.02} \\
			{\it Football(2009)} & \red{0.46$\pm$0.01} & 0.72$\pm$0.14 & \blue{0.48$\pm$0.01} & \red{0.69$\pm$0.00} & \red{0.69$\pm$0.00} & \red{0.69$\pm$0.00} & \red{0.69$\pm$0.00} & \red{0.69$\pm$0.00} & \red{0.69$\pm$0.00} & \red{0.69$\pm$0.00} \\
			{\it Football(2010)} & \red{0.65$\pm$0.00} & 0.75$\pm$0.02 & \blue{0.68$\pm$0.07} & \red{0.73$\pm$0.01} & 0.74$\pm$0.00 & 0.74$\pm$0.00 & \red{0.73$\pm$0.00} & \red{0.73$\pm$0.01} & \red{0.73$\pm$0.00} & \red{0.73$\pm$0.01} \\
			{\it Football(2011)} & \red{0.53$\pm$0.01} & 0.70$\pm$0.04 & \blue{0.54$\pm$0.02} & \red{0.69$\pm$0.00} & \red{0.69$\pm$0.00} & \red{0.69$\pm$0.00} & \red{0.69$\pm$0.00} & \red{0.69$\pm$0.00} & \red{0.69$\pm$0.00} & \red{0.69$\pm$0.00} \\
			{\it Football(2012)} & \red{0.51$\pm$0.01} & 0.65$\pm$0.04 & \blue{0.53$\pm$0.08} & \red{0.71$\pm$0.00} & \red{0.71$\pm$0.00} & \red{0.71$\pm$0.00} & \red{0.71$\pm$0.00} & \red{0.71$\pm$0.00} & \red{0.71$\pm$0.00} & \red{0.71$\pm$0.00} \\
			{\it Football(2013)} & \red{0.46$\pm$0.01} & 0.56$\pm$0.16 & \red{0.46$\pm$0.01} & \red{0.71$\pm$0.00} & \red{0.71$\pm$0.00} & \red{0.71$\pm$0.00} & \red{0.71$\pm$0.00} & \red{0.71$\pm$0.00} & \red{0.71$\pm$0.00} & \red{0.71$\pm$0.00} \\
			{\it Football(2014)} & \red{0.69$\pm$0.01} & 0.92$\pm$0.07 & \red{0.69$\pm$0.07} & \red{0.85$\pm$0.00} & \red{0.85$\pm$0.00} & \red{0.85$\pm$0.00} & \red{0.85$\pm$0.00} & \red{0.85$\pm$0.00} & \red{0.85$\pm$0.00} & \red{0.85$\pm$0.00} \\
			{\it Football finer(2009)} & \red{0.17$\pm$0.00} & 0.28$\pm$0.07 & \red{0.17$\pm$0.03} & \blue{0.21$\pm$0.00} & \blue{0.21$\pm$0.00} & \blue{0.21$\pm$0.01} & \blue{0.21$\pm$0.00} & \blue{0.21$\pm$0.00} & \red{0.20$\pm$0.00} & \blue{0.21$\pm$0.01} \\
			{\it Football finer(2010)} & \blue{0.19$\pm$0.03} & 0.25$\pm$0.00 & \red{0.17$\pm$0.00} & \red{0.18$\pm$0.00} & 0.19$\pm$0.00 & \red{0.18$\pm$0.00} & \red{0.18$\pm$0.00} & \red{0.18$\pm$0.00} & \red{0.18$\pm$0.00} & \red{0.18$\pm$0.00} \\
			{\it Football finer(2011)} & \red{0.17$\pm$0.00} & 0.22$\pm$0.09 & \blue{0.18$\pm$0.01} & \red{0.19$\pm$0.00} & 0.20$\pm$0.02 & \red{0.19$\pm$0.00} & \red{0.19$\pm$0.00} & \red{0.19$\pm$0.00} & \red{0.19$\pm$0.00} & \red{0.19$\pm$0.00} \\
			{\it Football finer(2012)} & \red{0.15$\pm$0.00} & 0.23$\pm$0.02 & \blue{0.16$\pm$0.03} & \red{0.17$\pm$0.00} & \red{0.17$\pm$0.00} & \red{0.17$\pm$0.00} & \red{0.17$\pm$0.00} & \red{0.17$\pm$0.00} & \red{0.17$\pm$0.00} & \red{0.17$\pm$0.00} \\
			{\it Football finer(2013)} & \blue{0.21$\pm$0.01} & 0.29$\pm$0.03 & \red{0.19$\pm$0.04} & 0.25$\pm$0.01 & 0.25$\pm$0.00 & \red{0.24$\pm$0.01} & 0.25$\pm$0.00 & \red{0.24$\pm$0.01} & \red{0.24$\pm$0.00} & 0.25$\pm$0.00 \\
			{\it Football finer(2014)} & \red{0.34$\pm$0.00} & 0.46$\pm$0.04 & \blue{0.35$\pm$0.05} & \red{0.38$\pm$0.00} & \red{0.38$\pm$0.00} & \red{0.38$\pm$0.00} & \red{0.38$\pm$0.00} & \red{0.38$\pm$0.00} & \red{0.38$\pm$0.00} & \red{0.38$\pm$0.00} \\
			\bottomrule
		\end{tabular}}
\end{table*}
\subsection{Inductive Learning Full Tables}
\label{appendix_subsec:inductive}
Tables \ref{tab:upset_simple_matches_inductive} and \ref{tab:upset_ratio_matches_inductive} contain results on the performance of the ``IB proximal baseline" variant, trained with the ``emb baseline" on the \textit{Basketball finer} data set. The second to last column contains results of directly applying the model trained for season 1985 without further training, while the last column is the result for the model specifically trained for that season. For each year we have 10 runs, and we average over the total of 30 years for each run. On average, 
directly applying the original trained model gives $\mathcal{L}_\text{upset, simple}=0.75\pm 0.02$, $\mathcal{L}_\text{upset, naive}=0.19\pm 0.01$ and $\mathcal{L}_\text{upset, ratio}=0.01\pm 0.00$, while training specifically for the that season gives $\mathcal{L}_\text{upset, simple}=0.74\pm 0.00$, $\mathcal{L}_\text{upset, naive}=0.19\pm 0.00$ and $\mathcal{L}_\text{upset, ratio}=0.01\pm 0.00$.
Thus, in this example, applying the general model produces almost the same superior performance.
\begin{table*}[!ht]
\centering
\caption{Result table on $\mathcal{L}_\text{upset, simple}$ for each year in the time series matches, applying the trained model for 1985 on all seasons without further training, averaged over 10 runs, and plus/minus one standard deviation. The best is marked in \red{bold red} while the second best is highlighted in \blue{underline blue}. As MVR could not generate results after a week, we omit the results here.}
\label{tab:upset_simple_matches_inductive}
\resizebox{1\linewidth}{!}{\begin{tabular}{lrrrrrrrrrrrrrrrrr}
\toprule
Data  & SpringRank & SyncRank & SerialRank & BTL & DavidScore & Eig.Cent. & PageRank & RankCent. & SVD\_RS & SVD\_NRS  & Directly Apply&Train Specifically \\
\midrule
Basketball finer(1985) & 0.76$\pm$0.00 & 1.63$\pm$0.00 & 1.96$\pm$0.10 & 1.46$\pm$0.05 & 0.83$\pm$0.00 & 1.18$\pm$0.00 & 1.16$\pm$0.00 & 1.97$\pm$0.00 & 1.00$\pm$0.00 & 0.87$\pm$0.00 & \red{0.71$\pm$0.00} & \red{0.71$\pm$0.00} \\
			Basketball finer(1986) & 0.77$\pm$0.00 & 1.81$\pm$0.00 & 1.99$\pm$0.00 & 1.42$\pm$0.06 & 0.84$\pm$0.00 & 1.16$\pm$0.00 & 1.15$\pm$0.00 & 1.99$\pm$0.00 & 1.09$\pm$0.00 & 0.86$\pm$0.00 & \red{0.69$\pm$0.00} & \red{0.69$\pm$0.00} \\
			Basketball finer(1987) & 0.82$\pm$0.00 & 1.79$\pm$0.00 & 1.87$\pm$0.00 & 1.41$\pm$0.06 & 0.89$\pm$0.00 & 1.17$\pm$0.00 & 1.21$\pm$0.00 & 1.95$\pm$0.00 & 0.99$\pm$0.00 & 0.91$\pm$0.00 & \red{0.77$\pm$0.00} & \red{0.77$\pm$0.00} \\
			Basketball finer(1988) & 0.78$\pm$0.00 & 1.79$\pm$0.00 & 1.90$\pm$0.00 & 1.43$\pm$0.10 & 0.84$\pm$0.00 & 1.23$\pm$0.00 & 1.19$\pm$0.00 & 1.97$\pm$0.00 & 0.97$\pm$0.00 & 0.83$\pm$0.00 & \red{0.70$\pm$0.00} & \red{0.70$\pm$0.00} \\
			Basketball finer(1989) & \blue{0.77$\pm$0.00} & 1.67$\pm$0.00 & 1.86$\pm$0.00 & 1.43$\pm$0.05 & 0.83$\pm$0.00 & 1.13$\pm$0.00 & 1.14$\pm$0.00 & 1.94$\pm$0.00 & 0.99$\pm$0.00 & 0.90$\pm$0.00 & 0.88$\pm$0.52 & \red{0.70$\pm$0.00} \\
			Basketball finer(1990) & 0.79$\pm$0.00 & 1.67$\pm$0.00 & 1.93$\pm$0.00 & 1.45$\pm$0.05 & 0.82$\pm$0.00 & 1.28$\pm$0.00 & 1.17$\pm$0.00 & 1.98$\pm$0.00 & 0.91$\pm$0.00 & 0.84$\pm$0.00 & \red{0.71$\pm$0.00} & \red{0.71$\pm$0.00} \\
			Basketball finer(1991) & 0.81$\pm$0.00 & 1.83$\pm$0.00 & 2.03$\pm$0.00 & 1.36$\pm$0.06 & 0.83$\pm$0.00 & 1.38$\pm$0.00 & 1.31$\pm$0.00 & 1.97$\pm$0.00 & 0.99$\pm$0.00 & 0.89$\pm$0.00 & \red{0.71$\pm$0.00} & \red{0.71$\pm$0.00} \\
			Basketball finer(1992) & 0.73$\pm$0.00 & 1.72$\pm$0.00 & 1.88$\pm$0.00 & 1.33$\pm$0.06 & 0.77$\pm$0.00 & 1.26$\pm$0.00 & 1.21$\pm$0.00 & 1.87$\pm$0.00 & 0.95$\pm$0.00 & 0.84$\pm$0.00 & \red{0.67$\pm$0.00} & \red{0.67$\pm$0.00} \\
			Basketball finer(1993) & 0.75$\pm$0.00 & 1.66$\pm$0.00 & 2.03$\pm$0.00 & 1.35$\pm$0.05 & 0.78$\pm$0.00 & 1.18$\pm$0.00 & 1.10$\pm$0.00 & 1.97$\pm$0.00 & 0.98$\pm$0.00 & 0.86$\pm$0.00 & \red{0.69$\pm$0.00} & \red{0.69$\pm$0.00} \\
			Basketball finer(1994) & 0.74$\pm$0.00 & 1.69$\pm$0.00 & 2.01$\pm$0.00 & 1.35$\pm$0.08 & 0.78$\pm$0.00 & 1.23$\pm$0.00 & 1.10$\pm$0.00 & 1.94$\pm$0.00 & 0.90$\pm$0.00 & 0.83$\pm$0.00 & \red{0.67$\pm$0.00} & \red{0.67$\pm$0.00} \\
			Basketball finer(1995) & 0.79$\pm$0.00 & 1.78$\pm$0.00 & 1.89$\pm$0.00 & 1.35$\pm$0.06 & 0.83$\pm$0.00 & 1.19$\pm$0.00 & 1.13$\pm$0.00 & 1.92$\pm$0.01 & 0.95$\pm$0.00 & 0.87$\pm$0.00 & \red{0.73$\pm$0.00} & \red{0.73$\pm$0.00} \\
			Basketball finer(1996) & 0.81$\pm$0.00 & 1.67$\pm$0.00 & 1.95$\pm$0.00 & 1.44$\pm$0.06 & 0.88$\pm$0.00 & 1.22$\pm$0.00 & 1.20$\pm$0.00 & 1.94$\pm$0.00 & 1.08$\pm$0.00 & 0.95$\pm$0.00 & \red{0.77$\pm$0.00} & \red{0.77$\pm$0.00} \\
			Basketball finer(1997) & 0.83$\pm$0.00 & 1.77$\pm$0.00 & 1.94$\pm$0.00 & 1.40$\pm$0.04 & 0.86$\pm$0.00 & 1.19$\pm$0.00 & 1.16$\pm$0.00 & 2.05$\pm$0.00 & 0.96$\pm$0.00 & 0.92$\pm$0.00 & \red{0.77$\pm$0.00} & \red{0.77$\pm$0.00} \\
			Basketball finer(1998) & 0.78$\pm$0.00 & 1.70$\pm$0.00 & 1.92$\pm$0.00 & 1.36$\pm$0.07 & 0.83$\pm$0.00 & 1.14$\pm$0.00 & 1.13$\pm$0.00 & 1.91$\pm$0.00 & 0.97$\pm$0.00 & 0.90$\pm$0.00 & \red{0.74$\pm$0.00} & \red{0.74$\pm$0.00} \\
			Basketball finer(1999) & 0.81$\pm$0.00 & 1.64$\pm$0.00 & 2.02$\pm$0.00 & 1.38$\pm$0.07 & 0.86$\pm$0.00 & 1.17$\pm$0.00 & 1.11$\pm$0.00 & 1.99$\pm$0.00 & 1.17$\pm$0.00 & 0.94$\pm$0.00 & \red{0.73$\pm$0.00} & \red{0.73$\pm$0.00} \\
			Basketball finer(2000) & 0.84$\pm$0.00 & 1.75$\pm$0.00 & 1.97$\pm$0.00 & 1.39$\pm$0.05 & 0.90$\pm$0.00 & 1.26$\pm$0.00 & 1.18$\pm$0.00 & 1.92$\pm$0.00 & 1.12$\pm$0.00 & 0.95$\pm$0.00 & \red{0.78$\pm$0.00} & \red{0.78$\pm$0.00} \\
			Basketball finer(2001) & 0.81$\pm$0.00 & 1.69$\pm$0.00 & 2.06$\pm$0.00 & 1.41$\pm$0.06 & 0.86$\pm$0.00 & 1.25$\pm$0.00 & 1.18$\pm$0.00 & 2.03$\pm$0.00 & 1.08$\pm$0.00 & 0.97$\pm$0.00 & \red{0.73$\pm$0.00} & \red{0.73$\pm$0.00} \\
			Basketball finer(2002) & 0.87$\pm$0.00 & 1.75$\pm$0.00 & 1.86$\pm$0.00 & 1.43$\pm$0.08 & 0.89$\pm$0.00 & 1.20$\pm$0.00 & 1.13$\pm$0.00 & 2.03$\pm$0.00 & 1.07$\pm$0.00 & 0.92$\pm$0.00 & \red{0.78$\pm$0.00} & \red{0.78$\pm$0.00} \\
			Basketball finer(2003) & 0.87$\pm$0.00 & 1.78$\pm$0.00 & 1.98$\pm$0.07 & 1.46$\pm$0.09 & 0.91$\pm$0.00 & 1.18$\pm$0.00 & 1.14$\pm$0.00 & 2.00$\pm$0.00 & 1.02$\pm$0.00 & 0.95$\pm$0.00 & \red{0.78$\pm$0.00} & \red{0.78$\pm$0.00} \\
			Basketball finer(2004) & 0.77$\pm$0.00 & 1.71$\pm$0.06 & 1.87$\pm$0.00 & 1.41$\pm$0.06 & 0.80$\pm$0.00 & 1.17$\pm$0.00 & 1.13$\pm$0.00 & 1.98$\pm$0.02 & 0.95$\pm$0.00 & 0.88$\pm$0.00 & \red{0.72$\pm$0.00} & \red{0.72$\pm$0.00} \\
			Basketball finer(2005) & \blue{0.84$\pm$0.00} & 1.82$\pm$0.00 & 1.93$\pm$0.00 & 1.38$\pm$0.06 & 0.88$\pm$0.00 & 1.14$\pm$0.00 & 1.09$\pm$0.00 & 2.00$\pm$0.00 & 1.08$\pm$0.00 & 0.95$\pm$0.00 & 0.93$\pm$0.52 & \red{0.75$\pm$0.00} \\
			Basketball finer(2006) & 0.86$\pm$0.00 & 1.76$\pm$0.00 & 1.97$\pm$0.00 & 1.40$\pm$0.07 & 0.85$\pm$0.00 & 1.21$\pm$0.00 & 1.11$\pm$0.00 & 1.96$\pm$0.00 & 1.06$\pm$0.00 & 0.94$\pm$0.00 & \red{0.76$\pm$0.00} & \red{0.76$\pm$0.00} \\
			Basketball finer(2007) & 0.86$\pm$0.00 & 1.85$\pm$0.00 & 1.97$\pm$0.14 & 1.39$\pm$0.05 & 0.93$\pm$0.00 & 1.15$\pm$0.00 & 1.09$\pm$0.00 & 1.92$\pm$0.00 & 0.95$\pm$0.00 & 0.93$\pm$0.00 & \red{0.80$\pm$0.00} & \red{0.80$\pm$0.00} \\
			Basketball finer(2008) & 0.85$\pm$0.00 & 1.72$\pm$0.00 & 1.98$\pm$0.00 & 1.36$\pm$0.08 & 0.88$\pm$0.00 & 1.20$\pm$0.00 & 1.13$\pm$0.00 & 1.96$\pm$0.00 & 0.99$\pm$0.00 & 0.91$\pm$0.00 & \red{0.78$\pm$0.00} & \red{0.78$\pm$0.00} \\
			Basketball finer(2009) & 0.84$\pm$0.00 & 1.71$\pm$0.00 & 2.00$\pm$0.04 & 1.37$\pm$0.04 & 0.90$\pm$0.00 & 1.17$\pm$0.00 & 1.12$\pm$0.00 & 2.06$\pm$0.00 & 0.99$\pm$0.00 & 0.91$\pm$0.00 & \red{0.75$\pm$0.00} & \red{0.75$\pm$0.00} \\
			Basketball finer(2010) & 0.82$\pm$0.00 & 1.68$\pm$0.00 & 1.98$\pm$0.00 & 1.34$\pm$0.06 & 0.84$\pm$0.00 & 1.10$\pm$0.00 & 1.13$\pm$0.00 & 1.97$\pm$0.00 & 0.94$\pm$0.00 & 0.92$\pm$0.00 & \red{0.75$\pm$0.00} & \red{0.75$\pm$0.00} \\
			Basketball finer(2011) & 0.85$\pm$0.00 & 1.65$\pm$0.00 & 1.96$\pm$0.00 & 1.36$\pm$0.06 & 0.87$\pm$0.00 & 1.12$\pm$0.00 & 1.14$\pm$0.00 & 2.01$\pm$0.00 & 0.96$\pm$0.00 & 0.89$\pm$0.00 & \red{0.77$\pm$0.00} & \red{0.77$\pm$0.00} \\
			Basketball finer(2012) & 0.80$\pm$0.00 & 1.68$\pm$0.00 & 1.97$\pm$0.00 & 1.37$\pm$0.07 & 0.83$\pm$0.00 & 1.17$\pm$0.00 & 1.10$\pm$0.00 & 1.91$\pm$0.00 & 0.93$\pm$0.00 & 0.89$\pm$0.00 & \red{0.75$\pm$0.00} & \red{0.75$\pm$0.00} \\
			Basketball finer(2013) & 0.83$\pm$0.00 & 1.73$\pm$0.00 & 2.06$\pm$0.00 & 1.37$\pm$0.07 & 0.88$\pm$0.00 & 1.15$\pm$0.00 & 1.13$\pm$0.00 & 1.97$\pm$0.00 & 1.01$\pm$0.00 & 0.92$\pm$0.00 & \red{0.79$\pm$0.00} & \red{0.79$\pm$0.00} \\
			Basketball finer(2014) & 0.84$\pm$0.00 & 1.77$\pm$0.19 & 2.07$\pm$0.00 & 1.42$\pm$0.09 & 0.86$\pm$0.00 & 1.18$\pm$0.00 & 1.13$\pm$0.00 & 1.97$\pm$0.00 & 1.02$\pm$0.00 & 0.88$\pm$0.00 & \red{0.79$\pm$0.00} & \red{0.79$\pm$0.00} \\
\bottomrule
\end{tabular}}
\end{table*}
\begin{table*}[!ht]
\centering
\caption{Result table on $\mathcal{L}_\text{upset, naive}$ for each year in the time series matches, applying the trained model for 1985 directly without further training, averaged over 10 runs, and plus/minus one standard deviation. The best is marked in \red{bold red} while the second best is highlighted in \blue{underline blue}. As MVR could not generate scores, we omit the results here.}
\label{tab:upset_naive_matches_inductive}
\resizebox{1\linewidth}{!}{\begin{tabular}{lrrrrrrrrrrrrrrrrr}
\toprule
Data  & SpringRank & SyncRank & SerialRank & BTL & DavidScore & Eig.Cent. & PageRank & RankCent. & SVD\_RS & SVD\_NRS& Directly Apply&Train Specifically \\
\midrule
Basketball finer(1985) & 0.19$\pm$0.00 & 0.41$\pm$0.00 & 0.49$\pm$0.02 & 0.36$\pm$0.01 & 0.21$\pm$0.00 & 0.29$\pm$0.00 & 0.29$\pm$0.00 & 0.49$\pm$0.00 & 0.25$\pm$0.00 & 0.22$\pm$0.00 & \red{0.18$\pm$0.00} & \red{0.18$\pm$0.00} \\
			Basketball finer(1986) & 0.19$\pm$0.00 & 0.45$\pm$0.00 & 0.50$\pm$0.00 & 0.36$\pm$0.02 & 0.21$\pm$0.00 & 0.29$\pm$0.00 & 0.29$\pm$0.00 & 0.50$\pm$0.00 & 0.27$\pm$0.00 & 0.21$\pm$0.00 & \red{0.17$\pm$0.00} & \red{0.17$\pm$0.00} \\
			Basketball finer(1987) & 0.20$\pm$0.00 & 0.45$\pm$0.00 & 0.47$\pm$0.00 & 0.35$\pm$0.01 & 0.22$\pm$0.00 & 0.29$\pm$0.00 & 0.30$\pm$0.00 & 0.49$\pm$0.00 & 0.25$\pm$0.00 & 0.23$\pm$0.00 & \red{0.19$\pm$0.00} & \red{0.19$\pm$0.00} \\
			Basketball finer(1988) & 0.19$\pm$0.00 & 0.45$\pm$0.00 & 0.48$\pm$0.00 & 0.36$\pm$0.03 & 0.21$\pm$0.00 & 0.31$\pm$0.00 & 0.30$\pm$0.00 & 0.49$\pm$0.00 & 0.24$\pm$0.00 & 0.21$\pm$0.00 & \red{0.18$\pm$0.00} & \red{0.18$\pm$0.00} \\
			Basketball finer(1989) & \blue{0.19$\pm$0.00} & 0.42$\pm$0.00 & 0.46$\pm$0.00 & 0.36$\pm$0.01 & 0.21$\pm$0.00 & 0.28$\pm$0.00 & 0.29$\pm$0.00 & 0.49$\pm$0.00 & 0.25$\pm$0.00 & 0.23$\pm$0.00 & 0.22$\pm$0.13 & \red{0.18$\pm$0.00} \\
			Basketball finer(1990) & 0.20$\pm$0.00 & 0.42$\pm$0.00 & 0.48$\pm$0.00 & 0.36$\pm$0.01 & 0.21$\pm$0.00 & 0.32$\pm$0.00 & 0.29$\pm$0.00 & 0.50$\pm$0.00 & 0.23$\pm$0.00 & 0.21$\pm$0.00 & \red{0.18$\pm$0.00} & \red{0.18$\pm$0.00} \\
			Basketball finer(1991) & 0.20$\pm$0.00 & 0.46$\pm$0.00 & 0.51$\pm$0.00 & 0.34$\pm$0.02 & 0.21$\pm$0.00 & 0.35$\pm$0.00 & 0.33$\pm$0.00 & 0.49$\pm$0.00 & 0.25$\pm$0.00 & 0.22$\pm$0.00 & \red{0.18$\pm$0.00} & \red{0.18$\pm$0.00} \\
			Basketball finer(1992) & 0.18$\pm$0.00 & 0.43$\pm$0.00 & 0.47$\pm$0.00 & 0.33$\pm$0.01 & 0.19$\pm$0.00 & 0.31$\pm$0.00 & 0.30$\pm$0.00 & 0.47$\pm$0.00 & 0.24$\pm$0.00 & 0.21$\pm$0.00 & \red{0.17$\pm$0.00} & \red{0.17$\pm$0.00} \\
			Basketball finer(1993) & 0.19$\pm$0.00 & 0.42$\pm$0.00 & 0.51$\pm$0.00 & 0.34$\pm$0.01 & 0.20$\pm$0.00 & 0.29$\pm$0.00 & 0.27$\pm$0.00 & 0.49$\pm$0.00 & 0.25$\pm$0.00 & 0.21$\pm$0.00 & \red{0.17$\pm$0.00} & \red{0.17$\pm$0.00} \\
			Basketball finer(1994) & 0.18$\pm$0.00 & 0.42$\pm$0.00 & 0.50$\pm$0.00 & 0.34$\pm$0.02 & 0.19$\pm$0.00 & 0.31$\pm$0.00 & 0.27$\pm$0.00 & 0.49$\pm$0.00 & 0.22$\pm$0.00 & 0.21$\pm$0.00 & \red{0.17$\pm$0.00} & \red{0.17$\pm$0.00} \\
			Basketball finer(1995) & 0.20$\pm$0.00 & 0.44$\pm$0.00 & 0.47$\pm$0.00 & 0.34$\pm$0.02 & 0.21$\pm$0.00 & 0.30$\pm$0.00 & 0.28$\pm$0.00 & 0.48$\pm$0.00 & 0.24$\pm$0.00 & 0.22$\pm$0.00 & \red{0.18$\pm$0.00} & \red{0.18$\pm$0.00} \\
			Basketball finer(1996) & 0.20$\pm$0.00 & 0.42$\pm$0.00 & 0.49$\pm$0.00 & 0.36$\pm$0.02 & 0.22$\pm$0.00 & 0.30$\pm$0.00 & 0.30$\pm$0.00 & 0.49$\pm$0.00 & 0.27$\pm$0.00 & 0.24$\pm$0.00 & \red{0.19$\pm$0.00} & \red{0.19$\pm$0.00} \\
			Basketball finer(1997) & 0.21$\pm$0.00 & 0.44$\pm$0.00 & 0.49$\pm$0.00 & 0.35$\pm$0.01 & 0.21$\pm$0.00 & 0.30$\pm$0.00 & 0.29$\pm$0.00 & 0.51$\pm$0.00 & 0.24$\pm$0.00 & 0.23$\pm$0.00 & \red{0.19$\pm$0.00} & \red{0.19$\pm$0.00} \\
			Basketball finer(1998) & 0.20$\pm$0.00 & 0.42$\pm$0.00 & 0.48$\pm$0.00 & 0.34$\pm$0.02 & 0.21$\pm$0.00 & 0.29$\pm$0.00 & 0.28$\pm$0.00 & 0.48$\pm$0.00 & 0.24$\pm$0.00 & 0.22$\pm$0.00 & \red{0.18$\pm$0.00} & \red{0.18$\pm$0.00} \\
			Basketball finer(1999) & 0.20$\pm$0.00 & 0.41$\pm$0.00 & 0.50$\pm$0.00 & 0.34$\pm$0.02 & 0.22$\pm$0.00 & 0.29$\pm$0.00 & 0.28$\pm$0.00 & 0.50$\pm$0.00 & 0.29$\pm$0.00 & 0.24$\pm$0.00 & \red{0.18$\pm$0.00} & \red{0.18$\pm$0.00} \\
			Basketball finer(2000) & 0.21$\pm$0.00 & 0.44$\pm$0.00 & 0.49$\pm$0.00 & 0.35$\pm$0.01 & 0.23$\pm$0.00 & 0.32$\pm$0.00 & 0.30$\pm$0.00 & 0.48$\pm$0.00 & 0.28$\pm$0.00 & 0.24$\pm$0.00 & \red{0.19$\pm$0.00} & \red{0.19$\pm$0.00} \\
			Basketball finer(2001) & 0.20$\pm$0.00 & 0.42$\pm$0.00 & 0.51$\pm$0.00 & 0.35$\pm$0.01 & 0.21$\pm$0.00 & 0.31$\pm$0.00 & 0.30$\pm$0.00 & 0.51$\pm$0.00 & 0.27$\pm$0.00 & 0.24$\pm$0.00 & \red{0.18$\pm$0.00} & \red{0.18$\pm$0.00} \\
			Basketball finer(2002) & 0.22$\pm$0.00 & 0.44$\pm$0.00 & 0.47$\pm$0.00 & 0.36$\pm$0.02 & 0.22$\pm$0.00 & 0.30$\pm$0.00 & 0.28$\pm$0.00 & 0.51$\pm$0.00 & 0.27$\pm$0.00 & 0.23$\pm$0.00 & \red{0.19$\pm$0.00} & \red{0.19$\pm$0.00} \\
			Basketball finer(2003) & 0.22$\pm$0.00 & 0.45$\pm$0.00 & 0.50$\pm$0.02 & 0.36$\pm$0.02 & 0.23$\pm$0.00 & 0.29$\pm$0.00 & 0.29$\pm$0.00 & 0.50$\pm$0.00 & 0.26$\pm$0.00 & 0.24$\pm$0.00 & \red{0.19$\pm$0.00} & \red{0.19$\pm$0.00} \\
			Basketball finer(2004) & 0.19$\pm$0.00 & 0.43$\pm$0.01 & 0.47$\pm$0.00 & 0.35$\pm$0.01 & 0.20$\pm$0.00 & 0.29$\pm$0.00 & 0.28$\pm$0.00 & 0.49$\pm$0.00 & 0.24$\pm$0.00 & 0.22$\pm$0.00 & \red{0.18$\pm$0.00} & \red{0.18$\pm$0.00} \\
			Basketball finer(2005) & \blue{0.21$\pm$0.00} & 0.46$\pm$0.00 & 0.48$\pm$0.00 & 0.34$\pm$0.01 & 0.22$\pm$0.00 & 0.29$\pm$0.00 & 0.27$\pm$0.00 & 0.50$\pm$0.00 & 0.27$\pm$0.00 & 0.24$\pm$0.00 & 0.23$\pm$0.13 & \red{0.19$\pm$0.00} \\
			Basketball finer(2006) & 0.21$\pm$0.00 & 0.44$\pm$0.00 & 0.49$\pm$0.00 & 0.35$\pm$0.02 & 0.21$\pm$0.00 & 0.30$\pm$0.00 & 0.28$\pm$0.00 & 0.49$\pm$0.00 & 0.27$\pm$0.00 & 0.23$\pm$0.00 & \red{0.19$\pm$0.00} & \red{0.19$\pm$0.00} \\
			Basketball finer(2007) & 0.21$\pm$0.00 & 0.46$\pm$0.00 & 0.49$\pm$0.03 & 0.35$\pm$0.01 & 0.23$\pm$0.00 & 0.29$\pm$0.00 & 0.27$\pm$0.00 & 0.48$\pm$0.00 & 0.24$\pm$0.00 & 0.23$\pm$0.00 & \red{0.20$\pm$0.00} & \red{0.20$\pm$0.00} \\
			Basketball finer(2008) & 0.21$\pm$0.00 & 0.43$\pm$0.00 & 0.49$\pm$0.00 & 0.34$\pm$0.02 & 0.22$\pm$0.00 & 0.30$\pm$0.00 & 0.28$\pm$0.00 & 0.49$\pm$0.00 & 0.25$\pm$0.00 & 0.23$\pm$0.00 & \red{0.20$\pm$0.00} & \red{0.20$\pm$0.00} \\
			Basketball finer(2009) & 0.21$\pm$0.00 & 0.43$\pm$0.00 & 0.50$\pm$0.01 & 0.34$\pm$0.01 & 0.22$\pm$0.00 & 0.29$\pm$0.00 & 0.28$\pm$0.00 & 0.52$\pm$0.00 & 0.25$\pm$0.00 & 0.23$\pm$0.00 & \red{0.19$\pm$0.00} & \red{0.19$\pm$0.00} \\
			Basketball finer(2010) & 0.20$\pm$0.00 & 0.42$\pm$0.00 & 0.49$\pm$0.00 & 0.33$\pm$0.02 & 0.21$\pm$0.00 & 0.28$\pm$0.00 & 0.28$\pm$0.00 & 0.49$\pm$0.00 & 0.24$\pm$0.00 & 0.23$\pm$0.00 & \red{0.19$\pm$0.00} & \red{0.19$\pm$0.00} \\
			Basketball finer(2011) & 0.21$\pm$0.00 & 0.41$\pm$0.00 & 0.49$\pm$0.00 & 0.34$\pm$0.01 & 0.22$\pm$0.00 & 0.28$\pm$0.00 & 0.28$\pm$0.00 & 0.50$\pm$0.00 & 0.24$\pm$0.00 & 0.22$\pm$0.00 & \red{0.19$\pm$0.00} & \red{0.19$\pm$0.00} \\
			Basketball finer(2012) & 0.20$\pm$0.00 & 0.42$\pm$0.00 & 0.49$\pm$0.00 & 0.34$\pm$0.02 & 0.21$\pm$0.00 & 0.29$\pm$0.00 & 0.28$\pm$0.00 & 0.48$\pm$0.00 & 0.23$\pm$0.00 & 0.22$\pm$0.00 & \red{0.19$\pm$0.00} & \red{0.19$\pm$0.00} \\
			Basketball finer(2013) & 0.21$\pm$0.00 & 0.43$\pm$0.00 & 0.52$\pm$0.00 & 0.34$\pm$0.02 & 0.22$\pm$0.00 & 0.29$\pm$0.00 & 0.28$\pm$0.00 & 0.49$\pm$0.00 & 0.25$\pm$0.00 & 0.23$\pm$0.00 & \red{0.20$\pm$0.00} & \red{0.20$\pm$0.00} \\
			Basketball finer(2014) & 0.21$\pm$0.00 & 0.44$\pm$0.05 & 0.52$\pm$0.00 & 0.36$\pm$0.02 & 0.22$\pm$0.00 & 0.30$\pm$0.00 & 0.28$\pm$0.00 & 0.49$\pm$0.00 & 0.26$\pm$0.00 & 0.22$\pm$0.00 & \red{0.20$\pm$0.00} & \red{0.20$\pm$0.00} \\
\bottomrule
\end{tabular}}
\end{table*}
\begin{table*}[!ht]
\centering
\caption{Result table on $\mathcal{L}_\text{upset, ratio}$ for each year in the time series matches, applying the trained model for 1985 directly without further training, averaged over 10 runs, and plus/minus one standard deviation. The best is marked in \red{bold red} while the second best is highlighted in \blue{underline blue}. As MVR could not generate scores, we omit the results here.}
\label{tab:upset_ratio_matches_inductive}
\resizebox{1\linewidth}{!}{\begin{tabular}{lrrrrrrrrrrrrrrrrr}
\toprule
Data  & SpringRank & SyncRank & SerialRank & BTL & DavidScore & Eig.Cent. & PageRank & RankCent. & SVD\_RS & SVD\_NRS& Directly Apply&Train Specifically \\
\midrule
Basketball finer(1985) & \red{0.01$\pm$0.00} & \red{0.01$\pm$0.00} & \red{0.01$\pm$0.00} & \red{0.01$\pm$0.00} & 0.47$\pm$0.00 & \red{0.01$\pm$0.00} & \red{0.01$\pm$0.00} & 0.57$\pm$0.00 & 0.46$\pm$0.00 & 0.46$\pm$0.00 & \red{0.01$\pm$0.00} & \red{0.01$\pm$0.00} \\
			Basketball finer(1986) & \red{0.01$\pm$0.00} & \red{0.01$\pm$0.00} & \red{0.01$\pm$0.00} & \red{0.01$\pm$0.00} & 0.49$\pm$0.00 & \red{0.01$\pm$0.00} & \red{0.01$\pm$0.00} & 0.60$\pm$0.00 & 0.48$\pm$0.00 & 0.49$\pm$0.00 & \red{0.01$\pm$0.00} & \red{0.01$\pm$0.00} \\
			Basketball finer(1987) & \red{0.01$\pm$0.00} & \red{0.01$\pm$0.00} & \red{0.01$\pm$0.00} & \red{0.01$\pm$0.00} & 0.51$\pm$0.00 & \red{0.01$\pm$0.00} & \red{0.01$\pm$0.00} & 0.58$\pm$0.00 & 0.46$\pm$0.00 & 0.48$\pm$0.00 & \red{0.01$\pm$0.00} & \red{0.01$\pm$0.00} \\
			Basketball finer(1988) & \red{0.01$\pm$0.00} & \red{0.01$\pm$0.00} & \red{0.01$\pm$0.00} & \red{0.01$\pm$0.00} & 0.52$\pm$0.00 & \red{0.01$\pm$0.00} & \red{0.01$\pm$0.00} & 0.57$\pm$0.00 & 0.56$\pm$0.00 & 0.52$\pm$0.00 & \red{0.01$\pm$0.00} & \red{0.01$\pm$0.00} \\
			Basketball finer(1989) & \red{0.01$\pm$0.00} & \red{0.01$\pm$0.00} & \red{0.01$\pm$0.00} & \red{0.01$\pm$0.00} & 0.49$\pm$0.00 & \red{0.01$\pm$0.00} & \red{0.01$\pm$0.00} & 0.56$\pm$0.00 & 0.49$\pm$0.00 & 0.49$\pm$0.00 & \red{0.01$\pm$0.00} & \red{0.01$\pm$0.00} \\
			Basketball finer(1990) & \red{0.01$\pm$0.00} & \red{0.01$\pm$0.00} & \red{0.01$\pm$0.00} & \red{0.01$\pm$0.00} & 0.49$\pm$0.00 & \red{0.01$\pm$0.00} & \red{0.01$\pm$0.00} & 0.55$\pm$0.00 & 0.48$\pm$0.00 & 0.49$\pm$0.00 & \red{0.01$\pm$0.00} & \red{0.01$\pm$0.00} \\
			Basketball finer(1991) & \red{0.01$\pm$0.00} & \red{0.01$\pm$0.00} & \red{0.01$\pm$0.00} & \red{0.01$\pm$0.00} & 0.49$\pm$0.00 & \red{0.01$\pm$0.00} & \red{0.01$\pm$0.00} & 0.54$\pm$0.00 & 0.49$\pm$0.00 & 0.47$\pm$0.00 & \red{0.01$\pm$0.00} & \red{0.01$\pm$0.00} \\
			Basketball finer(1992) & \red{0.01$\pm$0.00} & \red{0.01$\pm$0.00} & \red{0.01$\pm$0.00} & \red{0.01$\pm$0.00} & 0.49$\pm$0.00 & \red{0.01$\pm$0.00} & \red{0.01$\pm$0.00} & 0.55$\pm$0.00 & 0.44$\pm$0.00 & 0.49$\pm$0.00 & \red{0.01$\pm$0.00} & \red{0.01$\pm$0.00} \\
			Basketball finer(1993) & \red{0.01$\pm$0.00} & \red{0.01$\pm$0.00} & \red{0.01$\pm$0.00} & \red{0.01$\pm$0.00} & 0.51$\pm$0.00 & \red{0.01$\pm$0.00} & \red{0.01$\pm$0.00} & 0.58$\pm$0.00 & 0.47$\pm$0.00 & 0.49$\pm$0.00 & \red{0.01$\pm$0.00} & \red{0.01$\pm$0.00} \\
			Basketball finer(1994) & \red{0.01$\pm$0.00} & \red{0.01$\pm$0.00} & \red{0.01$\pm$0.00} & \red{0.01$\pm$0.00} & 0.49$\pm$0.00 & \red{0.01$\pm$0.00} & \red{0.01$\pm$0.00} & 0.58$\pm$0.00 & 0.46$\pm$0.00 & 0.49$\pm$0.00 & \red{0.01$\pm$0.00} & \red{0.01$\pm$0.00} \\
			Basketball finer(1995) & \red{0.01$\pm$0.00} & \red{0.01$\pm$0.00} & \red{0.01$\pm$0.00} & \red{0.01$\pm$0.00} & 0.50$\pm$0.00 & \red{0.01$\pm$0.00} & \red{0.01$\pm$0.00} & 0.55$\pm$0.00 & 0.47$\pm$0.00 & 0.49$\pm$0.00 & \red{0.01$\pm$0.00} & \red{0.01$\pm$0.00} \\
			Basketball finer(1996) & \red{0.01$\pm$0.00} & \red{0.01$\pm$0.00} & \red{0.01$\pm$0.00} & \red{0.01$\pm$0.00} & 0.48$\pm$0.00 & \red{0.01$\pm$0.00} & \red{0.01$\pm$0.00} & 0.56$\pm$0.00 & 0.45$\pm$0.00 & 0.47$\pm$0.00 & \red{0.01$\pm$0.00} & \red{0.01$\pm$0.00} \\
			Basketball finer(1997) & 0.04$\pm$0.00 & \red{0.01$\pm$0.00} & \red{0.01$\pm$0.00} & \red{0.01$\pm$0.00} & 0.50$\pm$0.00 & \red{0.01$\pm$0.00} & \red{0.01$\pm$0.00} & 0.56$\pm$0.00 & 0.48$\pm$0.00 & 0.47$\pm$0.00 & \red{0.01$\pm$0.00} & \red{0.01$\pm$0.00} \\
			Basketball finer(1998) & \red{0.01$\pm$0.00} & \red{0.01$\pm$0.00} & \red{0.01$\pm$0.00} & \red{0.01$\pm$0.00} & 0.50$\pm$0.00 & \red{0.01$\pm$0.00} & \red{0.01$\pm$0.00} & 0.56$\pm$0.00 & 0.47$\pm$0.00 & 0.45$\pm$0.00 & \red{0.01$\pm$0.00} & \red{0.01$\pm$0.00} \\
			Basketball finer(1999) & \red{0.01$\pm$0.00} & \red{0.01$\pm$0.00} & \red{0.01$\pm$0.00} & \red{0.01$\pm$0.00} & 0.50$\pm$0.00 & \red{0.01$\pm$0.00} & \red{0.01$\pm$0.00} & 0.57$\pm$0.00 & 0.48$\pm$0.00 & 0.46$\pm$0.00 & \red{0.01$\pm$0.00} & \red{0.01$\pm$0.00} \\
			Basketball finer(2000) & \red{0.01$\pm$0.00} & \red{0.01$\pm$0.00} & \red{0.01$\pm$0.00} & \red{0.01$\pm$0.00} & 0.51$\pm$0.00 & \red{0.01$\pm$0.00} & \red{0.01$\pm$0.00} & 0.55$\pm$0.00 & 0.50$\pm$0.00 & 0.47$\pm$0.00 & \red{0.01$\pm$0.00} & \red{0.01$\pm$0.00} \\
			Basketball finer(2001) & \red{0.01$\pm$0.00} & \red{0.01$\pm$0.00} & \red{0.01$\pm$0.00} & \red{0.01$\pm$0.00} & 0.51$\pm$0.00 & \red{0.01$\pm$0.00} & \red{0.01$\pm$0.00} & 0.59$\pm$0.00 & 0.51$\pm$0.00 & 0.47$\pm$0.00 & \red{0.01$\pm$0.00} & \red{0.01$\pm$0.00} \\
			Basketball finer(2002) & \red{0.01$\pm$0.00} & \red{0.01$\pm$0.00} & \red{0.01$\pm$0.00} & \red{0.01$\pm$0.00} & 0.51$\pm$0.00 & \red{0.01$\pm$0.00} & \red{0.01$\pm$0.00} & 0.57$\pm$0.00 & 0.51$\pm$0.00 & 0.50$\pm$0.00 & \red{0.01$\pm$0.00} & \red{0.01$\pm$0.00} \\
			Basketball finer(2003) & \red{0.01$\pm$0.00} & \red{0.01$\pm$0.00} & \red{0.01$\pm$0.00} & \red{0.01$\pm$0.00} & 0.50$\pm$0.00 & \red{0.01$\pm$0.00} & \red{0.01$\pm$0.00} & 0.58$\pm$0.00 & 0.47$\pm$0.00 & 0.48$\pm$0.00 & \red{0.01$\pm$0.00} & \red{0.01$\pm$0.00} \\
			Basketball finer(2004) & \red{0.01$\pm$0.00} & \red{0.01$\pm$0.00} & \red{0.01$\pm$0.00} & \red{0.01$\pm$0.00} & 0.52$\pm$0.00 & \red{0.01$\pm$0.00} & \red{0.01$\pm$0.00} & 0.57$\pm$0.00 & 0.45$\pm$0.00 & 0.47$\pm$0.00 & \red{0.01$\pm$0.00} & \red{0.01$\pm$0.00} \\
			Basketball finer(2005) & \red{0.01$\pm$0.00} & \red{0.01$\pm$0.00} & \red{0.01$\pm$0.00} & \red{0.01$\pm$0.00} & 0.51$\pm$0.00 & \red{0.01$\pm$0.00} & \red{0.01$\pm$0.00} & 0.60$\pm$0.00 & 0.43$\pm$0.00 & 0.47$\pm$0.00 & \red{0.01$\pm$0.00} & \red{0.01$\pm$0.00} \\
			Basketball finer(2006) & \red{0.01$\pm$0.00} & \red{0.01$\pm$0.00} & \red{0.01$\pm$0.00} & \red{0.01$\pm$0.00} & 0.51$\pm$0.00 & \red{0.01$\pm$0.00} & \red{0.01$\pm$0.00} & 0.57$\pm$0.00 & 0.46$\pm$0.00 & 0.48$\pm$0.00 & \red{0.01$\pm$0.00} & \red{0.01$\pm$0.00} \\
			Basketball finer(2007) & \red{0.01$\pm$0.00} & \red{0.01$\pm$0.00} & \red{0.01$\pm$0.00} & \red{0.01$\pm$0.00} & 0.53$\pm$0.00 & \red{0.01$\pm$0.00} & \red{0.01$\pm$0.00} & 0.53$\pm$0.00 & 0.49$\pm$0.00 & 0.48$\pm$0.00 & \red{0.01$\pm$0.00} & \red{0.01$\pm$0.00} \\
			Basketball finer(2008) & \red{0.01$\pm$0.00} & \red{0.01$\pm$0.00} & \red{0.01$\pm$0.00} & \red{0.01$\pm$0.00} & 0.51$\pm$0.00 & \red{0.01$\pm$0.00} & \red{0.01$\pm$0.00} & 0.59$\pm$0.00 & 0.47$\pm$0.00 & 0.47$\pm$0.00 & \red{0.01$\pm$0.00} & \red{0.01$\pm$0.00} \\
			Basketball finer(2009) & \red{0.01$\pm$0.00} & \red{0.01$\pm$0.00} & \red{0.01$\pm$0.00} & \red{0.01$\pm$0.00} & 0.52$\pm$0.00 & \red{0.01$\pm$0.00} & \red{0.01$\pm$0.00} & 0.57$\pm$0.00 & 0.50$\pm$0.00 & 0.49$\pm$0.00 & \red{0.01$\pm$0.00} & \red{0.01$\pm$0.00} \\
			Basketball finer(2010) & \red{0.01$\pm$0.00} & \red{0.01$\pm$0.00} & \red{0.01$\pm$0.00} & \red{0.01$\pm$0.00} & 0.52$\pm$0.00 & \red{0.01$\pm$0.00} & \red{0.01$\pm$0.00} & 0.58$\pm$0.00 & 0.48$\pm$0.00 & 0.47$\pm$0.00 & \red{0.01$\pm$0.00} & \red{0.01$\pm$0.00} \\
			Basketball finer(2011) & \red{0.01$\pm$0.00} & \red{0.01$\pm$0.00} & \red{0.01$\pm$0.00} & \red{0.01$\pm$0.00} & 0.52$\pm$0.00 & \red{0.01$\pm$0.00} & \red{0.01$\pm$0.00} & 0.59$\pm$0.00 & 0.46$\pm$0.00 & 0.47$\pm$0.00 & \red{0.01$\pm$0.00} & \red{0.01$\pm$0.00} \\
			Basketball finer(2012) & \red{0.01$\pm$0.00} & \red{0.01$\pm$0.00} & \red{0.01$\pm$0.00} & \red{0.01$\pm$0.00} & 0.52$\pm$0.00 & \red{0.01$\pm$0.00} & \red{0.01$\pm$0.00} & 0.57$\pm$0.00 & 0.48$\pm$0.00 & 0.48$\pm$0.00 & \red{0.01$\pm$0.00} & \red{0.01$\pm$0.00} \\
			Basketball finer(2013) & \red{0.01$\pm$0.00} & \red{0.01$\pm$0.00} & \red{0.01$\pm$0.00} & \red{0.01$\pm$0.00} & 0.52$\pm$0.00 & \red{0.01$\pm$0.00} & \red{0.01$\pm$0.00} & 0.56$\pm$0.00 & 0.46$\pm$0.00 & 0.48$\pm$0.00 & \red{0.01$\pm$0.00} & \red{0.01$\pm$0.00} \\
			Basketball finer(2014) & \red{0.01$\pm$0.00} & \red{0.01$\pm$0.00} & \red{0.01$\pm$0.00} & \red{0.01$\pm$0.00} & 0.53$\pm$0.00 & \red{0.01$\pm$0.00} & \red{0.01$\pm$0.00} & 0.56$\pm$0.00 & 0.48$\pm$0.00 & 0.47$\pm$0.00 & \red{0.01$\pm$0.00} & \red{0.01$\pm$0.00} \\
\bottomrule
\end{tabular}}
\end{table*}

\section{Improvement on Baselines when Employed as Initial Guess for ``Proximal Baseline" Variant}
\label{appendix_sec:improvement_over_baselines}
Table \ref{tab:upset_simple_improve_over_baseline} shows improvements on $\mathcal{L}_{upset, simple}$ by ``proximal baseline" when setting a certain baseline as $\mathbf{r}'$. Across all data sets, ``proximal baseline" improves the most, by 1.02, with SyncRank as initial guess,
while the average improvement for SpringRank, SerialRank, BTL, Eig.Cent., PageRank and SVD\_NRS are 0.07, 0.82, 0.22, 0.19, 0.21, and 0.12, respectively.

Table \ref{tab:upset_naive_improve_over_baseline} shows improvements on $\mathcal{L}_{upset, naive}$ by ``proximal baseline" when setting a certain baseline as $\mathbf{r}'$. Across all real data sets, ``proximal baseline" improves the most, by 0.24, again with SyncRank as initial guess,
while the average improvement for SpringRank, SerialRank, BTL, Eig.Cent., PageRank and SVD\_NRS are 0.00, 0.18, 0.04, 0.03, 0.03, and 0.01, respectively.

\begin{table*}[!ht]
\centering
\caption{Result table on $\mathcal{L}_\text{upset, simple}$ improvement with ``proximal baseline" training starting from a baseline as initial guess, for individual directed graphs, averaged over 10 runs, and plus/minus one standard deviation. The best is marked in \red{bold red} while the second best is highlighted in \blue{underline blue}.}
\label{tab:upset_simple_improve_over_baseline}
\resizebox{0.5\linewidth}{!}{\begin{tabular}{lrrrrrrrrrrrrrrrrrr}
\toprule
Data  & SpringRank & SyncRank & SerialRank & BTL  & Eig.Cent. & PageRank  & SVD\_NRS  \\
\midrule
{\it HeadToHead} & -0.01$\pm$0.00 & \blue{-0.98$\pm$0.00} & \red{-1.02$\pm$0.00} & -0.12$\pm$0.03 & -0.48$\pm$0.00 & -0.37$\pm$0.00 & -0.43$\pm$0.00 \\
			{\it Finance} & -0.63$\pm$0.00 & \red{-0.98$\pm$0.00} & -0.61$\pm$0.00 & \blue{-0.78$\pm$0.01} & -0.74$\pm$0.00 & -0.75$\pm$0.00 & -0.64$\pm$0.00 \\
			{\it Animal} & -0.09$\pm$0.05 & \blue{-1.36$\pm$0.24} & \red{-1.42$\pm$0.50} & -0.01$\pm$0.01 & -0.02$\pm$0.02 & -0.08$\pm$0.07 & -0.02$\pm$0.07 \\
			{\it Faculty: Business} & -0.03$\pm$0.00 & \blue{-0.47$\pm$0.00} & \red{-0.70$\pm$0.04} & -0.00$\pm$0.02 & -0.02$\pm$0.03 & -0.04$\pm$0.03 & -0.03$\pm$0.02 \\
			{\it Faculty: CS} & -0.00$\pm$0.00 & \blue{-0.66$\pm$0.10} & \red{-0.72$\pm$0.07} & 0.00$\pm$0.01 & 0.01$\pm$0.03 & -0.00$\pm$0.00 & -0.10$\pm$0.04 \\
			{\it Faculty: History} & -0.02$\pm$0.01 & \blue{-0.27$\pm$0.00} & \red{-1.77$\pm$0.82} & 0.00$\pm$0.01 & -0.01$\pm$0.03 & -0.00$\pm$0.00 & -0.00$\pm$0.00 \\
			{\it Football(2009)} & -0.00$\pm$0.00 & \red{-0.70$\pm$0.46} & -0.16$\pm$0.06 & -0.03$\pm$0.08 & -0.12$\pm$0.05 & \blue{-0.24$\pm$0.06} & -0.02$\pm$0.01 \\
			{\it Football(2010)} & -0.24$\pm$0.06 & \red{-0.85$\pm$0.17} & \blue{-0.58$\pm$0.06} & -0.22$\pm$0.09 & -0.40$\pm$0.04 & -0.41$\pm$0.04 & -0.21$\pm$0.05 \\
			{\it Football(2011)} & -0.00$\pm$0.01 & \red{-0.74$\pm$0.37} & -0.10$\pm$0.06 & -0.03$\pm$0.04 & -0.13$\pm$0.09 & \blue{-0.19$\pm$0.09} & -0.02$\pm$0.09 \\
			{\it Football(2012)} & -0.17$\pm$0.06 & \red{-0.72$\pm$0.04} & \blue{-0.57$\pm$0.07} & -0.14$\pm$0.10 & -0.10$\pm$0.09 & -0.22$\pm$0.03 & -0.05$\pm$0.08 \\
			{\it Football(2013)} & -0.03$\pm$0.03 & \red{-1.08$\pm$0.10} & -0.08$\pm$0.06 & -0.06$\pm$0.05 & \blue{-0.16$\pm$0.06} & -0.15$\pm$0.03 & -0.04$\pm$0.03 \\
			{\it Football(2014)} & -0.17$\pm$0.05 & \red{-0.83$\pm$0.20} & \blue{-0.59$\pm$0.08} & -0.17$\pm$0.06 & -0.27$\pm$0.09 & -0.24$\pm$0.07 & -0.07$\pm$0.09 \\
			{\it Football finer(2009)} & -0.03$\pm$0.03 & \red{-1.04$\pm$0.05} & -0.18$\pm$0.06 & -0.06$\pm$0.14 & -0.17$\pm$0.11 & \blue{-0.24$\pm$0.12} & -0.02$\pm$0.03 \\
			{\it Football finer(2010)} & -0.30$\pm$0.02 & \red{-0.63$\pm$0.02} & -0.45$\pm$0.02 & -0.31$\pm$0.09 & -0.37$\pm$0.00 & \blue{-0.49$\pm$0.00} & -0.17$\pm$0.00 \\
			{\it Football finer(2011)} & -0.00$\pm$0.02 & \red{-0.75$\pm$0.04} & -0.14$\pm$0.06 & -0.03$\pm$0.06 & -0.08$\pm$0.05 & \blue{-0.19$\pm$0.04} & -0.01$\pm$0.02 \\
			{\it Football finer(2012)} & -0.03$\pm$0.03 & \red{-0.90$\pm$0.18} & -0.12$\pm$0.08 & -0.03$\pm$0.06 & -0.14$\pm$0.06 & \blue{-0.29$\pm$0.05} & -0.01$\pm$0.03 \\
			{\it Football finer(2013)} & -0.13$\pm$0.07 & \red{-1.01$\pm$0.02} & -0.12$\pm$0.07 & -0.08$\pm$0.05 & -0.19$\pm$0.11 & \blue{-0.26$\pm$0.09} & -0.07$\pm$0.03 \\
			{\it Football finer(2014)} & -0.16$\pm$0.00 & \red{-0.74$\pm$0.04} & \blue{-0.57$\pm$0.00} & -0.19$\pm$0.04 & -0.23$\pm$0.00 & -0.23$\pm$0.00 & -0.09$\pm$0.03 \\
			{\it Basketball(1985)} & -0.01$\pm$0.01 & \blue{-0.91$\pm$0.00} & \red{-1.26$\pm$0.04} & -0.10$\pm$0.07 & -0.05$\pm$0.01 & -0.07$\pm$0.02 & -0.07$\pm$0.01 \\
			{\it Basketball(1986)} & -0.01$\pm$0.01 & \blue{-1.10$\pm$0.00} & \red{-1.11$\pm$0.03} & -0.14$\pm$0.02 & -0.07$\pm$0.01 & -0.06$\pm$0.02 & -0.03$\pm$0.02 \\
			{\it Basketball(1987)} & -0.00$\pm$0.01 & \red{-1.02$\pm$0.01} & \blue{-0.94$\pm$0.03} & -0.11$\pm$0.02 & -0.11$\pm$0.02 & -0.01$\pm$0.01 & -0.05$\pm$0.02 \\
			{\it Basketball(1988)} & -0.00$\pm$0.00 & \blue{-0.96$\pm$0.00} & \red{-1.07$\pm$0.04} & -0.11$\pm$0.04 & -0.06$\pm$0.02 & -0.04$\pm$0.02 & -0.03$\pm$0.01 \\
			{\it Basketball(1989)} & 0.00$\pm$0.00 & \red{-1.04$\pm$0.00} & \blue{-0.99$\pm$0.02} & -0.12$\pm$0.02 & -0.00$\pm$0.00 & -0.02$\pm$0.01 & -0.07$\pm$0.01 \\
			{\it Basketball(1990)} & -0.00$\pm$0.01 & \blue{-0.98$\pm$0.00} & \red{-1.09$\pm$0.03} & -0.10$\pm$0.11 & -0.09$\pm$0.01 & -0.07$\pm$0.01 & -0.05$\pm$0.01 \\
			{\it Basketball(1991)} & -0.02$\pm$0.01 & \blue{-1.10$\pm$0.00} & \red{-1.15$\pm$0.04} & -0.09$\pm$0.06 & -0.03$\pm$0.01 & -0.01$\pm$0.02 & -0.08$\pm$0.01 \\
			{\it Basketball(1992)} & -0.01$\pm$0.01 & \red{-1.07$\pm$0.00} & \blue{-1.05$\pm$0.02} & -0.06$\pm$0.10 & -0.04$\pm$0.01 & 0.00$\pm$0.00 & -0.07$\pm$0.01 \\
			{\it Basketball(1993)} & -0.01$\pm$0.01 & \blue{-1.00$\pm$0.01} & \red{-1.18$\pm$0.06} & -0.09$\pm$0.07 & -0.05$\pm$0.01 & -0.06$\pm$0.01 & -0.07$\pm$0.01 \\
			{\it Basketball(1994)} & 0.00$\pm$0.00 & \blue{-0.98$\pm$0.00} & \red{-1.15$\pm$0.02} & -0.02$\pm$0.09 & -0.02$\pm$0.02 & -0.03$\pm$0.01 & -0.05$\pm$0.01 \\
			{\it Basketball(1995)} & 0.00$\pm$0.00 & \red{-1.06$\pm$0.01} & \blue{-1.01$\pm$0.03} & -0.03$\pm$0.10 & -0.02$\pm$0.01 & -0.06$\pm$0.01 & -0.04$\pm$0.01 \\
			{\it Basketball(1996)} & -0.00$\pm$0.01 & \blue{-0.90$\pm$0.01} & \red{-1.11$\pm$0.02} & -0.08$\pm$0.08 & -0.02$\pm$0.01 & -0.02$\pm$0.01 & -0.07$\pm$0.01 \\
			{\it Basketball(1997)} & -0.01$\pm$0.01 & \blue{-0.96$\pm$0.01} & \red{-1.02$\pm$0.04} & -0.12$\pm$0.04 & -0.05$\pm$0.01 & -0.03$\pm$0.01 & -0.07$\pm$0.01 \\
			{\it Basketball(1998)} & -0.00$\pm$0.01 & \blue{-0.95$\pm$0.01} & \red{-1.00$\pm$0.06} & -0.14$\pm$0.02 & -0.06$\pm$0.01 & -0.02$\pm$0.01 & -0.07$\pm$0.01 \\
			{\it Basketball(1999)} & -0.00$\pm$0.01 & \blue{-0.83$\pm$0.00} & \red{-1.10$\pm$0.02} & -0.09$\pm$0.07 & -0.07$\pm$0.01 & -0.07$\pm$0.01 & -0.07$\pm$0.01 \\
			{\it Basketball(2000)} & -0.01$\pm$0.01 & \blue{-1.00$\pm$0.00} & \red{-1.03$\pm$0.03} & -0.09$\pm$0.02 & -0.02$\pm$0.01 & -0.03$\pm$0.01 & -0.05$\pm$0.01 \\
			{\it Basketball(2001)} & -0.00$\pm$0.00 & \blue{-1.02$\pm$0.00} & \red{-1.10$\pm$0.11} & -0.09$\pm$0.08 & -0.05$\pm$0.02 & -0.02$\pm$0.02 & -0.11$\pm$0.02 \\
			{\it Basketball(2002)} & -0.02$\pm$0.00 & \blue{-0.95$\pm$0.00} & \red{-1.05$\pm$0.02} & -0.09$\pm$0.06 & -0.01$\pm$0.01 & -0.06$\pm$0.01 & -0.05$\pm$0.01 \\
			{\it Basketball(2003)} & 0.00$\pm$0.00 & \blue{-1.07$\pm$0.00} & \red{-1.13$\pm$0.01} & -0.08$\pm$0.07 & -0.08$\pm$0.01 & -0.07$\pm$0.01 & -0.07$\pm$0.01 \\
			{\it Basketball(2004)} & -0.00$\pm$0.00 & \blue{-0.98$\pm$0.00} & \red{-1.14$\pm$0.11} & -0.09$\pm$0.06 & -0.05$\pm$0.01 & -0.06$\pm$0.01 & -0.07$\pm$0.01 \\
			{\it Basketball(2005)} & 0.00$\pm$0.00 & \blue{-1.05$\pm$0.01} & \red{-1.06$\pm$0.06} & -0.14$\pm$0.03 & -0.00$\pm$0.01 & -0.06$\pm$0.01 & -0.07$\pm$0.01 \\
			{\it Basketball(2006)} & 0.00$\pm$0.00 & \blue{-1.04$\pm$0.00} & \red{-1.09$\pm$0.03} & 0.02$\pm$0.09 & 0.00$\pm$0.00 & -0.03$\pm$0.01 & -0.04$\pm$0.01 \\
			{\it Basketball(2007)} & -0.00$\pm$0.00 & \red{-1.10$\pm$0.00} & \blue{-1.03$\pm$0.08} & -0.09$\pm$0.02 & -0.03$\pm$0.01 & -0.02$\pm$0.01 & -0.04$\pm$0.01 \\
			{\it Basketball(2008)} & 0.00$\pm$0.00 & \blue{-0.97$\pm$0.00} & \red{-1.10$\pm$0.03} & -0.03$\pm$0.07 & -0.04$\pm$0.01 & -0.05$\pm$0.01 & -0.06$\pm$0.01 \\
			{\it Basketball(2009)} & -0.00$\pm$0.00 & \blue{-1.02$\pm$0.01} & \red{-1.17$\pm$0.02} & -0.12$\pm$0.02 & -0.03$\pm$0.01 & -0.03$\pm$0.01 & -0.06$\pm$0.01 \\
			{\it Basketball(2010)} & -0.00$\pm$0.00 & \blue{-0.83$\pm$0.00} & \red{-1.06$\pm$0.01} & -0.03$\pm$0.13 & -0.04$\pm$0.01 & -0.04$\pm$0.01 & -0.05$\pm$0.01 \\
			{\it Basketball(2011)} & -0.00$\pm$0.01 & \blue{-0.90$\pm$0.00} & \red{-1.09$\pm$0.03} & 0.00$\pm$0.08 & -0.04$\pm$0.01 & -0.05$\pm$0.01 & -0.02$\pm$0.01 \\
			{\it Basketball(2012)} & -0.00$\pm$0.00 & \red{-1.11$\pm$0.00} & \red{-1.11$\pm$0.05} & -0.07$\pm$0.08 & -0.04$\pm$0.01 & -0.03$\pm$0.02 & -0.04$\pm$0.01 \\
			{\it Basketball(2013)} & -0.00$\pm$0.00 & \blue{-0.95$\pm$0.00} & \red{-1.09$\pm$0.06} & -0.11$\pm$0.06 & -0.06$\pm$0.01 & -0.03$\pm$0.01 & -0.07$\pm$0.01 \\
			{\it Basketball(2014)} & -0.00$\pm$0.01 & \blue{-0.96$\pm$0.00} & \red{-1.14$\pm$0.05} & -0.00$\pm$0.10 & -0.01$\pm$0.01 & -0.04$\pm$0.01 & -0.02$\pm$0.01 \\
			{\it Basketball finer(1985)} & -0.01$\pm$0.01 & \blue{-0.92$\pm$0.00} & \red{-0.96$\pm$0.10} & -0.56$\pm$0.07 & -0.35$\pm$0.01 & -0.32$\pm$0.02 & -0.07$\pm$0.01 \\
			{\it Basketball finer(1986)} & -0.00$\pm$0.00 & \red{-1.12$\pm$0.00} & \blue{-1.01$\pm$0.03} & -0.52$\pm$0.08 & -0.30$\pm$0.02 & -0.27$\pm$0.01 & -0.05$\pm$0.01 \\
			{\it Basketball finer(1987)} & -0.01$\pm$0.00 & \red{-1.02$\pm$0.01} & \blue{-0.90$\pm$0.02} & -0.45$\pm$0.07 & -0.25$\pm$0.01 & -0.28$\pm$0.01 & -0.05$\pm$0.01 \\
			{\it Basketball finer(1988)} & -0.01$\pm$0.00 & \red{-1.09$\pm$0.01} & \blue{-1.02$\pm$0.02} & -0.55$\pm$0.11 & -0.38$\pm$0.02 & -0.34$\pm$0.01 & -0.03$\pm$0.01 \\
			{\it Basketball finer(1989)} & -0.00$\pm$0.00 & \red{-0.96$\pm$0.00} & \blue{-0.87$\pm$0.03} & -0.51$\pm$0.06 & -0.27$\pm$0.01 & -0.29$\pm$0.01 & -0.07$\pm$0.01 \\
			{\it Basketball finer(1990)} & -0.01$\pm$0.01 & \blue{-0.96$\pm$0.00} & \red{-1.02$\pm$0.01} & -0.55$\pm$0.05 & -0.42$\pm$0.01 & -0.31$\pm$0.01 & -0.03$\pm$0.01 \\
			{\it Basketball finer(1991)} & -0.01$\pm$0.00 & \red{-1.13$\pm$0.00} & \blue{-1.03$\pm$0.00} & -0.45$\pm$0.06 & -0.46$\pm$0.02 & -0.41$\pm$0.01 & -0.05$\pm$0.01 \\
			{\it Basketball finer(1992)} & -0.00$\pm$0.00 & \red{-1.04$\pm$0.00} & \blue{-0.99$\pm$0.02} & -0.46$\pm$0.06 & -0.41$\pm$0.02 & -0.37$\pm$0.02 & -0.07$\pm$0.01 \\
			{\it Basketball finer(1993)} & -0.00$\pm$0.01 & \blue{-0.98$\pm$0.01} & \red{-1.08$\pm$0.10} & -0.49$\pm$0.06 & -0.35$\pm$0.01 & -0.27$\pm$0.01 & -0.05$\pm$0.01 \\
			{\it Basketball finer(1994)} & 0.00$\pm$0.00 & \blue{-1.02$\pm$0.00} & \red{-1.11$\pm$0.04} & -0.50$\pm$0.07 & -0.39$\pm$0.02 & -0.28$\pm$0.01 & -0.03$\pm$0.01 \\
			{\it Basketball finer(1995)} & -0.01$\pm$0.01 & \red{-1.06$\pm$0.01} & \blue{-0.98$\pm$0.01} & -0.48$\pm$0.08 & -0.35$\pm$0.01 & -0.30$\pm$0.01 & -0.06$\pm$0.01 \\
			{\it Basketball finer(1996)} & -0.00$\pm$0.01 & \blue{-0.90$\pm$0.00} & \red{-0.95$\pm$0.00} & -0.44$\pm$0.06 & -0.27$\pm$0.02 & -0.25$\pm$0.02 & -0.06$\pm$0.01 \\
			{\it Basketball finer(1997)} & -0.01$\pm$0.01 & \red{-1.02$\pm$0.01} & \blue{-0.98$\pm$0.01} & -0.47$\pm$0.04 & -0.28$\pm$0.01 & -0.24$\pm$0.01 & -0.08$\pm$0.01 \\
			{\it Basketball finer(1998)} & -0.01$\pm$0.01 & \red{-0.96$\pm$0.01} & \blue{-0.95$\pm$0.01} & -0.44$\pm$0.06 & -0.25$\pm$0.01 & -0.24$\pm$0.01 & -0.08$\pm$0.01 \\
			{\it Basketball finer(1999)} & -0.00$\pm$0.01 & \blue{-0.90$\pm$0.00} & \red{-1.05$\pm$0.02} & -0.44$\pm$0.08 & -0.27$\pm$0.01 & -0.19$\pm$0.01 & -0.08$\pm$0.01 \\
			{\it Basketball finer(2000)} & 0.00$\pm$0.00 & \blue{-0.97$\pm$0.00} & \red{-0.99$\pm$0.02} & -0.42$\pm$0.05 & -0.31$\pm$0.02 & -0.25$\pm$0.01 & -0.06$\pm$0.01 \\
			{\it Basketball finer(2001)} & -0.00$\pm$0.00 & \blue{-0.96$\pm$0.00} & \red{-1.06$\pm$0.00} & -0.46$\pm$0.06 & -0.31$\pm$0.01 & -0.25$\pm$0.02 & -0.10$\pm$0.01 \\
			{\it Basketball finer(2002)} & -0.04$\pm$0.02 & \red{-0.99$\pm$0.01} & \blue{-0.92$\pm$0.02} & -0.52$\pm$0.09 & -0.30$\pm$0.02 & -0.25$\pm$0.01 & -0.05$\pm$0.01 \\
			{\it Basketball finer(2003)} & 0.00$\pm$0.00 & \red{-1.01$\pm$0.00} & \blue{-0.98$\pm$0.07} & -0.49$\pm$0.10 & -0.25$\pm$0.01 & -0.21$\pm$0.01 & -0.05$\pm$0.01 \\
			{\it Basketball finer(2004)} & -0.02$\pm$0.01 & \red{-1.00$\pm$0.06} & \blue{-0.93$\pm$0.01} & -0.48$\pm$0.07 & -0.29$\pm$0.01 & -0.24$\pm$0.01 & -0.07$\pm$0.01 \\
			{\it Basketball finer(2005)} & -0.01$\pm$0.01 & \red{-1.08$\pm$0.01} & \blue{-0.93$\pm$0.00} & -0.42$\pm$0.05 & -0.22$\pm$0.01 & -0.18$\pm$0.01 & -0.06$\pm$0.01 \\
			{\it Basketball finer(2006)} & -0.00$\pm$0.01 & \red{-1.01$\pm$0.00} & \blue{-0.97$\pm$0.00} & -0.42$\pm$0.07 & -0.24$\pm$0.02 & -0.17$\pm$0.01 & -0.04$\pm$0.01 \\
			{\it Basketball finer(2007)} & -0.00$\pm$0.01 & \red{-1.05$\pm$0.01} & \blue{-0.97$\pm$0.14} & -0.46$\pm$0.06 & -0.25$\pm$0.02 & -0.18$\pm$0.02 & -0.06$\pm$0.01 \\
			{\it Basketball finer(2008)} & -0.00$\pm$0.00 & \blue{-0.94$\pm$0.00} & \red{-0.98$\pm$0.00} & -0.42$\pm$0.09 & -0.29$\pm$0.01 & -0.21$\pm$0.01 & -0.04$\pm$0.01 \\
			{\it Basketball finer(2009)} & -0.00$\pm$0.00 & \blue{-0.96$\pm$0.00} & \red{-1.00$\pm$0.04} & -0.45$\pm$0.04 & -0.26$\pm$0.01 & -0.21$\pm$0.02 & -0.06$\pm$0.01 \\
			{\it Basketball finer(2010)} & -0.00$\pm$0.00 & \blue{-0.93$\pm$0.00} & \red{-0.98$\pm$0.02} & -0.41$\pm$0.07 & -0.18$\pm$0.01 & -0.21$\pm$0.01 & -0.05$\pm$0.01 \\
			{\it Basketball finer(2011)} & -0.01$\pm$0.00 & \blue{-0.88$\pm$0.00} & \red{-0.99$\pm$0.01} & -0.44$\pm$0.06 & -0.21$\pm$0.01 & -0.22$\pm$0.01 & -0.03$\pm$0.01 \\
			{\it Basketball finer(2012)} & 0.00$\pm$0.00 & \blue{-0.94$\pm$0.01} & \red{-0.98$\pm$0.03} & -0.45$\pm$0.07 & -0.26$\pm$0.01 & -0.21$\pm$0.01 & -0.03$\pm$0.00 \\
			{\it Basketball finer(2013)} & -0.00$\pm$0.01 & \blue{-0.95$\pm$0.01} & \red{-1.06$\pm$0.00} & -0.45$\pm$0.08 & -0.27$\pm$0.01 & -0.25$\pm$0.01 & -0.07$\pm$0.01 \\
			{\it Basketball finer(2014)} & -0.01$\pm$0.00 & \blue{-0.99$\pm$0.19} & \red{-1.07$\pm$0.00} & -0.47$\pm$0.09 & -0.24$\pm$0.02 & -0.20$\pm$0.01 & -0.02$\pm$0.01 \\
			{\it ERO(p=0.05, style=uniform,$\eta$=0)} & -0.05$\pm$0.02 & \red{-1.71$\pm$0.01} & \blue{-1.64$\pm$0.01} & -0.16$\pm$0.11 & 0.08$\pm$0.01 & -0.00$\pm$0.01 & -0.06$\pm$0.02 \\
			{\it ERO(p=0.05, style=gamma,$\eta$=0)} & -0.13$\pm$0.01 & \red{-1.76$\pm$0.03} & \blue{-1.61$\pm$0.40} & 0.22$\pm$0.03 & 0.31$\pm$0.03 & 0.22$\pm$0.02 & 0.07$\pm$0.01 \\
			{\it ERO(p=0.05, style=uniform,$\eta$=0.1)} & -0.00$\pm$0.00 & \red{-1.49$\pm$0.02} & \blue{-1.30$\pm$0.01} & -0.05$\pm$0.04 & -0.27$\pm$0.01 & -0.24$\pm$0.01 & -0.00$\pm$0.01 \\
			{\it ERO(p=0.05, style=gamma,$\eta$=0.1)} & -0.00$\pm$0.00 & \red{-1.60$\pm$0.01} & \blue{-1.19$\pm$0.04} & 0.05$\pm$0.01 & -0.05$\pm$0.03 & 0.02$\pm$0.03 & 0.02$\pm$0.01 \\
			{\it ERO(p=0.05, style=uniform,$\eta$=0.2)} & -0.01$\pm$0.02 & \red{-1.37$\pm$0.01} & \blue{-1.21$\pm$0.04} & -0.01$\pm$0.05 & -0.22$\pm$0.04 & -0.31$\pm$0.04 & -0.02$\pm$0.03 \\
			{\it ERO(p=0.05, style=gamma,$\eta$=0.2)} & -0.00$\pm$0.01 & \red{-1.43$\pm$0.02} & \blue{-1.14$\pm$0.02} & 0.00$\pm$0.01 & -0.17$\pm$0.03 & -0.14$\pm$0.02 & -0.01$\pm$0.01 \\
			{\it ERO(p=0.05, style=uniform,$\eta$=0.3)} & -0.01$\pm$0.01 & \red{-1.10$\pm$0.02} & \blue{-0.98$\pm$0.11} & 0.05$\pm$0.05 & -0.14$\pm$0.05 & -0.21$\pm$0.06 & -0.01$\pm$0.03 \\
			{\it ERO(p=0.05, style=gamma,$\eta$=0.3)} & -0.03$\pm$0.01 & \red{-1.16$\pm$0.02} & \blue{-1.05$\pm$0.02} & -0.04$\pm$0.02 & -0.29$\pm$0.01 & -0.30$\pm$0.02 & -0.05$\pm$0.01 \\
			{\it ERO(p=0.05, style=uniform,$\eta$=0.4)} & -0.00$\pm$0.00 & \red{-1.00$\pm$0.05} & \blue{-0.75$\pm$0.03} & 0.04$\pm$0.05 & -0.01$\pm$0.02 & -0.09$\pm$0.04 & 0.02$\pm$0.01 \\
			{\it ERO(p=0.05, style=gamma,$\eta$=0.4)} & -0.11$\pm$0.03 & \red{-0.98$\pm$0.05} & \red{-0.98$\pm$0.09} & -0.10$\pm$0.03 & -0.31$\pm$0.06 & -0.34$\pm$0.02 & -0.17$\pm$0.15 \\
			{\it ERO(p=0.05, style=uniform,$\eta$=0.5)} & -0.00$\pm$0.00 & \red{-0.84$\pm$0.01} & \blue{-0.64$\pm$0.02} & 0.02$\pm$0.15 & -0.01$\pm$0.03 & -0.04$\pm$0.05 & -0.02$\pm$0.04 \\
			{\it ERO(p=0.05, style=gamma,$\eta$=0.5)} & -0.27$\pm$0.01 & \blue{-0.84$\pm$0.04} & \red{-0.87$\pm$0.13} & -0.25$\pm$0.04 & -0.38$\pm$0.06 & -0.44$\pm$0.02 & -0.66$\pm$0.02 \\
			{\it ERO(p=0.05, style=uniform,$\eta$=0.6)} & -0.00$\pm$0.00 & \red{-0.66$\pm$0.01} & \blue{-0.57$\pm$0.01} & 0.00$\pm$0.03 & -0.00$\pm$0.02 & -0.05$\pm$0.05 & -0.18$\pm$0.05 \\
			{\it ERO(p=0.05, style=gamma,$\eta$=0.6)} & -0.39$\pm$0.02 & \blue{-0.83$\pm$0.09} & \red{-0.92$\pm$0.02} & -0.32$\pm$0.09 & -0.53$\pm$0.01 & -0.52$\pm$0.03 & -0.76$\pm$0.09 \\
			{\it ERO(p=0.05, style=uniform,$\eta$=0.7)} & -0.01$\pm$0.03 & \red{-0.64$\pm$0.02} & \blue{-0.56$\pm$0.26} & -0.01$\pm$0.04 & -0.14$\pm$0.27 & -0.23$\pm$0.30 & -0.48$\pm$0.27 \\
			{\it ERO(p=0.05, style=gamma,$\eta$=0.7)} & -0.42$\pm$0.00 & -0.86$\pm$0.05 & \red{-0.89$\pm$0.05} & -0.38$\pm$0.06 & -0.59$\pm$0.02 & -0.54$\pm$0.02 & \blue{-0.87$\pm$0.04} \\
			{\it ERO(p=0.05, style=uniform,$\eta$=0.8)} & -0.24$\pm$0.20 & -0.61$\pm$0.23 & \blue{-0.69$\pm$0.23} & -0.19$\pm$0.21 & -0.26$\pm$0.28 & -0.26$\pm$0.24 & \red{-0.72$\pm$0.24} \\
			{\it ERO(p=0.05, style=gamma,$\eta$=0.8)} & -0.51$\pm$0.00 & -0.84$\pm$0.03 & \red{-0.90$\pm$0.03} & -0.44$\pm$0.05 & -0.61$\pm$0.03 & -0.55$\pm$0.05 & \blue{-0.88$\pm$0.09} \\
			{\it ERO(p=1, style=uniform,$\eta$=0)} & 0.00$\pm$0.00 & \red{-1.85$\pm$0.00} & 0.00$\pm$0.00 & \blue{-0.23$\pm$0.11} & 0.00$\pm$0.00 & 0.00$\pm$0.00 & 0.00$\pm$0.00 \\
			{\it ERO(p=1, style=gamma,$\eta$=0)} & 0.00$\pm$0.00 & \red{-1.83$\pm$0.00} & 0.00$\pm$0.00 & \blue{-0.09$\pm$0.01} & 0.00$\pm$0.00 & 0.00$\pm$0.00 & -0.01$\pm$0.00 \\
			{\it ERO(p=1, style=uniform,$\eta$=0.1)} & -0.02$\pm$0.01 & \red{-1.67$\pm$0.00} & 0.00$\pm$0.00 & -0.12$\pm$0.07 & -0.09$\pm$0.02 & \blue{-0.49$\pm$0.03} & -0.00$\pm$0.00 \\
			{\it ERO(p=1, style=gamma,$\eta$=0.1)} & -0.01$\pm$0.01 & \red{-1.58$\pm$0.00} & 0.00$\pm$0.00 & -0.03$\pm$0.01 & -0.25$\pm$0.02 & \blue{-0.51$\pm$0.02} & -0.00$\pm$0.00 \\
			{\it ERO(p=1, style=uniform,$\eta$=0.2)} & -0.01$\pm$0.01 & \red{-1.41$\pm$0.00} & -0.00$\pm$0.00 & -0.11$\pm$0.05 & -0.10$\pm$0.01 & \blue{-0.34$\pm$0.02} & -0.00$\pm$0.00 \\
			{\it ERO(p=1, style=gamma,$\eta$=0.2)} & -0.01$\pm$0.01 & \red{-1.45$\pm$0.00} & -0.00$\pm$0.00 & -0.01$\pm$0.02 & -0.15$\pm$0.02 & \blue{-0.32$\pm$0.03} & -0.00$\pm$0.00 \\
			{\it ERO(p=1, style=uniform,$\eta$=0.3)} & -0.00$\pm$0.00 & \red{-1.21$\pm$0.00} & 0.00$\pm$0.00 & -0.06$\pm$0.04 & -0.07$\pm$0.01 & \blue{-0.23$\pm$0.02} & -0.00$\pm$0.00 \\
			{\it ERO(p=1, style=gamma,$\eta$=0.3)} & -0.00$\pm$0.00 & \red{-1.25$\pm$0.00} & -0.00$\pm$0.00 & -0.00$\pm$0.01 & -0.12$\pm$0.03 & \blue{-0.21$\pm$0.03} & 0.00$\pm$0.00 \\
			{\it ERO(p=1, style=uniform,$\eta$=0.4)} & -0.00$\pm$0.00 & \red{-1.05$\pm$0.00} & -0.00$\pm$0.00 & -0.05$\pm$0.04 & -0.04$\pm$0.01 & \blue{-0.14$\pm$0.02} & -0.00$\pm$0.01 \\
			{\it ERO(p=1, style=gamma,$\eta$=0.4)} & -0.02$\pm$0.00 & \red{-1.07$\pm$0.00} & 0.00$\pm$0.00 & -0.04$\pm$0.00 & -0.15$\pm$0.00 & \blue{-0.21$\pm$0.00} & -0.00$\pm$0.00 \\
			{\it ERO(p=1, style=uniform,$\eta$=0.5)} & -0.13$\pm$0.00 & \red{-0.94$\pm$0.00} & -0.06$\pm$0.00 & -0.18$\pm$0.03 & -0.19$\pm$0.00 & \blue{-0.26$\pm$0.00} & -0.11$\pm$0.00 \\
			{\it ERO(p=1, style=gamma,$\eta$=0.5)} & -0.21$\pm$0.00 & \red{-0.98$\pm$0.00} & -0.07$\pm$0.00 & -0.23$\pm$0.01 & -0.30$\pm$0.00 & \blue{-0.33$\pm$0.00} & -0.19$\pm$0.00 \\
			{\it ERO(p=1, style=uniform,$\eta$=0.6)} & -0.32$\pm$0.00 & \red{-0.94$\pm$0.00} & -0.27$\pm$0.00 & -0.36$\pm$0.03 & -0.37$\pm$0.00 & \blue{-0.40$\pm$0.00} & -0.31$\pm$0.00 \\
			{\it ERO(p=1, style=gamma,$\eta$=0.6)} & -0.38$\pm$0.00 & \red{-0.93$\pm$0.00} & -0.27$\pm$0.00 & -0.42$\pm$0.01 & -0.44$\pm$0.00 & \blue{-0.46$\pm$0.00} & -0.37$\pm$0.00 \\
			{\it ERO(p=1, style=uniform,$\eta$=0.7)} & -0.51$\pm$0.00 & \red{-0.95$\pm$0.00} & -0.49$\pm$0.00 & -0.54$\pm$0.03 & -0.53$\pm$0.00 & \blue{-0.55$\pm$0.00} & -0.51$\pm$0.00 \\
			{\it ERO(p=1, style=gamma,$\eta$=0.7)} & -0.55$\pm$0.00 & \red{-0.98$\pm$0.00} & -0.52$\pm$0.00 & -0.59$\pm$0.01 & -0.59$\pm$0.00 & \blue{-0.60$\pm$0.00} & -0.56$\pm$0.00 \\
			{\it ERO(p=1, style=uniform,$\eta$=0.8)} & -0.68$\pm$0.00 & \red{-0.97$\pm$0.00} & \blue{-0.95$\pm$0.00} & -0.71$\pm$0.02 & -0.70$\pm$0.00 & -0.71$\pm$0.00 & -0.70$\pm$0.00 \\
			{\it ERO(p=1, style=gamma,$\eta$=0.8)} & -0.71$\pm$0.00 & \blue{-0.96$\pm$0.00} & \red{-0.98$\pm$0.00} & -0.75$\pm$0.01 & -0.73$\pm$0.00 & -0.73$\pm$0.00 & -0.73$\pm$0.00 \\
\bottomrule
\end{tabular}}
\end{table*}

\begin{table*}[!ht]
\centering
\caption{Result table on $\mathcal{L}_\text{upset, naive}$ improvement with ``proximal baseline" training starting from a baseline as initial guess, for individual directed graphs, averaged over 10 runs, and plus/minus one standard deviation. The best is marked in \red{bold red} while the second best is highlighted in \blue{underline blue}.}
\label{tab:upset_naive_improve_over_baseline}
\resizebox{0.5\linewidth}{!}{\begin{tabular}{lrrrrrrrrrrrrrrrrrr}
\toprule
Data  & SpringRank & SyncRank & SerialRank & BTL  & Eig.Cent. & PageRank  & SVD\_NRS  \\
\midrule
{\it HeadToHead} & -0.00$\pm$0.00 & \red{-0.24$\pm$0.00} & \blue{-0.18$\pm$0.00} & 0.00$\pm$0.00 & -0.05$\pm$0.00 & -0.02$\pm$0.00 & -0.04$\pm$0.00 \\
			{\it Finance} & -0.00$\pm$0.00 & \red{-0.09$\pm$0.00} & 0.00$\pm$0.00 & \blue{-0.03$\pm$0.00} & -0.02$\pm$0.00 & -0.02$\pm$0.00 & -0.00$\pm$0.00 \\
			{\it Animal} & -0.02$\pm$0.01 & \blue{-0.34$\pm$0.06} & \red{-0.44$\pm$0.12} & 0.00$\pm$0.00 & -0.01$\pm$0.00 & -0.02$\pm$0.02 & -0.00$\pm$0.02 \\
			{\it Faculty: Business} & -0.01$\pm$0.00 & \blue{-0.12$\pm$0.00} & \red{-0.17$\pm$0.01} & -0.00$\pm$0.00 & -0.01$\pm$0.01 & -0.01$\pm$0.01 & -0.01$\pm$0.00 \\
			{\it Faculty: CS} & -0.00$\pm$0.00 & \blue{-0.17$\pm$0.02} & \red{-0.18$\pm$0.02} & 0.00$\pm$0.00 & 0.00$\pm$0.01 & -0.00$\pm$0.00 & -0.02$\pm$0.01 \\
			{\it Faculty: History} & -0.00$\pm$0.00 & \blue{-0.07$\pm$0.00} & \red{-0.44$\pm$0.20} & 0.00$\pm$0.00 & -0.00$\pm$0.01 & -0.00$\pm$0.00 & -0.00$\pm$0.00 \\
			{\it Football(2009)} & 0.00$\pm$0.00 & \red{-0.18$\pm$0.11} & \blue{-0.04$\pm$0.02} & -0.01$\pm$0.02 & -0.03$\pm$0.01 & \blue{-0.04$\pm$0.02} & -0.00$\pm$0.00 \\
			{\it Football(2010)} & -0.01$\pm$0.01 & \red{-0.15$\pm$0.04} & \blue{-0.08$\pm$0.01} & -0.00$\pm$0.01 & -0.03$\pm$0.02 & -0.03$\pm$0.02 & -0.00$\pm$0.00 \\
			{\it Football(2011)} & -0.00$\pm$0.00 & \red{-0.19$\pm$0.09} & -0.01$\pm$0.01 & -0.01$\pm$0.01 & -0.01$\pm$0.01 & \blue{-0.03$\pm$0.02} & -0.00$\pm$0.01 \\
			{\it Football(2012)} & -0.01$\pm$0.01 & \red{-0.17$\pm$0.01} & \blue{-0.12$\pm$0.03} & -0.01$\pm$0.02 & -0.00$\pm$0.01 & -0.01$\pm$0.00 & 0.00$\pm$0.00 \\
			{\it Football(2013)} & -0.01$\pm$0.01 & \red{-0.27$\pm$0.03} & -0.02$\pm$0.01 & -0.02$\pm$0.01 & \blue{-0.04$\pm$0.01} & \blue{-0.04$\pm$0.01} & -0.01$\pm$0.01 \\
			{\it Football(2014)} & -0.00$\pm$0.01 & \red{-0.18$\pm$0.05} & \blue{-0.12$\pm$0.03} & -0.01$\pm$0.01 & -0.02$\pm$0.03 & -0.00$\pm$0.00 & -0.01$\pm$0.01 \\
			{\it Football finer(2009)} & -0.01$\pm$0.01 & \red{-0.26$\pm$0.01} & -0.04$\pm$0.01 & -0.01$\pm$0.03 & -0.04$\pm$0.03 & \blue{-0.06$\pm$0.03} & -0.01$\pm$0.01 \\
			{\it Football finer(2010)} & -0.02$\pm$0.01 & \red{-0.11$\pm$0.01} & -0.05$\pm$0.02 & -0.03$\pm$0.03 & -0.03$\pm$0.02 & \blue{-0.06$\pm$0.01} & -0.00$\pm$0.00 \\
			{\it Football finer(2011)} & -0.00$\pm$0.01 & \red{-0.19$\pm$0.01} & -0.04$\pm$0.01 & -0.01$\pm$0.01 & -0.02$\pm$0.01 & \blue{-0.05$\pm$0.01} & -0.00$\pm$0.01 \\
			{\it Football finer(2012)} & -0.01$\pm$0.01 & \red{-0.23$\pm$0.04} & -0.03$\pm$0.02 & -0.01$\pm$0.01 & -0.04$\pm$0.02 & \blue{-0.07$\pm$0.01} & -0.00$\pm$0.01 \\
			{\it Football finer(2013)} & -0.03$\pm$0.03 & \red{-0.25$\pm$0.00} & -0.03$\pm$0.02 & -0.02$\pm$0.02 & -0.05$\pm$0.03 & \blue{-0.06$\pm$0.02} & -0.02$\pm$0.01 \\
			{\it Football finer(2014)} & -0.00$\pm$0.01 & \red{-0.17$\pm$0.02} & \blue{-0.09$\pm$0.03} & -0.02$\pm$0.02 & -0.01$\pm$0.01 & -0.02$\pm$0.01 & -0.00$\pm$0.01 \\
			{\it Basketball(1985)} & -0.00$\pm$0.00 & \blue{-0.23$\pm$0.00} & \red{-0.32$\pm$0.01} & -0.03$\pm$0.02 & -0.01$\pm$0.00 & -0.02$\pm$0.01 & -0.02$\pm$0.00 \\
			{\it Basketball(1986)} & -0.00$\pm$0.00 & \red{-0.28$\pm$0.00} & \red{-0.28$\pm$0.01} & -0.03$\pm$0.01 & -0.02$\pm$0.00 & -0.01$\pm$0.00 & -0.01$\pm$0.00 \\
			{\it Basketball(1987)} & -0.00$\pm$0.00 & \red{-0.25$\pm$0.00} & \blue{-0.23$\pm$0.01} & -0.03$\pm$0.01 & -0.03$\pm$0.01 & -0.00$\pm$0.00 & -0.01$\pm$0.00 \\
			{\it Basketball(1988)} & 0.00$\pm$0.00 & \blue{-0.24$\pm$0.00} & \red{-0.27$\pm$0.01} & -0.03$\pm$0.01 & -0.01$\pm$0.00 & -0.01$\pm$0.00 & -0.01$\pm$0.00 \\
			{\it Basketball(1989)} & 0.00$\pm$0.00 & \red{-0.26$\pm$0.00} & \blue{-0.25$\pm$0.01} & -0.03$\pm$0.00 & 0.00$\pm$0.00 & -0.01$\pm$0.00 & -0.02$\pm$0.00 \\
			{\it Basketball(1990)} & -0.00$\pm$0.00 & \blue{-0.25$\pm$0.00} & \red{-0.27$\pm$0.01} & -0.03$\pm$0.03 & -0.02$\pm$0.00 & -0.02$\pm$0.00 & -0.01$\pm$0.00 \\
			{\it Basketball(1991)} & -0.00$\pm$0.00 & \blue{-0.27$\pm$0.00} & \red{-0.29$\pm$0.01} & -0.02$\pm$0.01 & -0.01$\pm$0.00 & -0.00$\pm$0.00 & -0.02$\pm$0.00 \\
			{\it Basketball(1992)} & -0.00$\pm$0.00 & \red{-0.27$\pm$0.00} & \blue{-0.26$\pm$0.00} & -0.01$\pm$0.03 & -0.01$\pm$0.00 & 0.00$\pm$0.00 & -0.02$\pm$0.00 \\
			{\it Basketball(1993)} & -0.00$\pm$0.00 & \blue{-0.25$\pm$0.00} & \red{-0.29$\pm$0.02} & -0.02$\pm$0.02 & -0.01$\pm$0.00 & -0.02$\pm$0.00 & -0.02$\pm$0.00 \\
			{\it Basketball(1994)} & 0.00$\pm$0.00 & \blue{-0.25$\pm$0.00} & \red{-0.29$\pm$0.00} & -0.01$\pm$0.02 & -0.01$\pm$0.00 & -0.01$\pm$0.00 & -0.01$\pm$0.00 \\
			{\it Basketball(1995)} & 0.00$\pm$0.00 & \red{-0.26$\pm$0.00} & \blue{-0.25$\pm$0.01} & -0.01$\pm$0.02 & -0.00$\pm$0.00 & -0.02$\pm$0.00 & -0.01$\pm$0.00 \\
			{\it Basketball(1996)} & -0.00$\pm$0.00 & \blue{-0.23$\pm$0.00} & \red{-0.28$\pm$0.01} & -0.02$\pm$0.02 & -0.01$\pm$0.00 & -0.01$\pm$0.00 & -0.02$\pm$0.00 \\
			{\it Basketball(1997)} & -0.00$\pm$0.00 & \blue{-0.24$\pm$0.00} & \red{-0.25$\pm$0.01} & -0.03$\pm$0.01 & -0.01$\pm$0.00 & -0.01$\pm$0.00 & -0.02$\pm$0.00 \\
			{\it Basketball(1998)} & -0.00$\pm$0.00 & \blue{-0.24$\pm$0.00} & \red{-0.25$\pm$0.01} & -0.03$\pm$0.01 & -0.02$\pm$0.00 & -0.01$\pm$0.00 & -0.02$\pm$0.00 \\
			{\it Basketball(1999)} & -0.00$\pm$0.00 & \blue{-0.21$\pm$0.00} & \red{-0.27$\pm$0.01} & -0.02$\pm$0.02 & -0.02$\pm$0.00 & -0.02$\pm$0.00 & -0.02$\pm$0.00 \\
			{\it Basketball(2000)} & -0.00$\pm$0.00 & \blue{-0.25$\pm$0.00} & \red{-0.26$\pm$0.01} & -0.02$\pm$0.01 & -0.01$\pm$0.00 & -0.01$\pm$0.00 & -0.01$\pm$0.00 \\
			{\it Basketball(2001)} & 0.00$\pm$0.00 & \blue{-0.26$\pm$0.00} & \red{-0.28$\pm$0.03} & -0.02$\pm$0.02 & -0.01$\pm$0.00 & -0.01$\pm$0.00 & -0.03$\pm$0.01 \\
			{\it Basketball(2002)} & -0.00$\pm$0.00 & \blue{-0.24$\pm$0.00} & \red{-0.26$\pm$0.00} & -0.02$\pm$0.01 & -0.00$\pm$0.00 & -0.02$\pm$0.00 & -0.01$\pm$0.00 \\
			{\it Basketball(2003)} & 0.00$\pm$0.00 & \blue{-0.27$\pm$0.00} & \red{-0.28$\pm$0.00} & -0.02$\pm$0.02 & -0.02$\pm$0.00 & -0.02$\pm$0.00 & -0.02$\pm$0.00 \\
			{\it Basketball(2004)} & -0.00$\pm$0.00 & \blue{-0.25$\pm$0.00} & \red{-0.28$\pm$0.03} & -0.02$\pm$0.01 & -0.01$\pm$0.00 & -0.02$\pm$0.00 & -0.02$\pm$0.00 \\
			{\it Basketball(2005)} & 0.00$\pm$0.00 & \red{-0.26$\pm$0.00} & \red{-0.26$\pm$0.02} & -0.04$\pm$0.01 & -0.00$\pm$0.00 & -0.01$\pm$0.00 & -0.02$\pm$0.00 \\
			{\it Basketball(2006)} & 0.00$\pm$0.00 & \blue{-0.26$\pm$0.00} & \red{-0.27$\pm$0.01} & 0.00$\pm$0.02 & 0.00$\pm$0.00 & -0.01$\pm$0.00 & -0.01$\pm$0.00 \\
			{\it Basketball(2007)} & -0.00$\pm$0.00 & \red{-0.28$\pm$0.00} & \blue{-0.26$\pm$0.02} & -0.02$\pm$0.01 & -0.01$\pm$0.00 & -0.01$\pm$0.00 & -0.01$\pm$0.00 \\
			{\it Basketball(2008)} & 0.00$\pm$0.00 & \blue{-0.24$\pm$0.00} & \red{-0.27$\pm$0.01} & -0.01$\pm$0.02 & -0.01$\pm$0.00 & -0.01$\pm$0.00 & -0.01$\pm$0.00 \\
			{\it Basketball(2009)} & -0.00$\pm$0.00 & \blue{-0.26$\pm$0.00} & \red{-0.29$\pm$0.01} & -0.03$\pm$0.01 & -0.01$\pm$0.00 & -0.01$\pm$0.00 & -0.01$\pm$0.00 \\
			{\it Basketball(2010)} & -0.00$\pm$0.00 & \blue{-0.21$\pm$0.00} & \red{-0.26$\pm$0.01} & -0.01$\pm$0.03 & -0.01$\pm$0.00 & -0.01$\pm$0.00 & -0.01$\pm$0.00 \\
			{\it Basketball(2011)} & -0.00$\pm$0.00 & \blue{-0.22$\pm$0.00} & \red{-0.27$\pm$0.01} & 0.00$\pm$0.02 & -0.01$\pm$0.00 & -0.01$\pm$0.00 & -0.00$\pm$0.00 \\
			{\it Basketball(2012)} & -0.00$\pm$0.00 & \red{-0.28$\pm$0.00} & \red{-0.28$\pm$0.01} & -0.02$\pm$0.02 & -0.01$\pm$0.00 & -0.01$\pm$0.00 & -0.01$\pm$0.00 \\
			{\it Basketball(2013)} & -0.00$\pm$0.00 & \blue{-0.24$\pm$0.00} & \red{-0.27$\pm$0.02} & -0.03$\pm$0.01 & -0.01$\pm$0.00 & -0.01$\pm$0.00 & -0.02$\pm$0.00 \\
			{\it Basketball(2014)} & -0.00$\pm$0.00 & \blue{-0.24$\pm$0.00} & \red{-0.28$\pm$0.01} & -0.00$\pm$0.03 & -0.00$\pm$0.00 & -0.01$\pm$0.00 & -0.00$\pm$0.00 \\
			{\it Basketball finer(1985)} & -0.00$\pm$0.00 & \blue{-0.23$\pm$0.00} & \red{-0.24$\pm$0.02} & -0.14$\pm$0.02 & -0.09$\pm$0.00 & -0.08$\pm$0.00 & -0.02$\pm$0.00 \\
			{\it Basketball finer(1986)} & 0.00$\pm$0.00 & \red{-0.28$\pm$0.00} & \blue{-0.25$\pm$0.01} & -0.13$\pm$0.02 & -0.08$\pm$0.00 & -0.07$\pm$0.00 & -0.01$\pm$0.00 \\
			{\it Basketball finer(1987)} & -0.00$\pm$0.00 & \red{-0.26$\pm$0.00} & \blue{-0.23$\pm$0.00} & -0.11$\pm$0.02 & -0.06$\pm$0.00 & -0.07$\pm$0.00 & -0.01$\pm$0.00 \\
			{\it Basketball finer(1988)} & -0.00$\pm$0.00 & \red{-0.27$\pm$0.00} & \blue{-0.25$\pm$0.00} & -0.14$\pm$0.03 & -0.09$\pm$0.00 & -0.09$\pm$0.00 & -0.01$\pm$0.00 \\
			{\it Basketball finer(1989)} & 0.00$\pm$0.00 & \red{-0.24$\pm$0.00} & \blue{-0.22$\pm$0.01} & -0.13$\pm$0.01 & -0.07$\pm$0.00 & -0.07$\pm$0.00 & -0.02$\pm$0.00 \\
			{\it Basketball finer(1990)} & -0.00$\pm$0.00 & \blue{-0.24$\pm$0.00} & \red{-0.25$\pm$0.00} & -0.14$\pm$0.01 & -0.10$\pm$0.00 & -0.08$\pm$0.00 & -0.01$\pm$0.00 \\
			{\it Basketball finer(1991)} & -0.00$\pm$0.00 & \red{-0.28$\pm$0.00} & \blue{-0.26$\pm$0.01} & -0.11$\pm$0.02 & -0.12$\pm$0.00 & -0.10$\pm$0.00 & -0.01$\pm$0.00 \\
			{\it Basketball finer(1992)} & -0.00$\pm$0.00 & \red{-0.26$\pm$0.00} & \blue{-0.25$\pm$0.00} & -0.12$\pm$0.01 & -0.10$\pm$0.00 & -0.09$\pm$0.00 & -0.02$\pm$0.00 \\
			{\it Basketball finer(1993)} & -0.00$\pm$0.00 & \blue{-0.24$\pm$0.00} & \red{-0.27$\pm$0.02} & -0.12$\pm$0.01 & -0.09$\pm$0.00 & -0.07$\pm$0.00 & -0.01$\pm$0.00 \\
			{\it Basketball finer(1994)} & 0.00$\pm$0.00 & \blue{-0.25$\pm$0.00} & \red{-0.28$\pm$0.01} & -0.12$\pm$0.02 & -0.10$\pm$0.00 & -0.07$\pm$0.00 & -0.01$\pm$0.00 \\
			{\it Basketball finer(1995)} & -0.00$\pm$0.00 & \red{-0.26$\pm$0.00} & \blue{-0.25$\pm$0.00} & -0.12$\pm$0.02 & -0.09$\pm$0.00 & -0.07$\pm$0.00 & -0.02$\pm$0.00 \\
			{\it Basketball finer(1996)} & -0.00$\pm$0.00 & \red{-0.22$\pm$0.00} & \red{-0.22$\pm$0.01} & -0.11$\pm$0.01 & -0.07$\pm$0.01 & -0.06$\pm$0.00 & -0.01$\pm$0.00 \\
			{\it Basketball finer(1997)} & -0.00$\pm$0.00 & \red{-0.26$\pm$0.00} & \blue{-0.24$\pm$0.00} & -0.12$\pm$0.01 & -0.07$\pm$0.00 & -0.06$\pm$0.00 & -0.02$\pm$0.00 \\
			{\it Basketball finer(1998)} & -0.00$\pm$0.00 & \red{-0.24$\pm$0.00} & \red{-0.24$\pm$0.00} & -0.11$\pm$0.02 & -0.06$\pm$0.00 & -0.06$\pm$0.00 & -0.02$\pm$0.00 \\
			{\it Basketball finer(1999)} & -0.00$\pm$0.00 & \blue{-0.23$\pm$0.00} & \red{-0.26$\pm$0.01} & -0.11$\pm$0.02 & -0.07$\pm$0.00 & -0.05$\pm$0.00 & -0.02$\pm$0.00 \\
			{\it Basketball finer(2000)} & 0.00$\pm$0.00 & \blue{-0.24$\pm$0.00} & \red{-0.25$\pm$0.00} & -0.10$\pm$0.01 & -0.08$\pm$0.00 & -0.06$\pm$0.00 & -0.01$\pm$0.00 \\
			{\it Basketball finer(2001)} & -0.00$\pm$0.00 & \blue{-0.24$\pm$0.00} & \red{-0.25$\pm$0.01} & -0.11$\pm$0.01 & -0.08$\pm$0.00 & -0.06$\pm$0.00 & -0.02$\pm$0.00 \\
			{\it Basketball finer(2002)} & -0.01$\pm$0.00 & \red{-0.25$\pm$0.00} & \blue{-0.23$\pm$0.01} & -0.13$\pm$0.02 & -0.08$\pm$0.00 & -0.06$\pm$0.00 & -0.01$\pm$0.00 \\
			{\it Basketball finer(2003)} & 0.00$\pm$0.00 & \red{-0.25$\pm$0.00} & \blue{-0.22$\pm$0.02} & -0.12$\pm$0.02 & -0.06$\pm$0.00 & -0.05$\pm$0.00 & -0.01$\pm$0.00 \\
			{\it Basketball finer(2004)} & -0.00$\pm$0.00 & \red{-0.25$\pm$0.01} & \blue{-0.23$\pm$0.00} & -0.12$\pm$0.02 & -0.07$\pm$0.00 & -0.06$\pm$0.00 & -0.02$\pm$0.00 \\
			{\it Basketball finer(2005)} & -0.00$\pm$0.00 & \red{-0.27$\pm$0.00} & \blue{-0.22$\pm$0.00} & -0.10$\pm$0.01 & -0.05$\pm$0.00 & -0.04$\pm$0.00 & -0.02$\pm$0.00 \\
			{\it Basketball finer(2006)} & -0.00$\pm$0.00 & \red{-0.25$\pm$0.00} & \blue{-0.24$\pm$0.00} & -0.10$\pm$0.02 & -0.06$\pm$0.00 & -0.04$\pm$0.00 & -0.01$\pm$0.00 \\
			{\it Basketball finer(2007)} & -0.00$\pm$0.00 & \red{-0.26$\pm$0.00} & \blue{-0.23$\pm$0.03} & -0.12$\pm$0.01 & -0.06$\pm$0.00 & -0.04$\pm$0.00 & -0.01$\pm$0.00 \\
			{\it Basketball finer(2008)} & -0.00$\pm$0.00 & \blue{-0.23$\pm$0.00} & \red{-0.24$\pm$0.01} & -0.11$\pm$0.02 & -0.07$\pm$0.00 & -0.05$\pm$0.00 & -0.01$\pm$0.00 \\
			{\it Basketball finer(2009)} & -0.00$\pm$0.00 & \blue{-0.24$\pm$0.00} & \red{-0.25$\pm$0.01} & -0.11$\pm$0.01 & -0.07$\pm$0.00 & -0.05$\pm$0.00 & -0.01$\pm$0.00 \\
			{\it Basketball finer(2010)} & -0.00$\pm$0.00 & \blue{-0.23$\pm$0.00} & \red{-0.25$\pm$0.00} & -0.10$\pm$0.02 & -0.05$\pm$0.00 & -0.05$\pm$0.00 & -0.01$\pm$0.00 \\
			{\it Basketball finer(2011)} & -0.00$\pm$0.00 & \blue{-0.22$\pm$0.00} & \red{-0.25$\pm$0.00} & -0.11$\pm$0.01 & -0.05$\pm$0.00 & -0.06$\pm$0.00 & -0.01$\pm$0.00 \\
			{\it Basketball finer(2012)} & 0.00$\pm$0.00 & \red{-0.24$\pm$0.00} & \red{-0.24$\pm$0.01} & -0.11$\pm$0.02 & -0.07$\pm$0.00 & -0.05$\pm$0.00 & -0.01$\pm$0.00 \\
			{\it Basketball finer(2013)} & -0.00$\pm$0.00 & \blue{-0.24$\pm$0.00} & \red{-0.26$\pm$0.02} & -0.11$\pm$0.02 & -0.07$\pm$0.00 & -0.06$\pm$0.00 & -0.02$\pm$0.00 \\
			{\it Basketball finer(2014)} & -0.00$\pm$0.00 & \blue{-0.25$\pm$0.05} & \red{-0.26$\pm$0.01} & -0.12$\pm$0.02 & -0.06$\pm$0.01 & -0.05$\pm$0.00 & -0.01$\pm$0.00 \\
			{\it ERO(p=0.05, style=uniform,$\eta$=0)} & -0.01$\pm$0.01 & \red{-0.43$\pm$0.00} & \blue{-0.40$\pm$0.01} & -0.04$\pm$0.03 & 0.02$\pm$0.00 & -0.00$\pm$0.00 & -0.01$\pm$0.00 \\
			{\it ERO(p=0.05, style=gamma,$\eta$=0)} & -0.03$\pm$0.00 & \red{-0.44$\pm$0.01} & \blue{-0.40$\pm$0.10} & 0.06$\pm$0.01 & 0.08$\pm$0.01 & 0.05$\pm$0.01 & 0.02$\pm$0.00 \\
			{\it ERO(p=0.05, style=uniform,$\eta$=0.1)} & -0.00$\pm$0.00 & \red{-0.37$\pm$0.00} & \blue{-0.32$\pm$0.00} & -0.01$\pm$0.01 & -0.06$\pm$0.00 & -0.06$\pm$0.00 & -0.00$\pm$0.00 \\
			{\it ERO(p=0.05, style=gamma,$\eta$=0.1)} & -0.00$\pm$0.00 & \red{-0.40$\pm$0.00} & \blue{-0.26$\pm$0.02} & 0.01$\pm$0.00 & -0.01$\pm$0.00 & 0.01$\pm$0.00 & 0.01$\pm$0.00 \\
			{\it ERO(p=0.05, style=uniform,$\eta$=0.2)} & -0.00$\pm$0.01 & \red{-0.34$\pm$0.00} & \blue{-0.30$\pm$0.01} & -0.00$\pm$0.01 & -0.05$\pm$0.01 & -0.08$\pm$0.01 & -0.00$\pm$0.01 \\
			{\it ERO(p=0.05, style=gamma,$\eta$=0.2)} & -0.00$\pm$0.00 & \red{-0.36$\pm$0.00} & \blue{-0.21$\pm$0.01} & 0.00$\pm$0.00 & -0.02$\pm$0.00 & -0.02$\pm$0.00 & -0.00$\pm$0.00 \\
			{\it ERO(p=0.05, style=uniform,$\eta$=0.3)} & -0.00$\pm$0.00 & \red{-0.28$\pm$0.01} & \blue{-0.25$\pm$0.03} & 0.01$\pm$0.01 & -0.03$\pm$0.01 & -0.05$\pm$0.02 & -0.00$\pm$0.01 \\
			{\it ERO(p=0.05, style=gamma,$\eta$=0.3)} & -0.01$\pm$0.00 & \red{-0.29$\pm$0.00} & \blue{-0.16$\pm$0.02} & -0.00$\pm$0.00 & -0.01$\pm$0.01 & -0.01$\pm$0.00 & -0.00$\pm$0.00 \\
			{\it ERO(p=0.05, style=uniform,$\eta$=0.4)} & -0.00$\pm$0.00 & \red{-0.25$\pm$0.01} & \blue{-0.18$\pm$0.01} & 0.01$\pm$0.01 & -0.00$\pm$0.00 & -0.02$\pm$0.01 & 0.00$\pm$0.00 \\
			{\it ERO(p=0.05, style=gamma,$\eta$=0.4)} & -0.00$\pm$0.01 & \red{-0.24$\pm$0.01} & \blue{-0.11$\pm$0.03} & 0.00$\pm$0.01 & -0.01$\pm$0.00 & -0.01$\pm$0.00 & -0.00$\pm$0.00 \\
			{\it ERO(p=0.05, style=uniform,$\eta$=0.5)} & -0.00$\pm$0.00 & \red{-0.21$\pm$0.00} & \blue{-0.16$\pm$0.00} & 0.01$\pm$0.04 & -0.00$\pm$0.01 & -0.01$\pm$0.01 & -0.00$\pm$0.01 \\
			{\it ERO(p=0.05, style=gamma,$\eta$=0.5)} & -0.01$\pm$0.01 & \red{-0.21$\pm$0.01} & \blue{-0.06$\pm$0.04} & -0.01$\pm$0.01 & -0.01$\pm$0.00 & -0.00$\pm$0.00 & -0.03$\pm$0.03 \\
			{\it ERO(p=0.05, style=uniform,$\eta$=0.6)} & -0.00$\pm$0.00 & \red{-0.17$\pm$0.00} & \blue{-0.14$\pm$0.00} & 0.00$\pm$0.01 & -0.00$\pm$0.00 & -0.01$\pm$0.01 & -0.04$\pm$0.01 \\
			{\it ERO(p=0.05, style=gamma,$\eta$=0.6)} & -0.01$\pm$0.01 & \red{-0.18$\pm$0.01} & \blue{-0.04$\pm$0.00} & -0.01$\pm$0.01 & -0.01$\pm$0.01 & -0.01$\pm$0.01 & -0.01$\pm$0.01 \\
			{\it ERO(p=0.05, style=uniform,$\eta$=0.7)} & -0.00$\pm$0.01 & \red{-0.16$\pm$0.00} & \blue{-0.09$\pm$0.00} & -0.00$\pm$0.01 & 0.00$\pm$0.00 & -0.00$\pm$0.00 & -0.06$\pm$0.00 \\
			{\it ERO(p=0.05, style=gamma,$\eta$=0.7)} & -0.00$\pm$0.01 & \red{-0.16$\pm$0.01} & \blue{-0.05$\pm$0.01} & -0.01$\pm$0.01 & -0.01$\pm$0.01 & 0.00$\pm$0.00 & -0.02$\pm$0.01 \\
			{\it ERO(p=0.05, style=uniform,$\eta$=0.8)} & -0.00$\pm$0.00 & \red{-0.14$\pm$0.00} & \blue{-0.07$\pm$0.00} & 0.00$\pm$0.01 & -0.00$\pm$0.00 & 0.00$\pm$0.00 & \blue{-0.07$\pm$0.00} \\
			{\it ERO(p=0.05, style=gamma,$\eta$=0.8)} & -0.01$\pm$0.01 & \red{-0.14$\pm$0.01} & \blue{-0.04$\pm$0.01} & -0.01$\pm$0.01 & -0.01$\pm$0.01 & 0.00$\pm$0.01 & -0.03$\pm$0.02 \\
			{\it ERO(p=1, style=uniform,$\eta$=0)} & 0.00$\pm$0.00 & \red{-0.46$\pm$0.00} & 0.00$\pm$0.00 & \blue{-0.06$\pm$0.03} & 0.00$\pm$0.00 & 0.00$\pm$0.00 & 0.00$\pm$0.00 \\
			{\it ERO(p=1, style=gamma,$\eta$=0)} & 0.00$\pm$0.00 & \red{-0.46$\pm$0.00} & 0.00$\pm$0.00 & \blue{-0.02$\pm$0.00} & 0.00$\pm$0.00 & 0.00$\pm$0.00 & -0.00$\pm$0.00 \\
			{\it ERO(p=1, style=uniform,$\eta$=0.1)} & -0.00$\pm$0.00 & \red{-0.42$\pm$0.00} & 0.00$\pm$0.00 & -0.03$\pm$0.02 & -0.02$\pm$0.01 & \blue{-0.12$\pm$0.01} & -0.00$\pm$0.00 \\
			{\it ERO(p=1, style=gamma,$\eta$=0.1)} & -0.00$\pm$0.00 & \red{-0.39$\pm$0.00} & 0.00$\pm$0.00 & -0.01$\pm$0.00 & -0.05$\pm$0.02 & \blue{-0.11$\pm$0.01} & -0.00$\pm$0.00 \\
			{\it ERO(p=1, style=uniform,$\eta$=0.2)} & -0.00$\pm$0.00 & \red{-0.35$\pm$0.00} & -0.00$\pm$0.00 & -0.03$\pm$0.01 & -0.03$\pm$0.00 & \blue{-0.09$\pm$0.00} & -0.00$\pm$0.00 \\
			{\it ERO(p=1, style=gamma,$\eta$=0.2)} & -0.00$\pm$0.00 & \red{-0.36$\pm$0.00} & -0.00$\pm$0.00 & -0.00$\pm$0.00 & -0.04$\pm$0.01 & \blue{-0.08$\pm$0.01} & -0.00$\pm$0.00 \\
			{\it ERO(p=1, style=uniform,$\eta$=0.3)} & -0.00$\pm$0.00 & \red{-0.30$\pm$0.00} & 0.00$\pm$0.00 & -0.02$\pm$0.01 & -0.02$\pm$0.00 & \blue{-0.06$\pm$0.00} & -0.00$\pm$0.00 \\
			{\it ERO(p=1, style=gamma,$\eta$=0.3)} & -0.00$\pm$0.00 & \red{-0.31$\pm$0.00} & -0.00$\pm$0.00 & -0.00$\pm$0.00 & -0.03$\pm$0.01 & \blue{-0.05$\pm$0.01} & 0.00$\pm$0.00 \\
			{\it ERO(p=1, style=uniform,$\eta$=0.4)} & -0.00$\pm$0.00 & \red{-0.26$\pm$0.00} & -0.00$\pm$0.00 & -0.01$\pm$0.01 & -0.01$\pm$0.00 & \blue{-0.04$\pm$0.00} & -0.00$\pm$0.00 \\
			{\it ERO(p=1, style=gamma,$\eta$=0.4)} & -0.00$\pm$0.00 & \red{-0.27$\pm$0.00} & 0.00$\pm$0.00 & -0.00$\pm$0.00 & -0.02$\pm$0.01 & \blue{-0.03$\pm$0.01} & 0.00$\pm$0.00 \\
			{\it ERO(p=1, style=uniform,$\eta$=0.5)} & -0.00$\pm$0.00 & \red{-0.22$\pm$0.00} & -0.00$\pm$0.00 & -0.01$\pm$0.01 & -0.01$\pm$0.00 & \blue{-0.03$\pm$0.00} & -0.00$\pm$0.00 \\
			{\it ERO(p=1, style=gamma,$\eta$=0.5)} & -0.00$\pm$0.00 & \red{-0.22$\pm$0.00} & -0.00$\pm$0.00 & 0.00$\pm$0.01 & \blue{-0.01$\pm$0.01} & \blue{-0.01$\pm$0.01} & 0.00$\pm$0.00 \\
			{\it ERO(p=1, style=uniform,$\eta$=0.6)} & -0.00$\pm$0.00 & \red{-0.17$\pm$0.00} & -0.00$\pm$0.00 & -0.01$\pm$0.01 & -0.01$\pm$0.00 & \blue{-0.02$\pm$0.00} & -0.00$\pm$0.00 \\
			{\it ERO(p=1, style=gamma,$\eta$=0.6)} & -0.00$\pm$0.00 & \red{-0.16$\pm$0.00} & -0.00$\pm$0.00 & -0.00$\pm$0.00 & \blue{-0.01$\pm$0.00} & \blue{-0.01$\pm$0.00} & -0.00$\pm$0.00 \\
			{\it ERO(p=1, style=uniform,$\eta$=0.7)} & \blue{-0.01$\pm$0.00} & \red{-0.12$\pm$0.00} & -0.00$\pm$0.00 & -0.00$\pm$0.01 & -0.00$\pm$0.00 & -0.00$\pm$0.00 & -0.00$\pm$0.00 \\
			{\it ERO(p=1, style=gamma,$\eta$=0.7)} & \blue{-0.01$\pm$0.00} & \red{-0.13$\pm$0.00} & 0.00$\pm$0.00 & -0.00$\pm$0.00 & -0.00$\pm$0.00 & -0.00$\pm$0.00 & -0.00$\pm$0.00 \\
			{\it ERO(p=1, style=uniform,$\eta$=0.8)} & -0.00$\pm$0.00 & \red{-0.08$\pm$0.00} & \blue{-0.05$\pm$0.00} & -0.00$\pm$0.01 & -0.00$\pm$0.00 & -0.00$\pm$0.00 & -0.00$\pm$0.00 \\
			{\it ERO(p=1, style=gamma,$\eta$=0.8)} & -0.00$\pm$0.00 & \red{-0.08$\pm$0.00} & \blue{-0.05$\pm$0.00} & -0.00$\pm$0.00 & -0.00$\pm$0.00 & 0.00$\pm$0.00 & -0.00$\pm$0.00 \\
\bottomrule
\end{tabular}}
\end{table*}

\section{Variant and Hyperparameter Selection}
\label{appendix_sec:variant_hyper}
The results reported in the main text are selected within either non-proximal or proximal categories depending on whether they have proximal gradient steps within the architecture. All selections are carried out via the lowest $\mathcal{L}_\text{upset, simple}$ or the lowest $\mathcal{L}_\text{upset, naive}$ or the lowest $\mathcal{L}_\text{upset, ratio}$ depending on our objective. ``--" in the tables in this section means ``not applicable". There are a total of two variants for non-proximal variants and three for proximal variants for score generation, and each can be coupled with one of DIMPA \cite{he2021digrac} and the inception block model (ib) \cite{tong2020digraph} (or others like MagNet \cite{zhang2021magnet}, not tested here) for digraph embedding learning. We fix the learning rate to be 0.01, and vary the the method of pretraining, the coefficient for $\mathcal{L}_\text{upset, margin}$ in training, the coefficient for $\mathcal{L}_\text{upset, ratio}$ in training, 
and the baseline selected for ``proximal baseline" variant. Choices are SyncRank, SpringRank, SerialRank,  BTL, Eig.Cent., PageRank and SVD\_NRS for real-world data sets, and SpringRank, BTL, SerialRank for synthetic data. We consider  these choices to demonstrate that our ``proximal baseline" has the ability to improve over initial guess vectors coming from different types of baselines. See  Sec.~\ref{appendix_sec:improvement_over_baselines} for more details on improvements over baselines. 
\begin{table*}[!ht]
\centering
\caption{GNN selection among GNNRank-N methods for the lowest $\mathcal{L}_\text{upset, simple}$.}
\label{tab:GNN_selection_non_proximal_upset_simple}
\resizebox{0.85\linewidth}{!}{
}
\end{table*}

\end{document}